\theoremstyle{acmplain}
\newtheorem{theorem}{Theorem}[section]
\newaliascnt{lemma}{theorem}
\newaliascnt{assumption}{theorem}
\newaliascnt{proposition}{theorem}
\newaliascnt{corollary}{theorem}
\newaliascnt{claim}{theorem}
\newaliascnt{observation}{theorem}
\newaliascnt{definition}{theorem}
\newaliascnt{fact}{theorem}
\newaliascnt{statement}{theorem}
\newaliascnt{mechanism}{theorem}
\newaliascnt{example}{theorem}
\newaliascnt{remark}{theorem}
\newtheorem{lemma}[lemma]{Lemma}
\newtheorem{Lemma}[lemma]{Lemma}
\newtheorem*{theorem*}{Theorem}
\newtheorem{Remark}[remark]{Remark}
\newtheorem{assumption}[assumption]{Assumption}
\theoremstyle{acmdefinition}
\newtheorem{definition}[definition]{Definition}
\theoremstyle{acmplain}
\newcommand{\argmax}{\operatornamewithlimits{argmax}}
\newcommand\R{\mathbb{R}}
\newcommand\Rev{\textsc{Rev}}
\newcommand\Menu{\textsc{Menu}}
\renewcommand\S{\mathcal{S}}
\newcommand\Loss{\textsc{Loss}}
\newcommand{\F}{\mathcal{F}}
\newcommand{\V}{\mathcal{V}}
\newcommand{\X}{\mathcal{X}}
\newcommand{\x}{\boldsymbol{x}}
\newcommand{\s}{\mathbf{s}}
\newcommand{\Zero}{\mathbf{0}}
\renewcommand{\v}{\boldsymbol{v}}
\newcommand{\p}{\boldsymbol{p}}
\newcommand{\A}{\mathcal{A}}
\newcommand{\Opt}{\textsc{Opt}}
\renewcommand{\paragraph}{%
  \@startsection{paragraph}{4}%
  {\z@}{0.5ex \@plus 0.5ex \@minus .2ex}{-1em}%
  {\normalfont\normalsize\it}%
}
\newcommand{\menunet}{\textsc{MenuNet}\xspace}
\begin{document}
% Title portion. Note the short title for running heads
\title{Automated Mechanism Design via Neural Networks}
\author{
Weiran Shen\\
IIIS, Tsinghua University\\
\texttt{emersonswr@gmail.com}
\and
Pingzhong Tang\\
IIIS, Tsinghua University\\
\texttt{kenshinping@gmail.com}
\and
Song Zuo\\
IIIS, Tsinghua University\\
\texttt{songzuo.z@gmail.com}
}
\date{}

\maketitle

% note that the abstract must come before \maketitle
\begin{abstract}
  Using AI approaches to automatically design mechanisms has been a central research mission at the interface of AI and economics [Conitzer and Sandholm, 2002]. Previous approaches that attempt to design revenue optimal auctions for the multi-dimensional settings fall short in at least one of the three aspects: 1) representation --- search in a space that probably does not even contain the optimal mechanism; 2) exactness --- finding a mechanism that is either not truthful or far from optimal; 3) domain dependence --- need a different design for different environment settings.

To resolve the three difficulties, in this paper, we put forward --- \menunet{} --- a unified neural network based framework that automatically learns to design revenue optimal mechanisms. Our framework consists of a mechanism network that takes an input distribution for training and outputs a mechanism, as well as a buyer network that takes a mechanism as input and output an action. Such a separation in design mitigates the difficulty to impose incentive compatibility constraints on the mechanism, by making it a rational choice of the buyer. As a result, our framework easily overcomes the previously mentioned difficulty in incorporating IC constraints and always returns exactly incentive compatible mechanisms.

We then apply our framework to a number of multi-item revenue optimal design settings, for a few of which the theoretically optimal mechanisms are unknown. We then go on to theoretically prove that the mechanisms found by our framework are indeed optimal.

To the best of our knowledge, we are the first to apply neural networks to discover optimal auction mechanisms with provable optimality.

\end{abstract}

% note: this command has been disabled to remove the ACM copyright block. Sorry...
%\thanks{This work is supported by the National Science Foundation,
%  under grant CNS-0435060, grant CCR-0325197 and grant EN-CS-0329609.}

\section{Introduction}\label{sec:intro}

Designing revenue optimal mechanisms in various settings has been a central
research agenda in economics, ever since the seminal works of
\citet{vickrey1961counterspeculation} and \citet{myerson1981optimal} in single
item auctions. Lately, designing optimal mechanisms for selling multiple items
has also been established as an important research agenda at the interface of
economics and computer sciences
\cite{conitzer2002complexity,hartline2009simple,hart2017approximate,CaiDW12a,CaiDW12b,li2013revenue,yao2014n,sandholm2015automated,yao2016solutions,tang2016optimal,tang2017optimal}
% [conitzer and Sandholm UAI, hartline roughgarden, hart-nisan CDW, CDW, Yao 15, 17, TANG and WANG 16, 17, Sandholm et al OR-15 paper].

Due to diversity in the researchers' backgrounds, there are a number of quite
different angles to study this problem. The standard economics theme aims to
understand the exact optimal mechanisms in various settings. To name a few,
\citet{armstrong1996multiproduct} obtains the revenue optimal mechanisms of selling two items to one
buyer, whose valuations of the two items are perfect positively correlated (a
ray through the origin). \citet{manelli2007multidimensional} obtains partial
characterization of optimal mechanisms, in the form of extremely points in the
mechanism spaces. \citet{pavlov2011optimal} derives optimal mechanisms for two
items when the buyer has symmetric uniform distributions.
\citet{daskalakis2013mechanism} characterizes sufficient and necessary
conditions for a mechanism to optimal and derive optimal mechanisms for two
items for several valuation distributions. \citet{tang2017optimal} obtain the
revenue optimal mechanisms of selling two items, of which the valuations are
perfect negatively correlated. \citet{yao2016solutions} obtains the revenue
optimal mechanisms of selling two additive items to multiple buyers, whose
valuation towards the items are binary and independent.

Another category of research rooted in the AGT community aims to resolve the
difficulties of characterizing optimal mechanisms via the lens of algorithm
design. \citet{CaiDW12a} and \citet{alaei2012bayesian} gives algorithmic characterizations of
the optimal BIC mechanisms on discrete distributions using linear programs.
\citet{hartline2009simple,yao2014n,hart2017approximate}
% [hartline-roughgarden, hart-nisan, Yao-Soda]
find approximately optimal mechanisms in various settings.
\citet{carroll2017robustness} shows that for a certain multi-dimensional
screening problem, the worst-case optimal mechanism is simply to sell each item
separately.

The third category, at the interface of AI and economics, aims to search for the
optimal mechanisms via various AI approaches. \citet{conitzer2002complexity}
model the problem of revenue and welfare maximization as an instance of
constraints satisfaction problem (CSP) through which the optimal mechanism may
be found using various search techniques, despite its general computation
complexity. \citet{sandholm2015automated} model a restricted revenue
maximization problem (within affine maximizing auctions) as a parameter search
problem in a multi-dimensional parameter space, they find several sets of
parameters that yields good empirical revenue. \citet{dutting2019optimal} aims
to learn optimal mechanisms by repeatedly sampling from the distribution. They
obtain mechanisms that are approximately optimal and have low incentive
compatibility regret on average.

One advantage of these computational approaches is that most of them are
constructive so that one can systematically and computationally generate
optimal mechanisms. However, a difficulty for most existing works in computer
science (the second and third categories) is that mechanisms obtained this way
are either not optimal in the exact sense, or not truthful in the exact sense.
As a result, a typical economist may have a hard time to appreciate this type of
results. A more desirable approach would be constructive on one hand and be able
return exact incentive compatible and (hopefully) exact optimal mechanisms on
the other hand.

\subsection{Our methodology}
In this paper, motivated by the above observation, we aim to put forward a
computational approach that can design or assist one to design exact IC and
optimal mechanisms. We train a
neural network that represents the optimal mechanism using the
valuation distributions.
However, unlike the approach in \citet{dutting2019optimal}, we introduce another
neural network that represents buyer's behavior. In particular, this network
takes a mechanism as input, and output an action. Our network structure
resembles that of the generative adversarial nets (GAN) \cite{goodfellow2014generative} but is essentially
different because we do not need to train the buyer's network. This independent
buyer network allows us to easily model the exact IC constraints (which has been
a major difficulty in previous works) in our network and any behavior model of
this form. In contrast, \citet{dutting2019optimal} first propose to hardwire the IC constraints into
the mechanism network, which requires a lot of domain knowledge and the
structure of the networks has to be domain specific. As a result their approach
can only reproduce mechanisms in the domains where the form of the optimal
mechanism is known. To circumvent this difficulty,  they further propose to add
IC as a soft constraint so that the training objective is to minimize a linear
combination of revenue loss and the degree of IC violations. However, this would
produce mechanisms that are not IC.

Another innovation of our framework, \menunet, is that we represent a mechanism as a menu (a
list of (valuation, outcome) tuples) in the single buyer case. According to the
taxation principle \cite{vohra2011mechanism}, by simply letting the buyer do the
selection, we get an IC mechanism. An additional merit of using a menu to
represent a mechanism is that it enables explicit restrictions of the {\em menu
size} of the mechanism, which measures the degree of complexity of a mechanism
\cite{hart2013menu}.

Under the guidance of the solutions from our neural networks, one may be able to
guess the structure of the optimal solution and the prove its optimality.
Although our neural network framework cannot directly help with the optimality
proof, its high accuracy (see \autoref{table:compare}) can greatly reduce the
tremendous efforts that one often needs to guess the optimal solution.

\subsection{Our results}

We then apply our learning-aided mechanism design framework to the domain where
a seller sells two items to one buyer. In particular, we investigate the
following problems.
\begin{itemize}
\item What is the revenue optimal mechanisms when the menu size is restricted to
      a constant? To the best of our knowledge, the optimal mechanism of this
      kind is previously unknown for our setting.
\item What is optimal mechanism for the case where the valuation domain is a
      triangle? The previously studied cases on this domain all focuses on
      rectangle shaped valuation domain (expect for \citet{haghpanah2014multi}).
\item What is the revenue optimal {\em deterministic} mechanism?
\item What is the revenue optimal mechanism when the buyer has combinatorial
      value?
\end{itemize}

Some of the experimental results we obtained is shown in \autoref{table:compare}
with comparison to the exact optimal mechanisms (some of them are previously
known results, while the others are our new findings).

\begin{table}[h!]
  \begin{center}
  \begin{tabular}{| l | l | l | l |}
    \hline
    Distributions & Computed Mech \Rev \footnotemark & Optimal Mech \Rev & Optimality  \\
    \hline
    $U[0, 1]^2$ & $0.5491989$ & $(12 + 2\sqrt2) / 27$ & $\geq 99.9996\%$  \\
    \hline
    $U[0, 1]\times[0, 1.5]$ & $0.6838542$ & $(15 + 2\sqrt3) / 27$ & $\geq 99.9997\%$  \\
    \hline
    $U[0, 1]\times[0, 1.9]$ & $0.7888323$ & $(17.4 + 2\sqrt{3.8}) / 27$ & $\geq 99.9988\%$  \\
    \hline
    $U[0, 1]\times[0, 2]$ & $0.8148131$ & $22 / 27$ & $\geq 99.9997\%$  \\
    \hline
    $U[0, 1]\times[0, 2.5]$ & $0.9435182$ & $1019 / 1080$ & $\geq 99.99996\%$  \\
    \hline
    $U[0, 1]^2$ menu size $\leq 3$ & $0.5462947$ & $59 / 108$ & $\geq 99.9997\%$  \\
    \hline
    $U[0, 1]^2$ menu size $\leq 2$ & $0.5443309$ & $2\sqrt{6} / 9$ & $\geq 99.9999997\%$  \\
    \hline
    $U\{v_1, v_2 \geq 0 | v_1 / 2 + v_2 \leq 1\}$ & $0.5491225$ & $(12 + 2\sqrt2) / 27$ & $\geq 99.9857\%$  \\
    \hline
  \end{tabular}
  \caption{Comparison with optimal mechanisms, where Optimality $= \Rev/
          \Opt\Rev$.}
  \label{table:compare}
  \end{center}
\end{table}

Inspired by these empirical findings, using the techniques by
\citet{daskalakis2013mechanism} and \citet{pavlov2011property}, we then prove
the exact optimal mechanisms for the first two problems. To the best of our
knowledge, this is the first time to find the exact optimal mechanisms in these
domains, so they are of independent interests to the economics society as well.

\begin{theorem*}[Restricted Menu Size]
  The optimal mechanism for an additive buyer, $ \v \sim U[0, 1]^2$, with menu
  size no more than $3$ is to either sell the first item at price $2 / 3$ or
  sell the bundle of two items at price $5 / 6$, yielding revenue $59 / 108$.

  In particular, the optimal mechanism must be asymmetric even if the distribution is symmetric!
\end{theorem*}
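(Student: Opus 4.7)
The plan is to exploit the taxation principle: a menu-size $\leq 3$ IC mechanism for a single additive buyer is specified by the outside option $(\Zero, 0)$ together with at most two further allocation-price pairs $((a_1, a_2), p)$ and $((b_1, b_2), q)$ with $a, b \in [0,1]^2$ and $p, q \geq 0$. At valuation $\v$ the buyer selects the utility-maximizing entry, partitioning $[0,1]^2$ into up to three regions $R_0, R_1, R_2$ whose areas are piecewise-polynomial in the six parameters; the expected revenue equals $p \cdot |R_1| + q \cdot |R_2|$. The overall strategy is to reduce this six-parameter optimization to a short finite list of candidate menu shapes and then optimize the prices analytically within each.

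First I would show that an optimal menu can be taken with both allocations $a, b$ at corners of $[0,1]^2$, specifically in $\{(1,0), (0,1), (1,1)\}$ (the corner $\Zero$ is dominated by the outside option). The argument is a perturbation inside each combinatorial type of the partition: the decision boundaries $a \cdot \v = p$, $b \cdot \v = q$, and $(a-b) \cdot \v = p-q$ are hyperplanes linear in $(a, b, p, q)$, so pushing any allocation coordinate strictly inside $(0,1)$ toward $\{0,1\}$ either weakly enlarges the associated selection region while keeping the other region unchanged (permitting a strict price increase by positivity of the uniform density) or shifts probability mass toward the higher-priced option. Iterating this perturbation across the handful of combinatorial types produces a vertex menu. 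This is the main obstacle, because the revenue function is only piecewise smooth and one has to rule out interior optima on a finite but nontrivial case list.

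With $a, b \in \{(1,0), (0,1), (1,1)\}$, the non-dominated configurations reduce (up to relabeling items) to three: (I) $\{(1,0), (0,1)\}$ (single-item-only), (II) $\{(1,0), (1,1)\}$ (item plus bundle), and (III) a single non-trivial option. Case (III)'s best version is the bundle at price $\sqrt{2/3}$ with revenue $2\sqrt{6}/9$; case (I) is optimized at the symmetric price $p = 1/\sqrt{3}$ with revenue $2\sqrt{3}/9$. For case (II), I would write $|R_1| = (1-p)(q-p)$ and $|R_2| = 1 - q + p - p^2/2$ and set the partial derivatives of $R(p,q) = p|R_1| + q|R_2|$ to zero; eliminating $q$ via the $q$-equation produces the cubic $9p^3 - 6p^2 - 6p + 4 = (3p-2)(3p^2-2) = 0$, whose unique non-degenerate root is $p = 2/3$ with $q = 5/6$, yielding revenue $59/108$. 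Since $59/108 > 2\sqrt{6}/9 > 2\sqrt{3}/9$, case (II) dominates, which identifies the claimed optimum.

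For the asymmetry claim, the winning menu $\{((1,0), 2/3), ((1,1), 5/6)\}$ explicitly favors item $1$, and any menu of size $\leq 3$ invariant under the swap $v_1 \leftrightarrow v_2$ must belong either to case (I) (a pair of single-item offers at a common price) or to case (III) with a symmetric allocation of the form $(a,a)$ (whose best instance, by the step-two reduction, is the bundle at the Myerson price $\sqrt{2/3}$). Both alternatives were just shown to yield strictly less than $59/108$, so every revenue-optimal menu-size-$3$ mechanism for $U[0,1]^2$ must break the natural symmetry of the distribution.
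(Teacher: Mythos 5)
Your parametrization via the taxation principle, your Case (II) computation (the areas $(1-p)(q-p)$ and $1-q+p-p^2/2$, the cubic $(3p-2)(3p^2-2)$, and the optimum $p=2/3$, $q=5/6$, $\Rev=59/108$), and your symmetric-menu comparison all check out and match the paper's endpoints. The gap is in your first and crucial step: the claimed reduction to corner allocations $a,b\in\{(1,0),(0,1),(1,1)\}$. The paper does \emph{not} establish this. It only shows (i) one menu can be taken to be the full bundle $(1,1)$ (by a domination argument like yours), and (ii) by Pavlov's Proposition~2, the other allocation can be taken with \emph{one} coordinate equal to $1$, i.e.\ of the form $(1,\delta)$ with $\delta\in[0,1]$ still free. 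The remaining coordinate $\delta$ is then eliminated by explicit joint optimization over $(\delta,p,q)$, split into three cases according to whether $p$ and $q$ exceed $1$; the conclusion $\delta=0$ falls out of the algebra, not out of a monotonicity argument.

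Your perturbation argument for pushing $\delta$ to $\{0,1\}$ is in fact false as a local statement. With menus $[(1,1),p]$, $[(1,\delta),q]$, $[(0,0),0]$, increasing $\delta$ enlarges $S_B$ against $S_Z$ (gaining revenue $q$ per unit mass) but simultaneously enlarges $S_B$ against $S_A$ (losing $p-q>0$ per unit mass), so the effect is not one-signed. Concretely, writing $t=1/(1-\delta)$, the revenue for fixed prices is a downward parabola in $t$ maximized at $t^*=(2p+q-2)/(2(p-q))$; for $p=0.9$, $q=0.8$ this gives $\delta=2/3$, an interior lottery that strictly beats both $\delta=0$ and $\delta=1$ at those prices. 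So there is no valid pointwise push-to-corner move, and establishing that the \emph{global} optimum nonetheless sits at $\delta=0$ requires the full optimization over $\delta$ that the paper carries out (or an appeal to a result strictly stronger than Pavlov's). A secondary omission: even granting corners, your Case (II) area formulas presuppose $p\le q\le 1$, so the regimes $q>1$ and $p>1$ (the paper's Cases 2 and 3, bounded by $14/27$ and $1/2$ respectively) still need to be ruled out.
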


\begin{theorem*}[Uniform Distribution on a Triangle\footnote{In the recent versions since 2020,
  \citet{dutting2020optimal} followed our methodology to discover and prove the
  optimal mechanisms for the uniform distribution on a shifted and scaled triangle, i.e., $\{(v_1, v_2) | v_1 / c + v_2 \leq 2, v_1 \geq 0, v_2 \geq 1\}$.}]
  The optimal mechanism for an additive buyer with value uniformly distributed
  in $\{(v_1, v_2) | v_1 / c + v_2 \leq 1, v_1, v_2 \geq 0\}$ (hence a
  correlated distribution) is as follows:
  \begin{itemize}
    \item if $c \in [1, 4/3]$, two menu items: $[(0, 0), 0]$ and $[(1, 1),
          \sqrt{c / 3}]$;
    \item if $c > 4 / 3$, three menu items: $[(0, 0), 0]$, $[(1, 1), 2c / 3 +
          \sqrt{c(c - 1)} / 3]$, and $[(1 / c, 1), 2 / 3]$.
  \end{itemize}
\end{theorem*}

\footnotetext{The computed revenue is NOT directly given by the loss of our network. Instead, we ignore the buyer network and compute the expected revenue according only to the menu given by our network.}

\section{Preliminaries}\label{sec:prelim}

  In this paper, we consider the automated mechanism design problem for the
  single-buyer multi-dimensional setting. In this section, we introduce the
  basic notions for optimal multidimensional mechanism design problem.

  \paragraph{Environment}
  The seller has $m$ heterogeneous items for sale, and the buyer has different
  private values for receiving different bundles of the items. An {\em
  allocation} of the items is specified by a vector $\x \in \X \subseteq
  [0, 1]^m$, where $x_i$ is the probability of allocating the $i$-th item to the
  buyer. An allocation $\x$ is called a {\em deterministic allocation}, if $\x
  \in \{0, 1\}^m$; otherwise a {\em randomized allocation} or a {\em lottery
  allocation}.

  A possible {\em outcome} of the mechanism consists of a valid allocation
  vector $\x \in \X$ and a monetary transfer amount $p \in \R_+$, called {\em
  payment}, from the buyer to the seller.

  With the standard {\em quasi-linear utility} assumption, the valuation
  function $v : \X \mapsto \R_+$ describes the private preference of the
  buyer, i.e., an outcome $\langle \x, p \rangle$ is (weakly) preferred than
  another outcome $\langle \x', p' \rangle$, if and only if:
  \begin{align*}
    u(\x, p; v) := v(\x) - p \geq v(\x') - p' = u(\x', p'; v).
  \end{align*}
  In other words, the outcome with the highest utility is most preferred by the
  buyer.

  \paragraph{Mechanism}
  A na\"ive mechanism (without applying the revelation principle) is
  defined by a set of {\em actions} and a mapping from the set of actions to the
  set of outcomes. Note that according to the taxation principle
  \cite{vohra2011mechanism}, simply letting the buyer do the selection, we get
  an incentive compatible mechanism. Formally,

  \begin{definition}[Na\"ive Mechanism]\label{def:naive}
    A na\"ive mechanism consists of an action set $\A$ and an associated mapping
    from any action to a possible outcome, i.e., $\langle \x, p \rangle : \A
    \mapsto \X \times \R_+$.

    In particular, there exists a special action $\bot$ meaning ``exiting the
    mechanism'' such that
    \begin{align}\label{eq:exit}\tag{\textsc{Exit}}
      \x(\bot) = \Zero, p(\bot) = 0.
    \end{align}
  \end{definition}

  In such a na\"ive mechanism, a strategy of the buyer is then a mapping from
  the set of private valuation functions to the action set, i.e., $s : \V
  \mapsto \A$. Furthermore, if the buyer is {\em rational}, then her
  strategy must maximize her utility:
  \begin{align}\label{eq:rational}\tag{\textsc{Rational}}
    s(v) \in \argmax_{s' \in \S} u(\x(s'(v)), p(s'(v)); v).
  \end{align}

  The corresponding outcomes of the actions are also known as {\em menu items}.
  Throughout this paper, we use $[\x, p]$ to denote a specific menu item, e.g., the
  zero menu item $[\Zero, 0] = [(0, \ldots, 0), 0]$ is the corresponding menu item of
  the exiting action $\bot$. Note that the na\"ive mechanism with the menu
  presentation is a very general model of the mechanism design problem. In
  particular, even when the buyer is not fully rational, as long as a buyer
  behavior is available, the mechanism designer is still able to design the menus to
  maximize his objective assuming that the buyer responses according
  to the given behavior model. The robustness of na\"ive mechanisms is indeed
  critical to the flexibility and generality of our methodology.

  \paragraph{Direct Mechanism}
  With the above definition of na\"ive mechanisms, it is hard to characterize
  all the mechanisms with certain properties, because the design of the action
  set, at first glance, could be arbitrary. One critical step in the mechanism
  design theory is to applying the celebrating revelation principle
  \cite[p.224]{nisan2007algorithmic} to restrict the set of na\"ive mechanisms
  to a considerably smaller set of mechanisms --- the {\em direct mechanisms}.
  In a direct mechanism, the action set is restricted to be identical to the set
  of valuation functions and the identity mapping also is required to be an
  optimal strategy for any rational buyer. Formally,
  \begin{definition}[Direct Mechanism]\label{def:direct}
    A direct mechanism fixes the action set $\A = \V$ and remains to specify the
    mapping from $\V$ to the set of possible outcomes.

    In addition, the identity mapping must be a utility-maximizing strategy for
    any rational buyer, which can be equivalently stated as the following {\em
    incentive compatible} \eqref{eq:ic} and {\em individually rational}
    \eqref{eq:ir} constraints:
    \begin{gather}
      v \in \argmax_{v' \in \V} u(\x(v'), p(v'); v),
        \label{eq:ic}\tag{\textsc{IC}}  \\
      u(\x(v), p(v); v) \geq 0.
        \label{eq:ir}\tag{\textsc{IR}}
    \end{gather}
  \end{definition}

  In fact, the constraints \eqref{eq:ic} and \eqref{eq:ir} are deduced from the
  constraints \eqref{eq:rational} and \eqref{eq:exit}.

  \paragraph{The Designer's Goal}
  The goal of the mechanism designer is to maximize the expectation of his
  objective $r : \X \times \R_+ \mapsto \R$, where the expectation is taken
  over his prior knowledge about the buyer's private valuation function, i.e.,
  $v \sim \F$.

  We emphasize that our methodology is not restricted to any specific objective.
  However, in this paper, we would focus on the setting with the seller's
  revenue as the objective:
  \begin{align}\label{eq:obj}\tag{\textsc{Objective}}
    r(\x, p) = p.
  \end{align}
  Because revenue-optimal mechanism design in multi-dimensional environment is a
  both challenging and widely studied problem. Hence applying our method in such
  a setting allows us to verify that (i) whether it can find the optimal or
  nearly optimal solution, and (ii) whether it can provide a simpler approach to
  a hard problem.

  \paragraph{Assumptions}
  In most sections of this paper, we will make to the following two assumptions
  (\autoref{assump:additive} and \autoref{assump:independent}). As we just
  stated, we would first verify that our method can be used to recover the
  optimal solutions to some known problems and little exact optimal solution is
  actually discovered without these two assumptions.

  \begin{assumption}[Additive Valuation Functions]\label{assump:additive}
    The buyer's valuation function $v$ is additive, i.e., $v$ can be decomposed
    as follows:
    \begin{align*}
      v(\x) = \sum_{i \in [m]} v_i x_i,
    \end{align*}
    where $v_i \in \R_+$.
  \end{assumption}

  With the additive valuation assumption, we refer each $v_i$ as the value of
  the $i$-th item. Moreover, we can make the following independent value
  assumption in addition.

  \begin{assumption}[Independent Values]\label{assump:independent}
    The prior distribution $\F$ is independent in each dimension and can be
    decomposed as $\F = F_1 \times \cdots \times F_m$, where each $v_i$ is
    independently drawn from $F_i$, i.e., $v_i \sim F_i$.
  \end{assumption}

  In the meanwhile, to show that our method is not limited to these assumptions,
  in \autoref{sec:exp}, we show how it can be applied to settings without these
  assumptions. In particular, with the help of the characterization results by
  \citet{daskalakis2013mechanism}, we are able to verify the optimality of the
  solution to an instance with correlated value distribution (while still with
  additive valuation functions).

\section{Problem Analysis}\label{sec:analysis}

  Although the revelation principle is widely adopted by the theoretical
  analysis of mechanism design problems to efficiently restrict the design
  spaces, we decided not to follow this approach when applying neural
  networks to solve such problems.

  The main difficulty of directly following the traditional revelation
  principle based approach is two-fold:
  \begin{itemize}
    \item It is unclear that what network structure can directly encode the
          incentive compatible \eqref{eq:ic} and individually rational
          \eqref{eq:ir} constraints;
    \item Some of the characterization results for additive valuation
          setting\footnote{Such as Myerson's virtual value for
          single-dimension and Rochet's increasing, convex and Lipschitz-1
          buyer utility function for multi-dimension \cite{dutting2019optimal}.}
          can be cast to certain network structures, but such structures are
          restricted (to additive valuation assumption) and heavily rely on the
          domain knowledge of the specific mechanism design problem.
  \end{itemize}

  In fact, the above difficulties also limit the generality of the methods
  built on these elegant but specific characterizations. For example, there
  might be some fundamental challenges while generalizing such approaches
  to the settings where the buyer is risk-averse (risk-seeking) or has
  partial (or bounded) rationality, etc. Furthermore, in many real applications,
  the buyer behavior models may come from real data instead of pure theoretical
  assumptions.

  To circumvent these difficulties and ensure the highest extendability, in
  this paper, we build up our method from the most basic {\em na\"ive
  mechanisms} --- simply let the buyer choose her favorite option --- which
  is even more close to the first principles of how people make decisions.
  Interestingly, via this approach, our method will automatically produce
  an exactly incentive compatible and individually rational mechanism. To the
  best of our knowledge, this is the first neural network based approach that
  outputs an both exactly incentive compatible and exactly individually rational
  mechanism under multi-dimensional settings.

  \subsection{Revisiting the Na\"ive Mechanism}\label{ssec:naive}

    We then briefly explain show how the na\"ive mechanism helps us to
    formulate a neural network based approach for mechanism design.

    Intuitively, the na\"ive mechanism in our context simply provides the
    buyer various menu items, i.e., allocations associated with different prices, and
    lets her choose the most prefered one. In this case, once a buyer utility
    function is specified (either by assumption or learnt from data), the
    choice of the buyer is simply an $\argmax$ of the utility function. As
    long as the utility function could be encoded via neural network, which
    is a mild assumption, the buyer's behavior model can encoded as a
    neural network with an additional $\argmax$ layer.\footnote{Even if the
    buyer utility function is not available, such a gadget could be replaced by
    any buyer behavior model (either given or learnt from data), which is
    encoded as a neural network.}

    \paragraph{High-level sketch of the network structure}
    For now, we can think the encoded mechanism as a black-box that outputs
    a set of allocation-payment pairs (see \autoref{sfig:naive}). These
    pairs then are feeded into many ``buyer networks'', each with different
    private valuation functions (hence different choices). Finally, the
    ``buyer networks'' output their choices and the choices are used to
    evaluate the expected objective of the mechanism designer, where the
    choices are weighted according to the probabilities of the corresponding
    private valuation functions and the training loss is simply the negative of
    the expected objective.

    \begin{figure}
      \subfigure[Na\"ive mechanism structure]
        {\includegraphics[height=0.23\textheight]{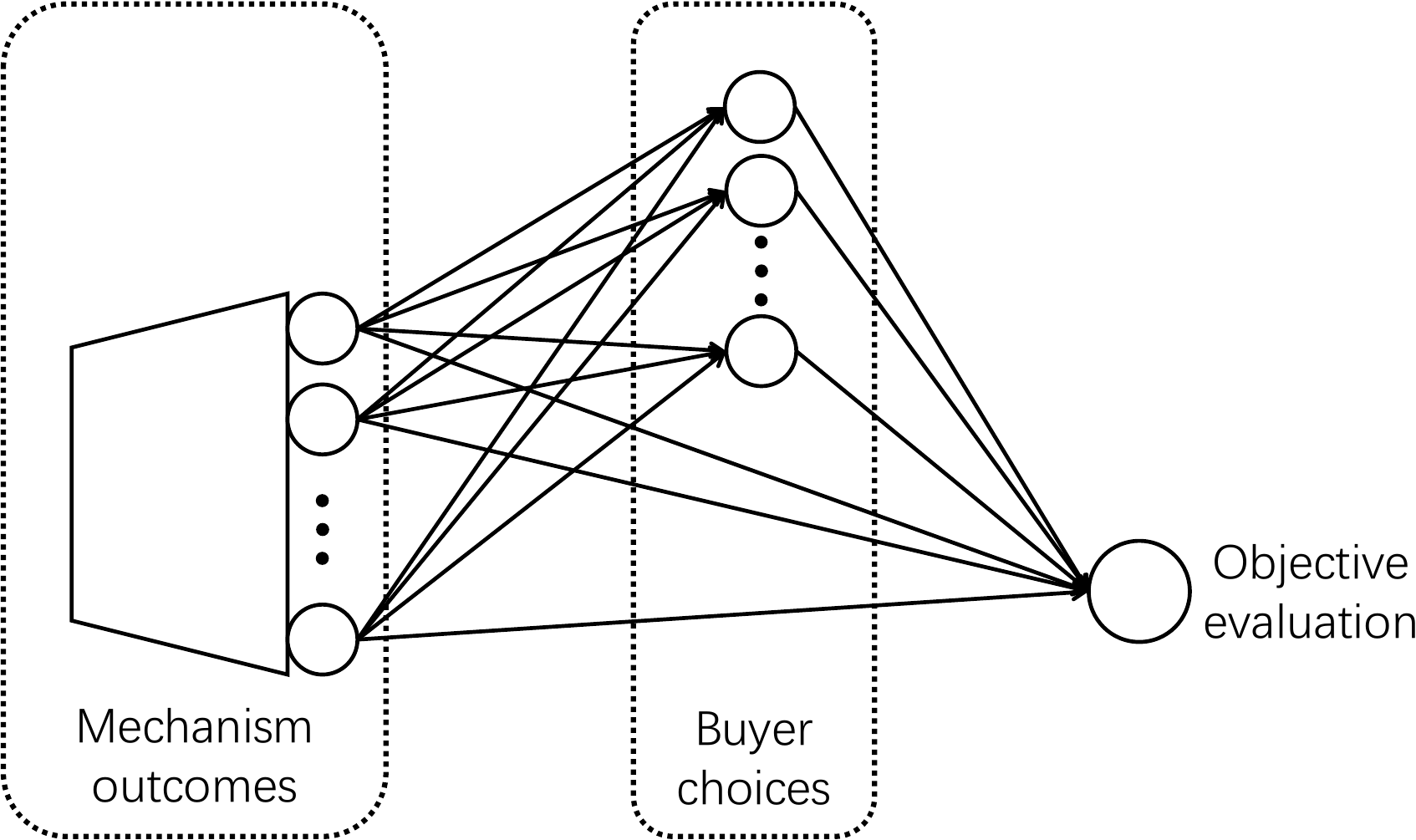}
        \label{sfig:naive}} %
      \hfill %
      \subfigure[Direct mechanism structure]
        {\includegraphics[height=0.23\textheight]{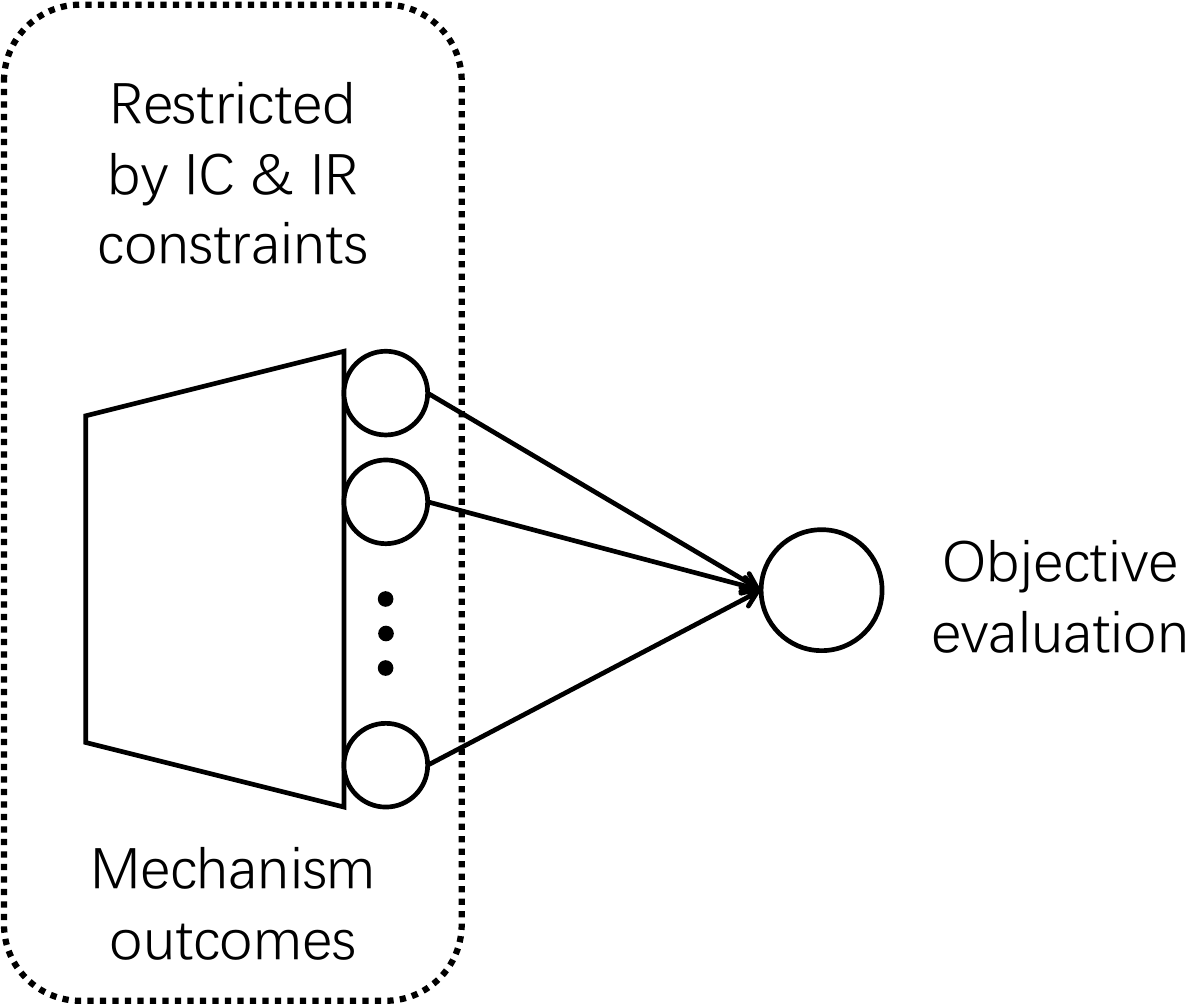}
        \label{sfig:direct}}
      \caption{A high-level abstraction of the neural networks.}
      \label{fig:hlstrc}
    \end{figure}

    One key advantage of formulating the network as a na\"ive mechanism
    rather than a direct mechanism is that no additional constraints (such
    as \ref{eq:ic} and \ref{eq:ir}) are required for the former. In fact,
    the difficulty of optimizing the direct mechanism network (see
    \autoref{sfig:direct}) is that the violations of \ref{eq:ic} or
    \ref{eq:ic} constraints are not directly reflected in the designer's
    objective. Hence the standard optimization methods for neural networks
    do not directly apply. In contrast, in the na\"ive mechanism network,
    the effect of any mechanism outcome mutations on the buyer preferences
    is reflected in the designer's objective via the ``buyer networks''.
    Such properties facilitate the optimization in standard training methods of
    neural networks.

  % \subsection{Buyer Behavior}\label{ssec:buyer}
  %
  %   [including all the intuitive idea of the network]

\section{Network Structure of \menunet}\label{sec:network}

  Our \menunet structure contains two networks: the mechanism network and the buyer network. Since the networks represent a na\"ive mechanism, the output of the mechanism network is a set of choices along with different prices (or menu items) and the buyer network takes the set of menu items as input and outputs its choice. The overall network structure is shown in \autoref{fig:overall_net}.

  \begin{figure}
    \centering
  	\includegraphics[width=0.65\textwidth]{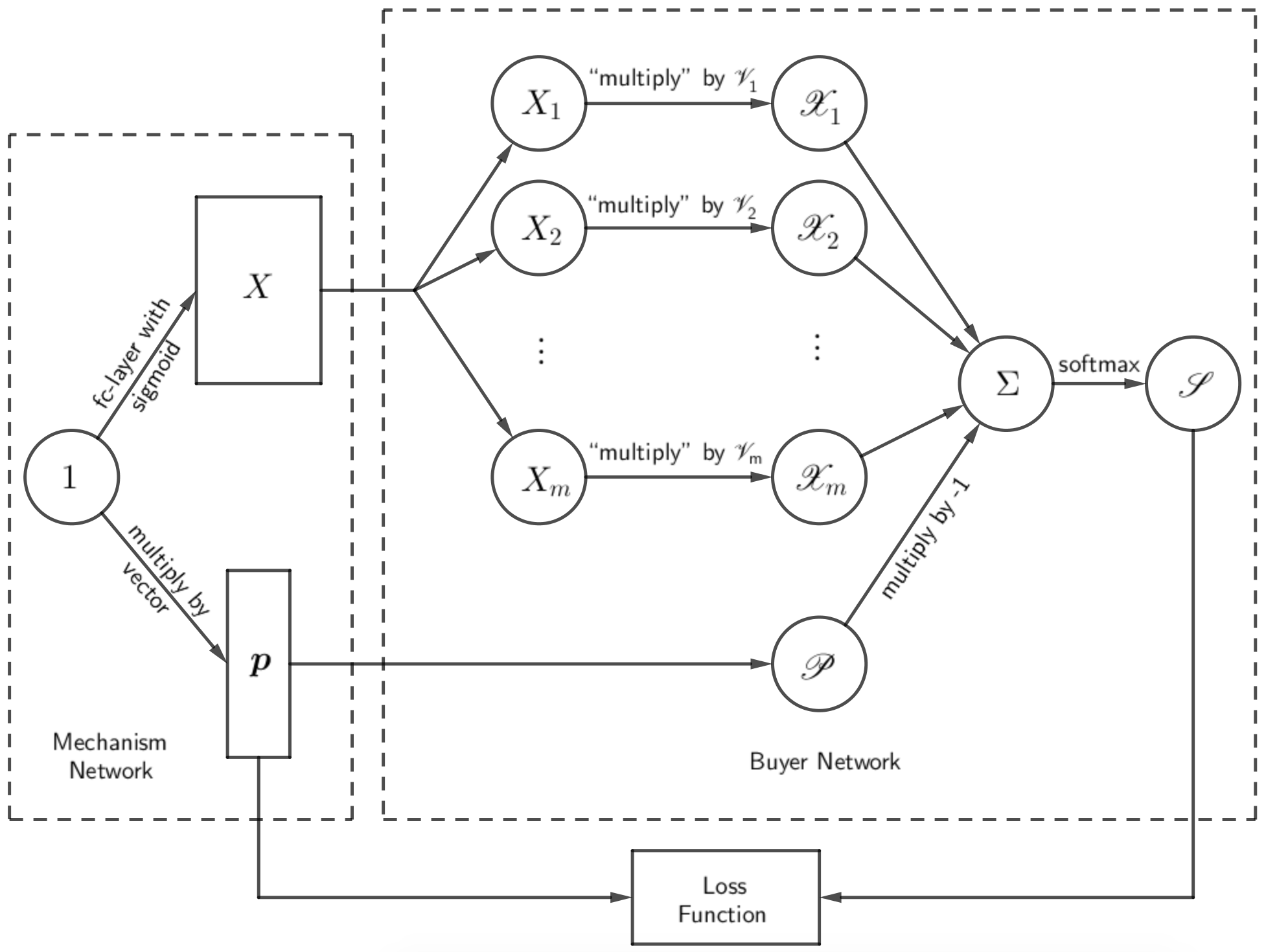}
  	\caption{\menunet structure, in which the buyer network corresponds to a rational buyer maximizing its utility. The illustrated example is for quasi-linear utilities. In other cases, the buyer network can be constructed according to his utility function, or other networks trained from interaction data.}
  	\label{fig:overall_net}
  \end{figure}

  \subsection{Mechanism Network}\label{ssec:mnet}
  In most applications, a neural network usually takes a possible input $x$ and then outputs a possible output $y$. However, our mechanism network is different from most neural networks in the sense that its output is a set of menu items, which already represents the entire mechanism. Therefore, our mechanism network does not actually need to take an input to give an output.
  %Another way to understand this difference is that, instead of fitting a function $y=f(x)$, our network ``generates'' a mechanism.

  However, in order to fit in with most neural network frameworks, we use a one dimensional constant 1 as the input of our mechanism network. The output of the network consists of two parts. The first part is an allocation matrix $X$ of $m$ rows and $k$ columns, where $m$ is the number of items and $k$ is the number of menu items. Each column of the allocation matrix contains the allocation of all $m$ item. The second part is a payment vector $\p$ of length $k$, representing $k$ different prices for the $k$ menu items. The last column of the allocation matrix and the last element of the payment vector is always set to be 0. This encodes the ``exit'' choice of the buyer and ensures that the buyer can always choose this menu item to guarantee individual rationality.

  The structure of the mechanism network is simple enough. The constant input 1 goes through a 1 fully connected layer to form each row $X_i$ (except the last column, which is always 0) of the allocation matrix. We choose the sigmoid function as the activation function since the allocation of each item is always inside the interval $[0, 1]$. The payment vector is even simpler. Each element $p_i$ of the payment vector is formed by multiplying the input constant by a scalar parameter. Therefore, the training of our network is very fast, since the network structure is very simple.

  \subsection{Buyer Network}\label{ssec:bnet}

  The buyer network is a function that maps a mechanism to the buyer's strategy $\s(v)$ (a distribution over all possible menu items) for each value profile $v=(v_1, v_2, \dots, v_m)$, where each $v_i$ is the value of getting the $i$-th item.  The output of the mechanism network (the allocation matrix $X$ and the payment vector $\p$) is taken as the input of the buyer network. To define the output of the buyer network, suppose that each $v_i$ is bounded and $0\le v_i\le \bar{v_i}$. We discretize the interval $[0, \bar{v_i}]$ to $d_i$ discrete values. Let $V_i$ be the set of possible discrete values of $v_i$ and define $V=\prod_{i\in [m]}V_i$.

  The output of the buyer network is a $m+1$ dimensional tensor, with the first $m$ dimension corresponding the buyer's $m$ dimensional value, and the last dimension representing the probability of choosing each menu item. Therefore, the $i$-th ($i\le m$) dimension of the tensor has length $d_i$ and the last dimension has length $k$.

  Although here we use the same notation as in \autoref{assump:additive}, this notation does not lose generality since we do not make any assumption about the buyer's valuation of obtaining multiple items or only a fraction of an item. It is also worth mentioning that the buyer's utility function is not necessary to build the buyer network, since the network only outputs buyer's strategy, which may not even be consistent with any utility function.

  The buyer network can be any type of network that has the same format of input and output as described above. When we do not know the buyer's exact utility function but have plenty of interaction data (e.g., the sponsored search setting), we can train the buyer network with the the interaction data.

  When the buyer's utility function is known, we can manually design the buyer network structure so that the network outputs the buyer's strategy more accurately. For example, when \autoref{assump:additive} and \autoref{assump:independent} holds, we know that the buyer always choose the menu item that maximizes his additive valuation with probability 1. We can construct $m$ tensors $\mathscr{V}_1, \mathscr{V}_2,\dots,\mathscr{V}_m$,  with size $d_1\times d_2\times \cdots \times d_m$. In $\mathscr{V}_i$, an element's value is only determined by its $i$-th dimensional index in the tensor, and it equals the $j$-th discretized value of the interval $[0, \bar{v_i}]$, if its $i$-th dimensional index is $j$. Recall that the $i$-th row of the allocation matrix $X_i$ represents different allocations of the $i$-th item in different menu items. We then multiply the $i$-th tensor with the $X_i$ to get an $m+1$ dimensional tensor $\mathscr{X}_i$ with size $d_1\times d_2\times \cdots \times d_m\times k$.

  We also construct a payment tensor $\mathscr{P}$ with size $d_1\times d_2\times \cdots \times d_m\times k$, where an element equals to the $p_i$ if its index for the last dimension is $j$.

  Finally, we compute the utility tensor $\mathscr{U}$ by
  \begin{gather*}
  \mathscr{U}=\left(\sum_{i \in [m]}\mathscr{X}_i\right) - \mathscr{P}.
  \end{gather*}
  And then apply the softmax function to the last dimension of the utility tensor $\mathscr{U}$ to produce the output $\mathscr{S}$, which is an aggregation of $\s(v), \forall v\in V$. One can easily verify that for each value profile, the menu with the largest utility has the highest probability of being chosen. Of course, we also multiply the utility tensor by a large constant to make the probability of the best menu item close enough to 1.

  \subsection{Loss Function}\label{ssec:loss}

  The loss function can be any function specified according the  mechanism designer's objective. However, in this paper, we mainly focus on how to optimize the revenue of the mechanism and set the loss function to be the negative revenue.

  Recall that the output of the buyer network is the buyer's strategy $\s(v)$ for each value profile $v$. Then the loss function of the networks is
  \begin{gather*}
  \Loss=-\Rev=-\sum_{v\in V}\mathrm{Pr}\left[v\right]\p^T\s(v)
  \end{gather*}
  where $\mathrm{Pr}\left[v\right]$ is the probability that $v$ appears, which can be easily computed from the joint value distribution $\F$.

  Note that in the above loss function, we do not make any assumption about the probability distribution $\mathrm{Pr}[v]$. Our networks are able to handle any joint distribution, including correlated ones.

  % {\color{red} need another section to describe what our framework can do and how to do it}

%\input{discussion.tex}
\section{Experiments and Analysis}\label{sec:exp}
In this section, we first list some results of our neural networks in Section \ref{ssec:exp}. Inspired by these results, we are able to prove the closed-form optimal mechanisms in some cases where the exact optimal solutions are previously unknown: i) the setting with correlated triangle distribution (\autoref{ssec:tri}) and ii) the setting with uniform square distribution but restricted menu size (\autoref{ssec:limitms}). The theoretical analysis and proofs of our newly discovered optimal mechanisms are presented in \autoref{ssec:thm}.

To the best of our knowledge, we are the first to discover exact optimal mechanisms under the help with the neural network based approach. Although one still need to tolerate the complexity of the theoretical proof (mostly on constructing the matching dual solution), the neural network can greatly help on guessing the structure of the optimal primal solution. We believe the methodology  is of its own interests. As one followup, in a recent version, \citet{dutting2020optimal} followed this methodology and discovered the optimal mechanisms for some different correlated triangle distributions.

\subsection{Experiment results}\label{ssec:exp}
  \subsubsection{Uniform $[0, c] \times [0, 1]$}\label{ssec:unic1}
  The optimal mechanism for this setting is already known \cite{thirumulanathan2016optimal}. We draw both the optimal mechanism and our experiments results together in Figure \ref{fig:empunic1}. The color blocks represents the mechanism given by our network, where each color corresponds to a different menu item. The dashed line represents the optimal mechanism (they are NOT drawn according to the color blocks). The two mechanisms are almost identical except for the slight difference in Figure \ref{sfig:empc=1.9}.
    \begin{figure}[t]

      \subfigure[$c = 1.5$]
        {\includegraphics[height=0.135\textheight]{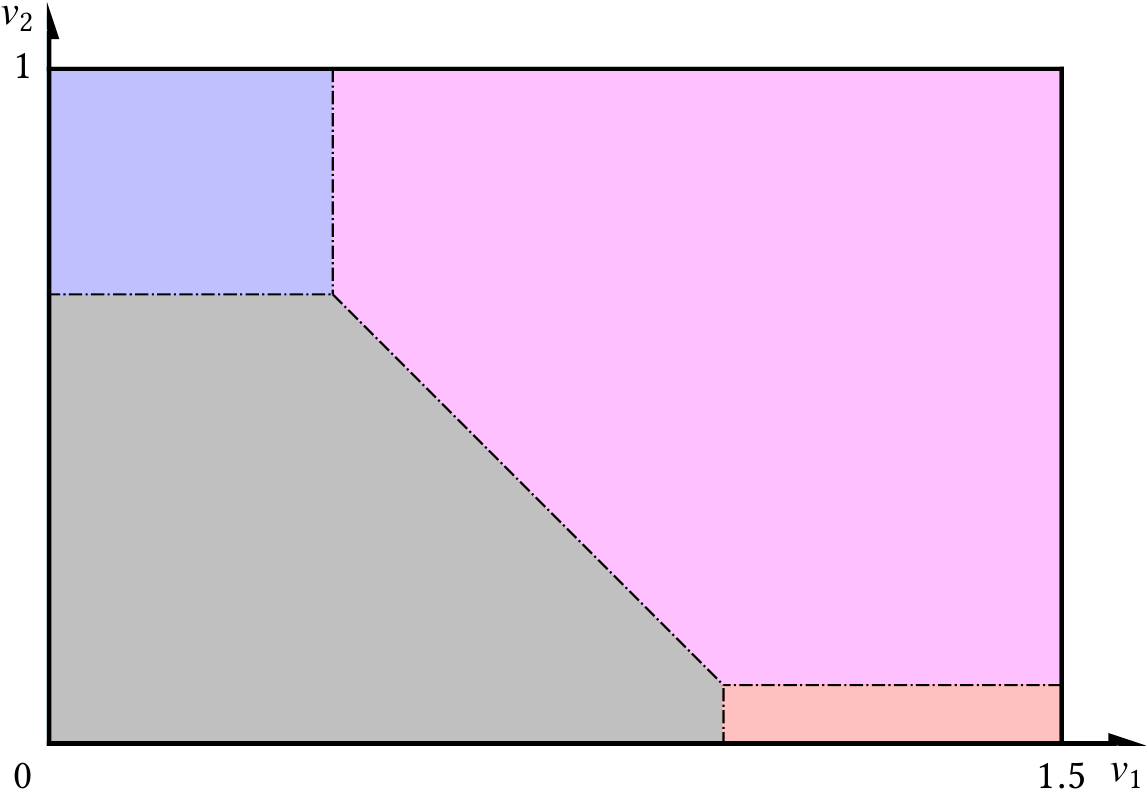}
        \label{sfig:empc=1.5}}
      \subfigure[$c = 2.5$]
        {\includegraphics[height=0.135\textheight]{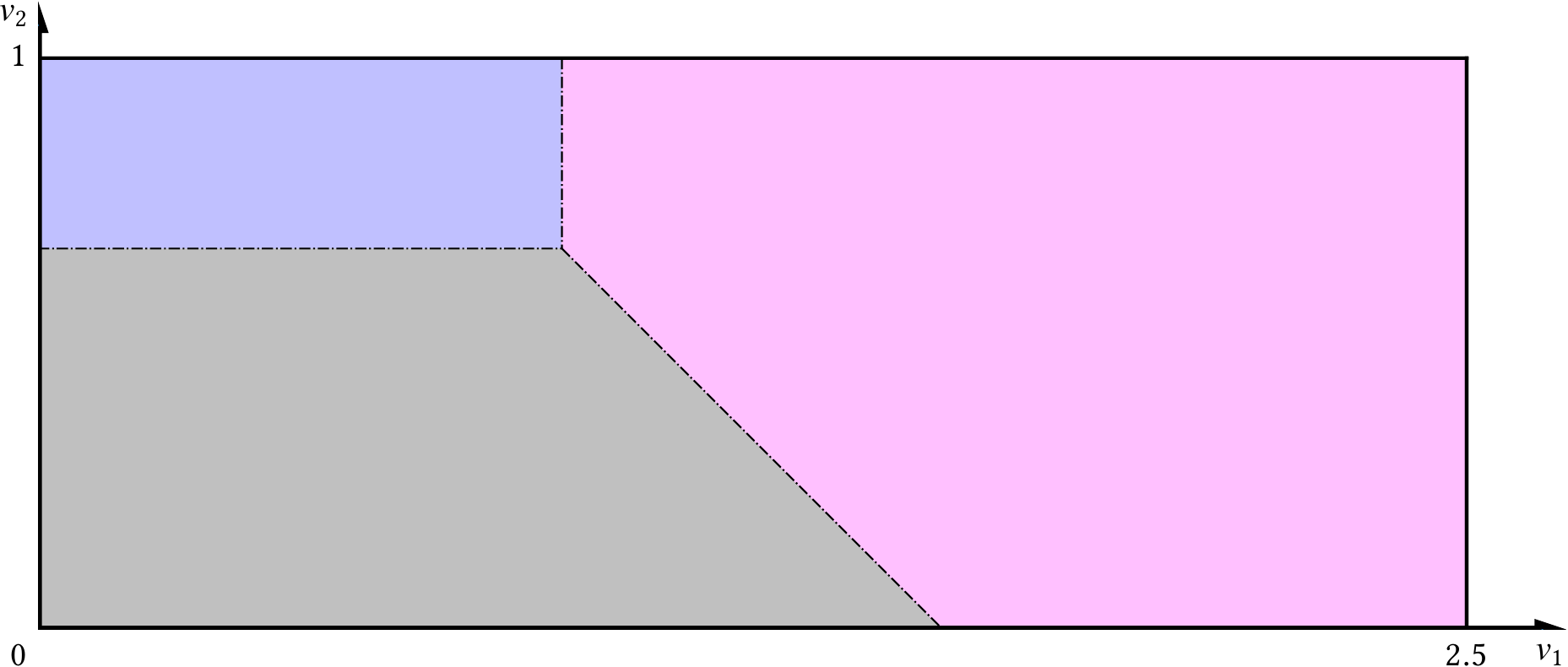}
        \label{sfig:empc=2.5}}
      \subfigure[$c = 1.9$]
        {\includegraphics[height=0.135\textheight]{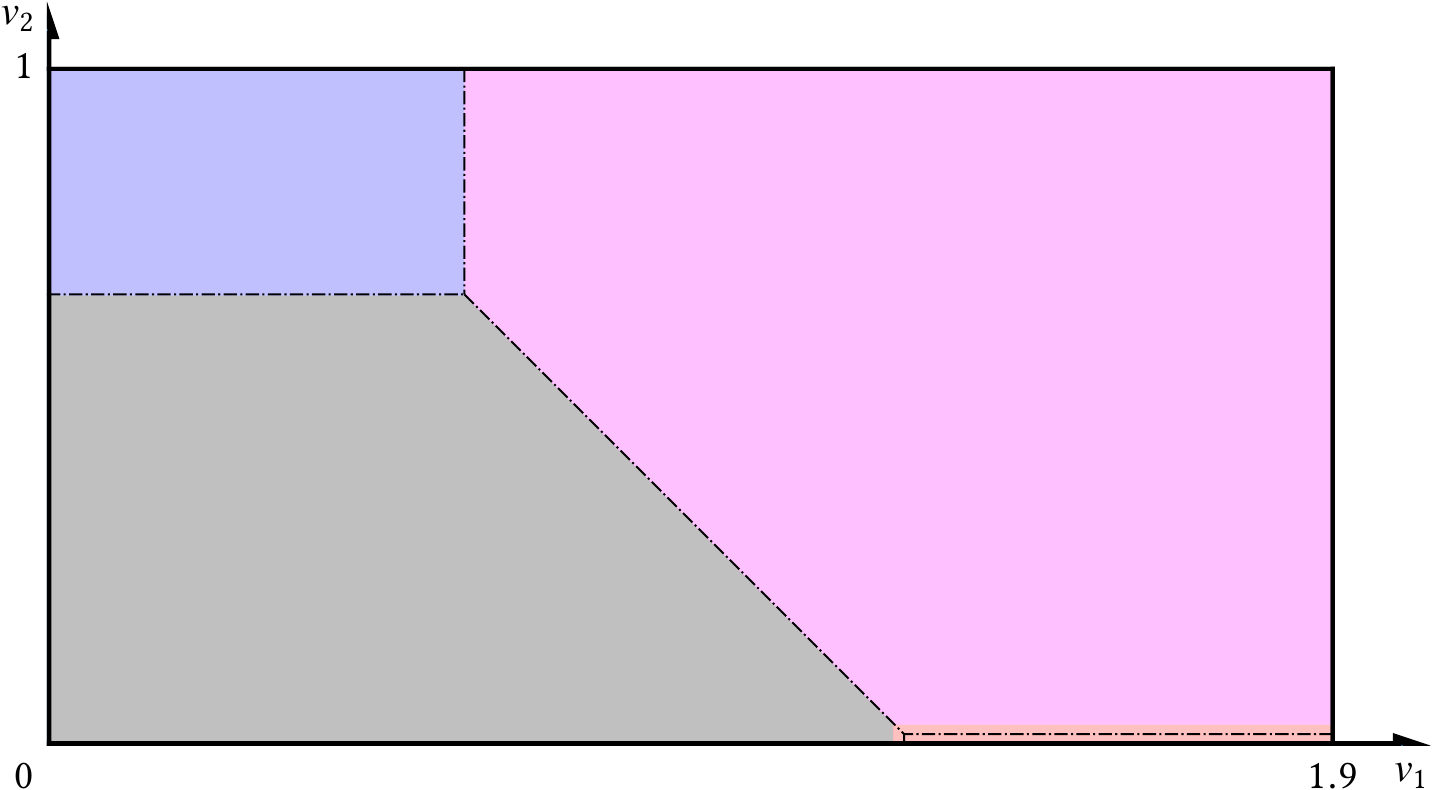}
        \label{sfig:empc=1.9}} %
      \hfill %
      \subfigure[$c = 2$]
        {\includegraphics[height=0.135\textheight]{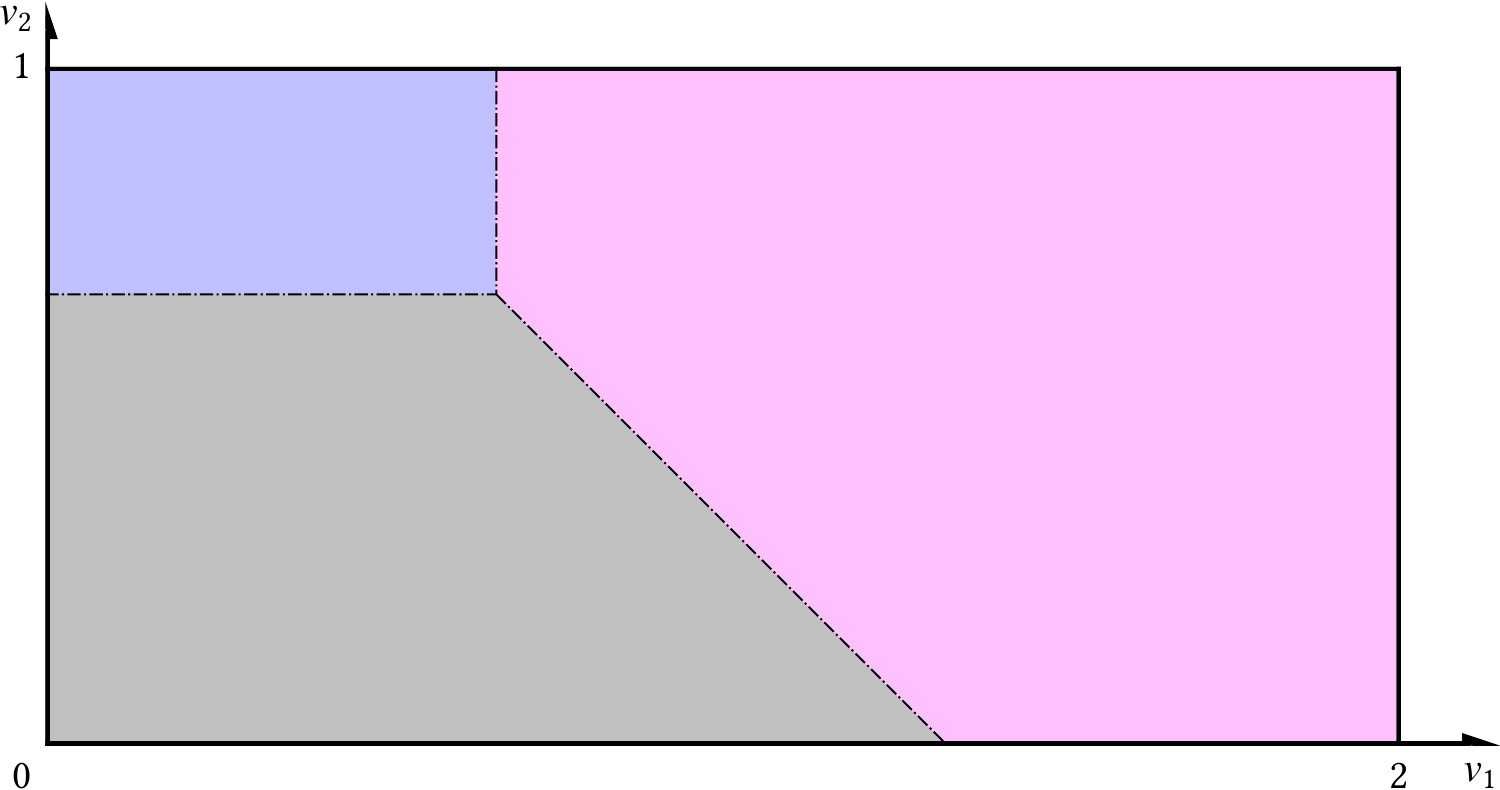}
        \label{sfig:empc=2}} %
      \caption{Comparison between computed solutions and optimal solutions.}
      \label{fig:empunic1}
    \end{figure}

  \subsubsection{Correlated Distribution: Uniform Triangle}\label{ssec:tri}
  Suppose that the buyer's value $v=(v_1,v_2)$ is uniformly distributed among the triangle described by $\frac{v_1}{c}+v_2\le 1, v_1\ge  0, v_2\ge 0$, where $c\ge1$. The color blocks in Figure \ref{fig:empunitric2} show the mechanisms given by our network. Note that in our framework, the joint value distribution is only used to compute the objective function. So our framework can handle arbitrary value distributions.
  \begin{figure}[t]
    \phantom{1} \hfill
  	\subfigure[$c = 1.25$]
  	{\includegraphics[height=0.165\textheight]{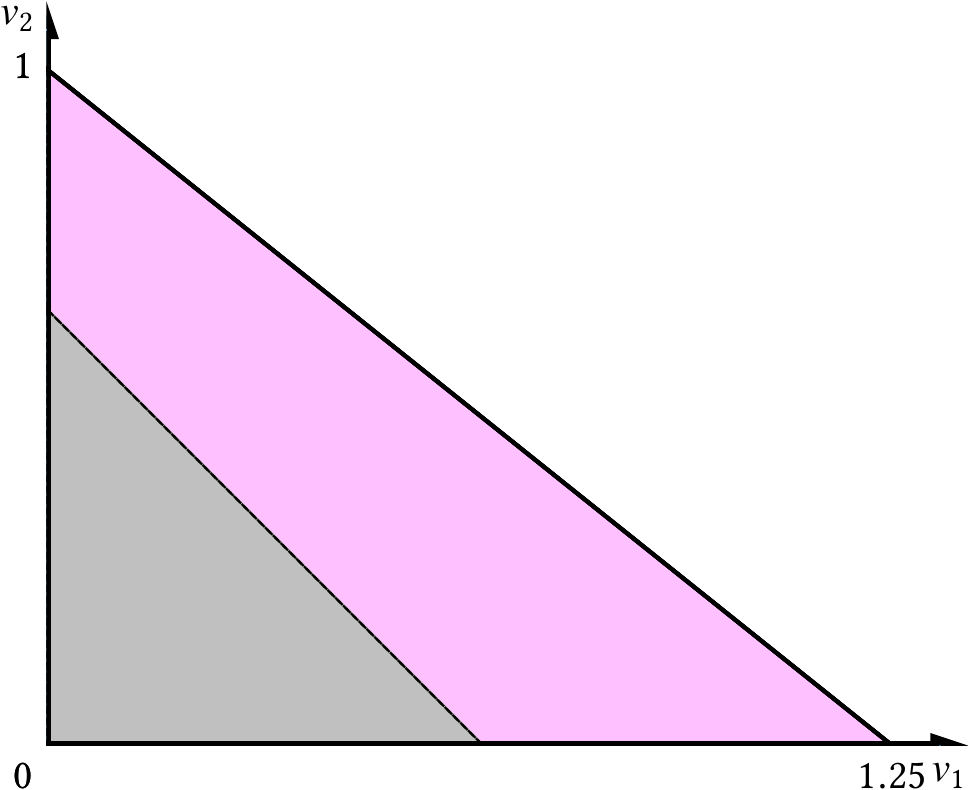}
  		\label{sfig:emptric=1.25}} %
  	\hfill
  	\subfigure[$c = 2$]
  	{\includegraphics[height=0.165\textheight]{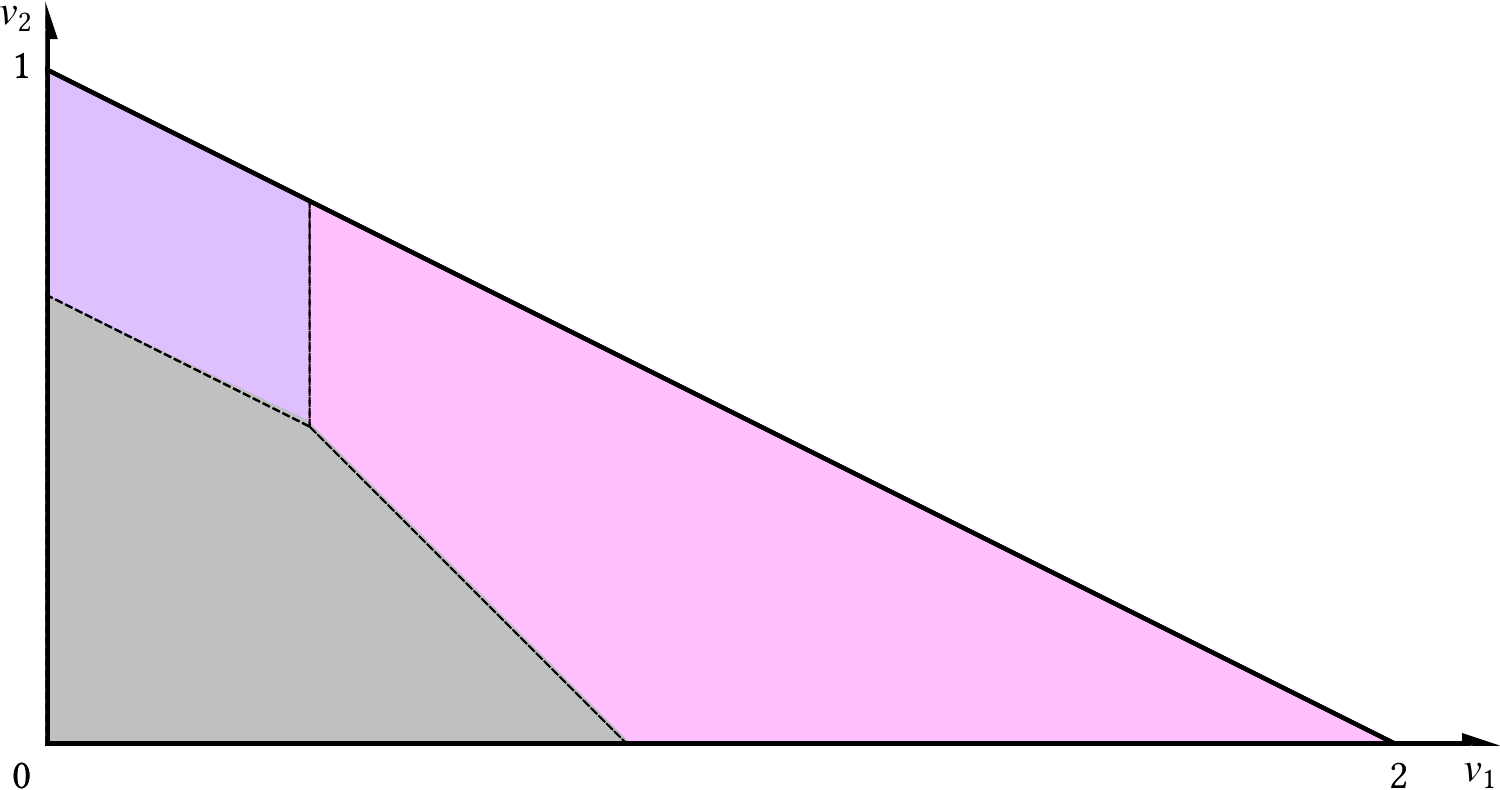}
  		\label{sfig:emptric=2}} %
      \hfill \phantom{1}
  	\caption{Uniform Triangle.}
  	\label{fig:empunitric2}
  \end{figure}

  In fact, guided by these experiment results, we are able to find the closed-form optimal mechanism for this kind of value distributions. In particular, there are two possible cases for this problem. When $c$ is large, the optimal mechanism contains two menu items. And when $c$ is small, the optimal contains only two menus, i.e., use a posted price for the bundle of the items. Formally, we have

  \begin{theorem}\label{thm:triangle_mech}
  	When $c > \frac{4}{3}$, the optimal menu for the uniform triangle distribution contains the following items:
  	% \begin{enumerate}
  		% \item
      $(0,0),0$,
  		% \item
      $(\frac{1}{c}, 1),\frac{2}{3}$, and
  		% \item
      $(1,1),\frac{2}{3}c-\frac{1}{3}\sqrt{c(c-1)}$.
  	% \end{enumerate}

  When $c \le \frac{4}{3}$, the optimal menu for the uniform triangle distribution contains the following items:
  % \begin{enumerate}
  	% \item
    $(0,0),0$ and
  	% \item
    $(1,1),\sqrt{\frac{c}{3}}$.
  % \end{enumerate}
  \end{theorem}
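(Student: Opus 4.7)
The plan is to apply the strong duality framework of \citet{daskalakis2013mechanism}, together with the Lagrangian techniques of \citet{pavlov2011property}, to certify optimality of the menu produced by the neural network. Given a candidate menu, the induced indirect utility $u^\ast(v) = \max_i(v\cdot x_i - p_i)$ is automatically primal-feasible: nonnegative, convex, nondecreasing, and with subgradient contained in $[0,1]^2$. By the duality theorem of \citet{daskalakis2013mechanism}, $u^\ast$ is revenue-optimal if and only if there exists a nonnegative measure $\gamma^\ast$ on $V\times V$, supported on ordered pairs $v \preceq v'$ at which $u^\ast(v') - u^\ast(v) = \sum_i (v'_i - v_i)$, whose difference of marginals equals the transformed signed measure $\mu$ induced by the uniform density on the triangle. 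Our job is to construct such a $\gamma^\ast$; the specific prices in the theorem will emerge as the values that make the construction feasible.

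First I would compute $\mu$ explicitly. Integrating $v\cdot \nabla u - u$ against $f = 2/c$ by parts, $\mu$ comprises a constant negative interior density $-3f\,\mathrm{d}v$, a positive line density $f\,(v\cdot n)\,\mathrm{d}S$ on the hypotenuse $v_1/c + v_2 = 1$, a point mass $+1$ at the origin arising from the $u \ge 0$ constraint, and zero contribution along the two coordinate axes (where $v\cdot n = 0$); a short computation confirms that the total $\mu$-mass is zero. Next I would partition $V$ into regions $R_i = \{v : \nabla u^\ast(v) = x_i\}$: for $c \le 4/3$ only $R_0$ (exit) and $R_2$ (bundle) appear, separated by the diagonal $v_1 + v_2 = \sqrt{c/3}$; for $c > 4/3$ we also have a partial-menu region $R_1$, bounded below by the indifference line $v_1/c + v_2 = 2/3$ and separated from $R_2$ by a vertical seam $v_1 = v_1^\ast$ obtained by equating the two non-trivial affine pieces of $u^\ast$.

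The transport plan $\gamma^\ast$ then routes positive mass from the hypotenuse and the atom at the origin to cancel the negative interior mass, moving through each region along the direction prescribed by $\nabla u^\ast$: diagonally through $R_2$ in direction $(1,1)$ and obliquely through $R_1$ in direction $(1/c,1)$. Requiring the marginal difference of $\gamma^\ast$ to equal $\mu$ yields one integral identity per non-trivial region, and together with the geometry these pin down the prices in the theorem. In the two-menu regime this reduces to a single equation whose solution coincides with the critical point of the closed-form bundle-pricing revenue $p(c - p^2)/c$, namely $p = \sqrt{c/3}$. In the three-menu regime two coupled equations determine the seam location $v_1^\ast$ and, through an implicit quadratic in $c$, the bundle price $2c/3 - \sqrt{c(c-1)}/3$ together with the partial-menu price $2/3$.

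The hard part will be verifying feasibility (i.e.\ nonnegativity) of $\gamma^\ast$ in the three-menu regime. The direction of allowable transport through $R_1$ is the skew direction $(1/c,1)$, not a coordinate axis, so I must show that the positive mass supplied by the portion of the hypotenuse lying in $R_1$, plus any mass flowing in from $R_2$ across the vertical seam, can be carried through $R_1$ and deposited into $R_0$ without the transport density becoming negative anywhere. This monotonicity constraint is exactly what pins down $v_1^\ast$ and, with it, the bundle price. A final sanity check is that the two formulas agree at the threshold $c = 4/3$: direct substitution gives $2c/3 - \sqrt{c(c-1)}/3 = 2/3 = \sqrt{c/3}$, and the partial-menu item $(3/4, 1)$ at price $2/3$ is then strictly dominated by the bundle $(1,1)$ at the same price, so the three-menu mechanism degenerates smoothly onto the two-menu mechanism.
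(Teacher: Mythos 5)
Your overall strategy --- certify the menu via the Daskalakis--Deckelbaum--Tzamos transport duality, with the transformed measure $\mu$ computed exactly as you describe and the triangle partitioned into the regions where each menu item is chosen --- is the same as the paper's. But two points in your plan are problematic, one of them fatal as stated.

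First, you propose to route mass ``obliquely through $R_1$ in direction $(1/c,1)$,'' i.e.\ along the gradient of $u^\ast$ in the partial-allocation region. Under the complementary-slackness framing you adopt, $\gamma^\ast$ must be supported on pairs $v' \succeq v$ with $u^\ast(v') - u^\ast(v) = \sum_i (v'_i - v_i)$. On a pair displaced in direction $(1/c,1)$ with $c>1$ you get $u^\ast(v') - u^\ast(v) = (v'_1 - v_1)/c + (v'_2 - v_2) < \|(v'-v)_+\|_1$, so the tightness condition fails and the dual cost of your plan strictly exceeds the primal revenue: the certificate does not close. The transport through the partial-allocation region must move only in the coordinate allocated with probability one, i.e.\ vertically; this is what the paper does, and the geometry cooperates because the region between the hypotenuse and the indifference line $v_1/c + v_2 = 2/3$ has constant vertical height $1/3$, so the boundary line density $2/c$ per unit $v_1$ exactly balances the interior density $6/c$ strip by strip.

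Second, you have misplaced the hard step. The partial-allocation region is the easy one; the genuine difficulty is the bundle region, where the positive mass sits only on the remaining piece of the hypotenuse and no simple diagonal or vertical transport balances the marginals. The paper resolves this with a dedicated lemma (\autoref{lem:left_bottom_trans}): for each boundary point one finds a line of non-negative slope cutting off a sub-region on which $\mu_+$ and $\mu_-$ balance, shows by a density-comparison argument that no two such lines cross inside the region, and concludes that the resulting ``slices'' support a coordinatewise-monotone transport. This existence argument is what lets the dual cost be evaluated as $\int \|v\|_1\,\mathrm{d}\mu_+ - \int \|v\|_1\,\mathrm{d}\mu_-$ without an explicit $\gamma$. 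Your proposal gestures at a feasibility check but supplies no mechanism for it, and since this is precisely where the proof has content beyond bookkeeping, the gap is substantive. The closed-form prices themselves also do not ``emerge from the construction'' as you suggest; they must be posited first (they come from optimizing the primal revenue) and the per-region balance conditions $\mu_+(R_i) = \mu_-(R_i)$ then verified.
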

The proof is deferred to \autoref{ssec:thm}. In a recent version, \citet{dutting2020optimal} followed our approach and also gave the optimal mechanisms for similar triangle distributions. But the support of the distribution in their case is different from ours by a constant translation.

  \subsubsection{Restricted Menu Size}\label{ssec:limitms}
  The output of our mechanism network is a set of menus. Thus we can control the menu size by directly setting the output size of the network.

  Restricting the menu size results in simpler mechanisms. It is known that the optimal menu for some distributions contains infinitely many items \cite{daskalakis2013mechanism}. Such results directly motivates the study of simple mechanisms, since they are easier to implement and optimize in practice.

  We consider the case where the buyer's value is uniformly distributed in the unit square $[0,1]^2$. It is known that the optimal mechanism contains 4 menu items. When the menu can only contain at most 2 items, the optimal mechanism is to trivially set a posted price for the bundle. The experiment results are shown in Figure \ref{fig:empunirms}.
  \begin{figure}[t]
    \phantom{1} \hfill
  	\subfigure[At most $2$ menus.]
  	{\includegraphics[height=0.155\textheight]{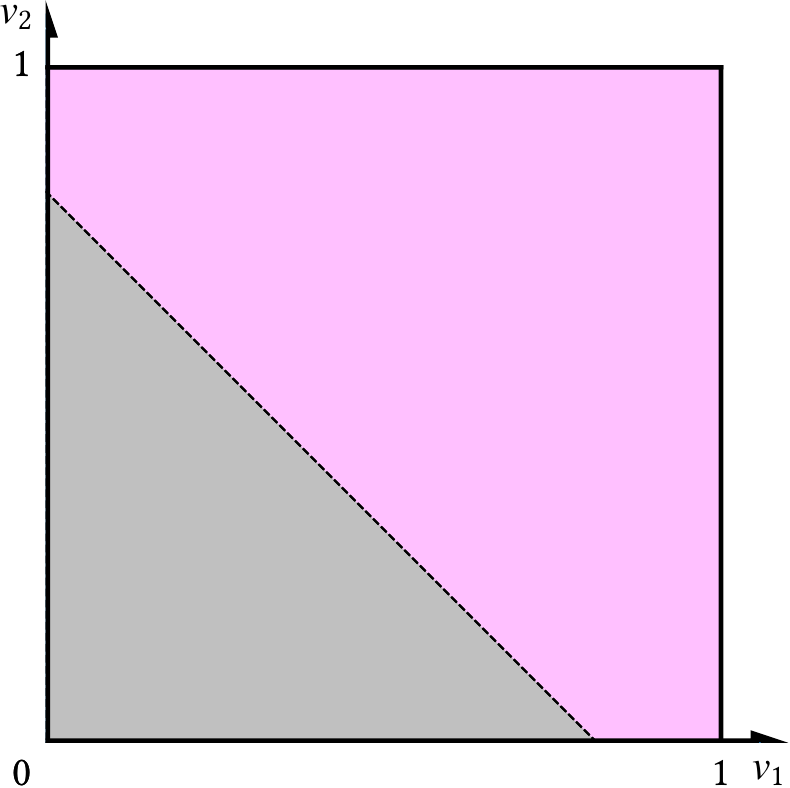}
  		\label{sfig:empunirms2}} %
  	\hfill
  	\subfigure[At most $3$ menus.]
  	{\includegraphics[height=0.155\textheight]{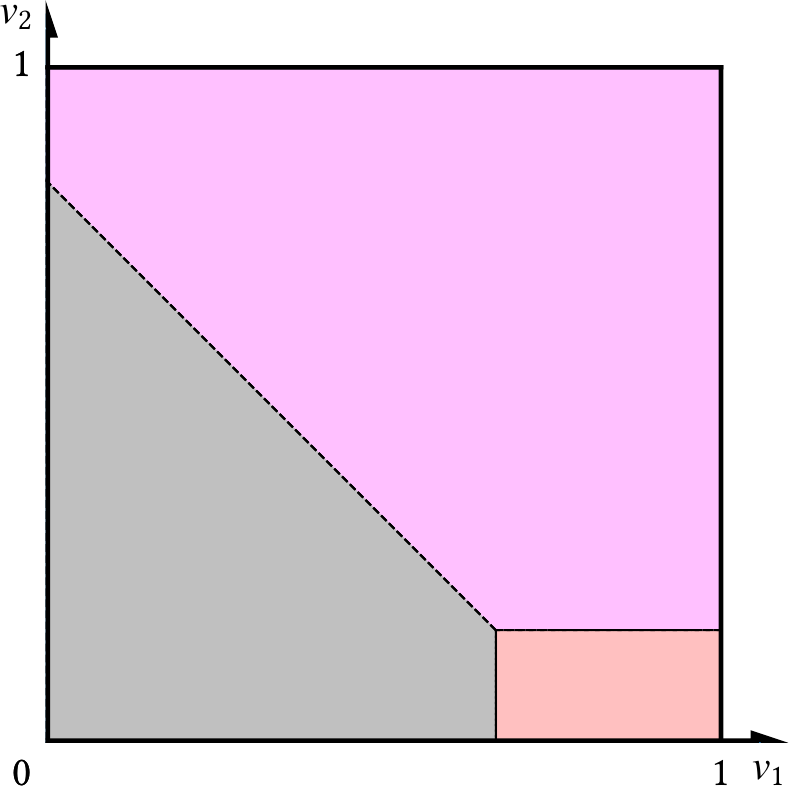}
  		\label{sfig:empunirms3}} %
    \hfill \phantom{1}
  	\caption{Uniform $[0, 1]^2$ with restricted menu size.}
  	\label{fig:empunirms}
  \end{figure}

  Surprisingly, when the menu can have at most 3 items, our network gives an asymmetric menu, despite that the value distribution is symmetric. In fact, we can also find the optimal menu with at most 3 items analytically. Our analysis shows that the optimal menu is indeed asymmetric. The intuition is that, if we add a symmetry constraint to the solution, then the optimal menu degenerates to a 2-item one. We provide the theoretical result here, but defer the proof to Section \ref{ssec:thm}.

    \begin{theorem}\label{thm:3menu_informal}
  	The optimal at-most-three-menu mechanism for two additive items with $\v
  	\sim U[0, 1]^2$ is to sell the first item at price $2 / 3$ or the bundle of
  	two items at price $5 / 6$, yielding revenue $59 / 108 \approx 0.546296$.

  	By symmetry, the mechanism could also be selling the second item at price
  	$2 / 3$ or the bundle of two items at price $5 / 6$. In particular, these
  	is no other at-most-three-menu mechanisms could generate as much revenue as
  	they do.
  \end{theorem}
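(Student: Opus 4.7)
The plan is to reduce the search over at-most-three-menu mechanisms to finitely many candidate configurations, then pick off the winner by elementary calculus. Writing the two non-trivial menus as $[(x_i,y_i),p_i]$ with $(x_i,y_i) \in [0,1]^2$ and $p_i \geq 0$, the buyer's utility-maximizing choice (by the taxation principle) partitions the unit square $[0,1]^2$ into at most three regions $R_0, R_1, R_2$ whose boundaries are line segments, and the expected revenue is $p_1 \cdot \mathrm{area}(R_1) + p_2 \cdot \mathrm{area}(R_2)$.

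My first step is an extremality lemma: at any optimum, each non-trivial allocation $(x_i, y_i)$ may be taken to lie in $\{(1,0),(0,1),(1,1)\}$, with any menu whose allocation is $(0,0)$ being redundant with the exit option. Within each combinatorial type of the region decomposition, the revenue is a low-degree rational function of a single allocation coordinate when all other parameters are held fixed, and a sign-of-derivative analysis across type boundaries shows the maximum in each coordinate is attained at $0$ or $1$. Iterating across the four allocation coordinates, and discarding the case where one menu dominates the other (which effectively drops the menu cardinality to one), reduces the problem to finitely many deterministic configurations.

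My second step is to enumerate and compare the essentially distinct configurations, using the $(v_1,v_2)\leftrightarrow(v_2,v_1)$ symmetry of the distribution: (a) mutually exclusive single items $\{(1,0),(0,1)\}$, with optimal prices $p_1=p_2=1/\sqrt{3}$ yielding revenue $2\sqrt{3}/9 \approx 0.385$; (b) a single non-trivial menu at $(1,0)$, with optimal price $1/2$ yielding revenue $1/4$; (c) bundle pricing $\{(1,1)\}$, with optimal price $\sqrt{2/3}$ yielding revenue $\tfrac{2}{3}\sqrt{2/3} \approx 0.544$; and (d) the asymmetric item-or-bundle $\{(1,0),(1,1)\}$, with optimal prices $(p_1,p_2)=(2/3,5/6)$ yielding revenue $59/108 \approx 0.546$. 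Since configuration (d) strictly dominates all others, the theorem follows; verifying the optimality of $(2/3,5/6)$ within case (d) is a two-variable calculus exercise on an explicit polynomial in $(p_1,p_2)$ obtained by integrating over the three polygonal regions induced by the two indifference lines.

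The hard part is the extremality lemma. Randomization is known to strictly improve revenue for unconstrained menus on $U[0,1]^2$ \cite{daskalakis2013mechanism}, so one cannot appeal to a general principle here; the deterministic conclusion must genuinely use the menu-size constraint. I expect the argument to require careful tracking of how the region geometry transitions as a single allocation coordinate crosses the critical values where an indifference line begins or stops intersecting the square's boundary, followed by verifying monotonicity of the resulting piecewise revenue function within each phase. If this case analysis proves unwieldy, a viable alternative is to cast the problem via the Daskalakis--Deckelbaum--Tzamos duality framework and argue that adjoining the cardinality constraint yields a restricted optimization whose basic optimal solutions are extremal in the allocation coordinates.
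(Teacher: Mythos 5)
Your overall architecture (parametrize the two non-trivial menus, reduce the allocations to a finite set, then optimize prices within each configuration) is close in spirit to the paper's, and your final enumeration in step 2 --- including the winning asymmetric configuration $\{(1,0),(1,1)\}$ with prices $(2/3,5/6)$ and revenue $59/108$ --- agrees with \autoref{lem:case1}. But the load-bearing step, your ``extremality lemma'' asserting that the optimal allocations can be taken deterministic, is exactly the part you leave unproved, and the mechanism you propose for proving it does not work. You claim that with all other parameters held fixed, the revenue as a function of a single allocation coordinate is maximized at $0$ or $1$. The paper's own computation refutes this: after fixing $A=[(1,1),p]$ and $B=[(1,\delta),q]$ (which is as far as \autoref{lem:pavlov}, Pavlov's Proposition~2, gets you --- it pins \emph{one} coordinate of each menu to $1$, not both), the revenue in Case~1 is a downward parabola in $1/(1-\delta)$ whose maximum sits at the \emph{interior} point $1/(1-\delta) = (2p+q-2)/(2(p-q))$. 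For fixed prices with $q \neq 2/3$ this is not an endpoint; $\delta = 0$ emerges only after the subsequent joint optimization forces $q = 2/3$. So determinism of the 3-menu optimum is a consequence of the full optimization over $(\delta, p, q)$, not of coordinate-wise endpoint extremality, and your proposed sign-of-derivative argument would fail. Your fallback via the Daskalakis--Deckelbaum--Tzamos duality is also not developed; that framework has no natural way to encode a menu-cardinality constraint.

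Two further points. First, the paper's actual route is worth contrasting with yours: it proves only the weaker reductions (\autoref{lem:3menubundle} puts the higher-priced menu at $(1,1)$; \autoref{lem:pavlov} puts one coordinate of the other menu at $1$), keeps $\delta$ as a free variable, and then grinds through three cases ($p\le 1$; $p\ge 1 > q$; $p>q>1$) by completing the square in $1/(1-\delta)$ and applying AM--GM. If you replace your extremality lemma with this weaker pair of reductions, your plan becomes essentially the paper's proof. Second, a factual slip: randomization does \emph{not} strictly improve revenue for the unconstrained problem on $U[0,1]^2$ --- the known optimum there is the deterministic four-menu mechanism (each item at $2/3$, bundle at $(4-\sqrt2)/3$), with revenue $(12+2\sqrt2)/27$. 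This does not damage your argument (it would only make determinism more plausible), but it signals that the justification you had in mind for the hard step was not grounded in the actual structure of this instance.
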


  \subsubsection{Unit-Demand Buyer}\label{ssec:unit}
  The unit-demand setting is also intensively studied in the literature. In this
  setting, the allocation must satisfy $x_1+x_2\le 1$. \cite{thirumulanathan2017optimal} provides
  detailed analysis and closed-form solutions on the unit-demand setting. With
  slight modifications, our mechanism network can also produce feasible
  allocations in this setting. Instead of applying the sigmoid function to each
  element of the allocation matrix, we apply a softmax function to each column
  (representing each menu item) of the allocation matrix. However, with such a
  modification, the allocation satisfies $x_1+x_2= 1$ rather than $x_1+x_2\le 1$.
  The solution is to add an extra dummy element to each column before applying
  the softmax function.

  The experiment results are shown in \autoref{sfig:unidemand}.

  \begin{figure}[!h]
  	\subfigure[Unit demand.]
  	{\includegraphics[height=0.155\textheight]{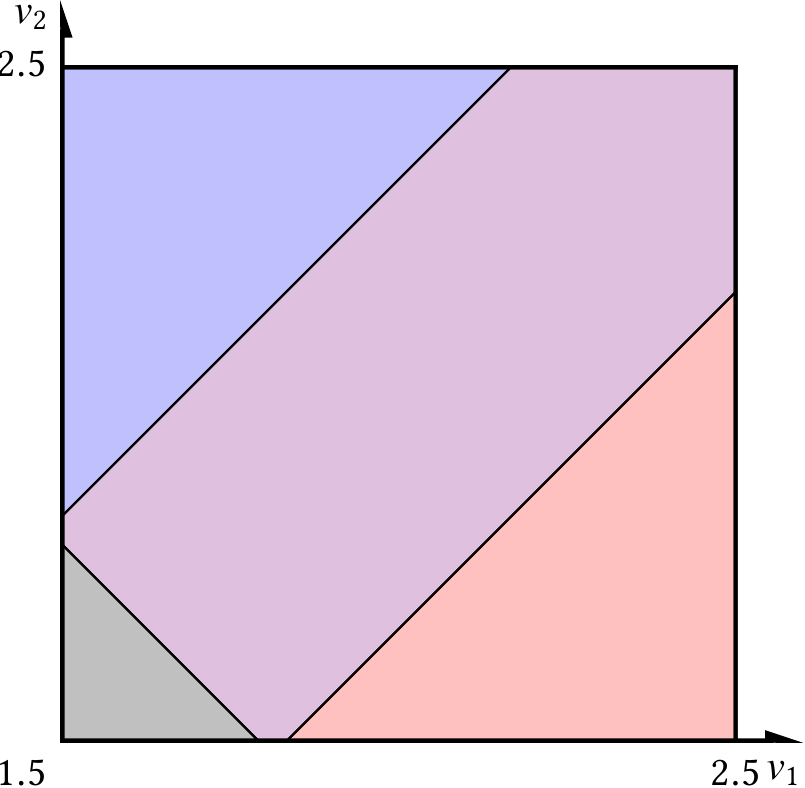}
  		\label{sfig:unidemand}} %
  	\hfill
  	\subfigure[Combinatorial Value.]
  	{\includegraphics[height=0.155\textheight]{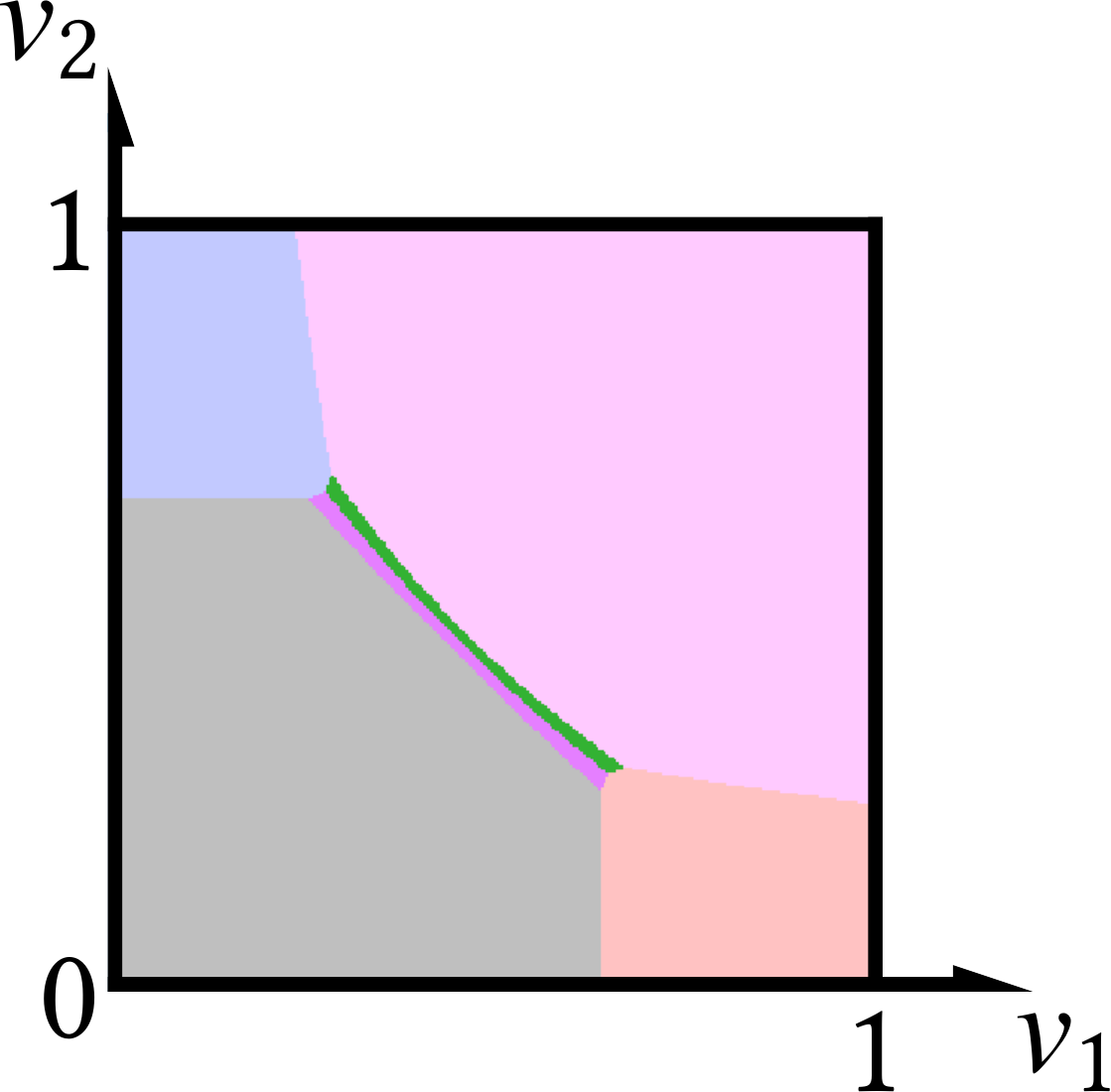}
  		\label{sfig:combin}} %
  	\hfill
  	\subfigure[Deterministic allocation.]
  	{\includegraphics[height=0.155\textheight]{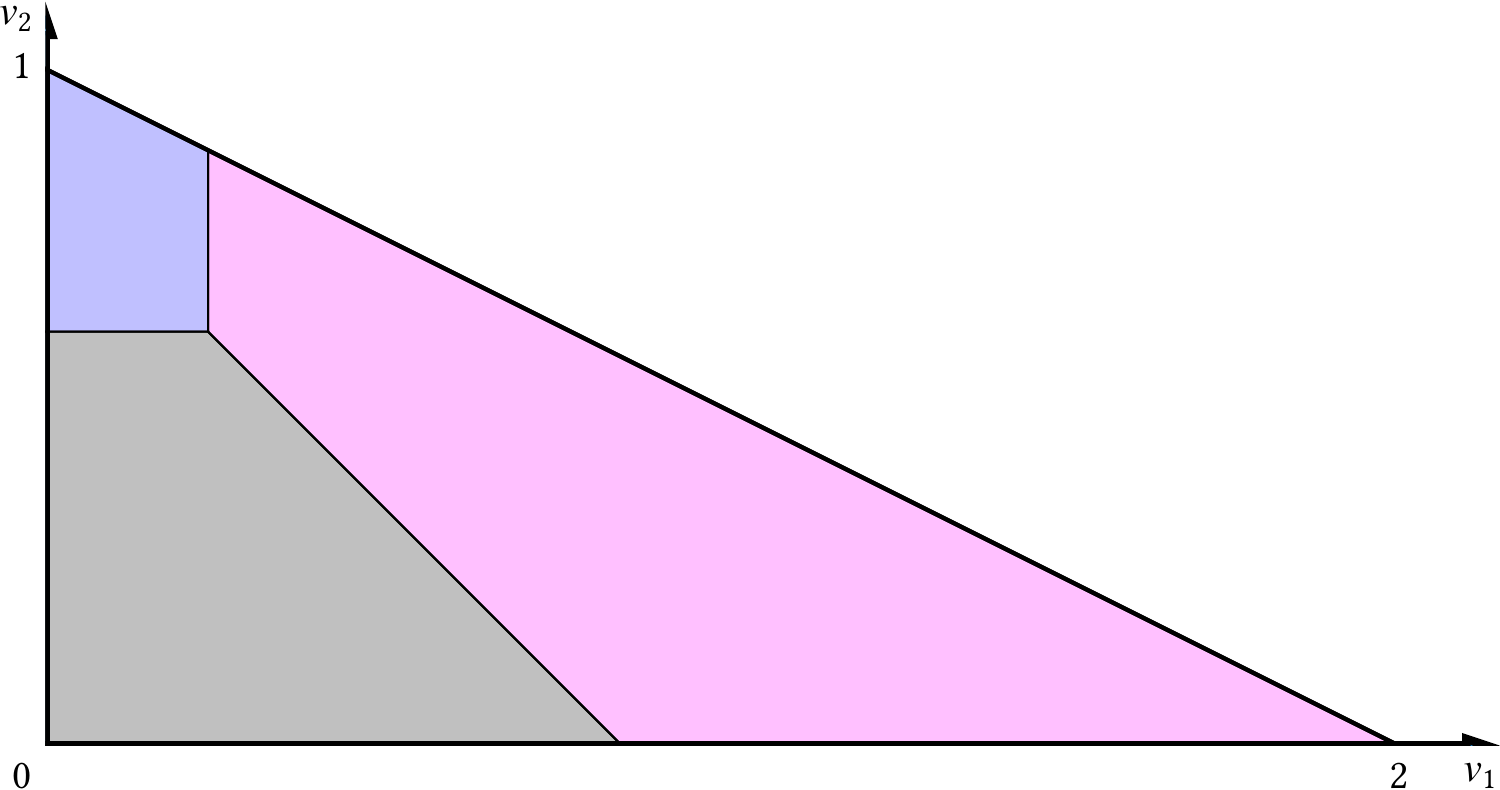}
  		\label{sfig:deter}} %
  	\caption{Empirical results.}
  	\label{fig:ucd}
  \end{figure}

  \subsubsection{Combinatorial Value}\label{ssec:comb}
  Our framework structure can also handle the case where the buyer has
  combinatorial values. The following Figure \ref{sfig:combin} shows mechanism given by our
  network for a buyer with $u(v_1,v_2)=x_1v_1+x_2v_2+x_1v_1x_2v_2-p$. In this case, we
  need to slightly modify the buyer network by adding the extra $x_1v_1x_2v_2$ term,
  which can be easily implemented.

  \subsubsection{Deterministic Mechanisms}\label{ssec:deterministic}

  We can use our networks to find the optimal deterministic mechanisms for any
  joint value distributions. Similar to the restricted menu size case,
  deterministic mechanisms are also important in practice, since they are easy
  to understand and implement. In this case, the mechanism network can be
  further simplified, since for selling 2 items, there can only be 4 possible
  deterministic menu items, with allocations $(0,0),(0,1),(1,0),(1,1)$.
  Therefore, the only parameters in the mechanism network are the corresponding
  prices.

  \autoref{sfig:deter} shows our experiment results on uniform distributions
  among the triangle described in \autoref{ssec:tri}. According to
  \autoref{thm:triangle_mech}, the optimal mechanism is not deterministic when
  $c=2$, Our experiments show that such a constraint decreases the revenue by
  $0.14\%$.

%  selected experiment results
%
%  todo:
%
%  unit-demand setting with $v_1, v_2 \sim U[d_1, d_1 + 1] \times U[d_2, d_2 + c]$.
%
%  combinatorial value, ie: $u(v_1, v_2) = \sqrt{x_1^2 \cdot v_1^2 + x_2^2 \cdot v_2^2}$

\subsection{Theoretically Provable Optimal Mechanisms}\label{ssec:thm}

  In this section, we provide theoretical proofs for some of the findings via
  our neural network. To the best of our knowledge, these results are previously
  unknown.

  % \subsection{Optimal mechanisms for selling two items with uniform distributions}

  \subsubsection{Optimal mechanisms for selling two items with correlated distributions}
  As described in Section \ref{ssec:tri}, there are two possible cases for the optimal mechanism when the buyer's value is uniformly distributed among the triangle. The solutions are shown in Figure \ref{fig:correlated}.
  \begin{figure}[t]
    \phantom{1} \hfill %
  	\subfigure[When $c> \frac{4}{3}$]
  	{\includegraphics[height=0.16\textheight]{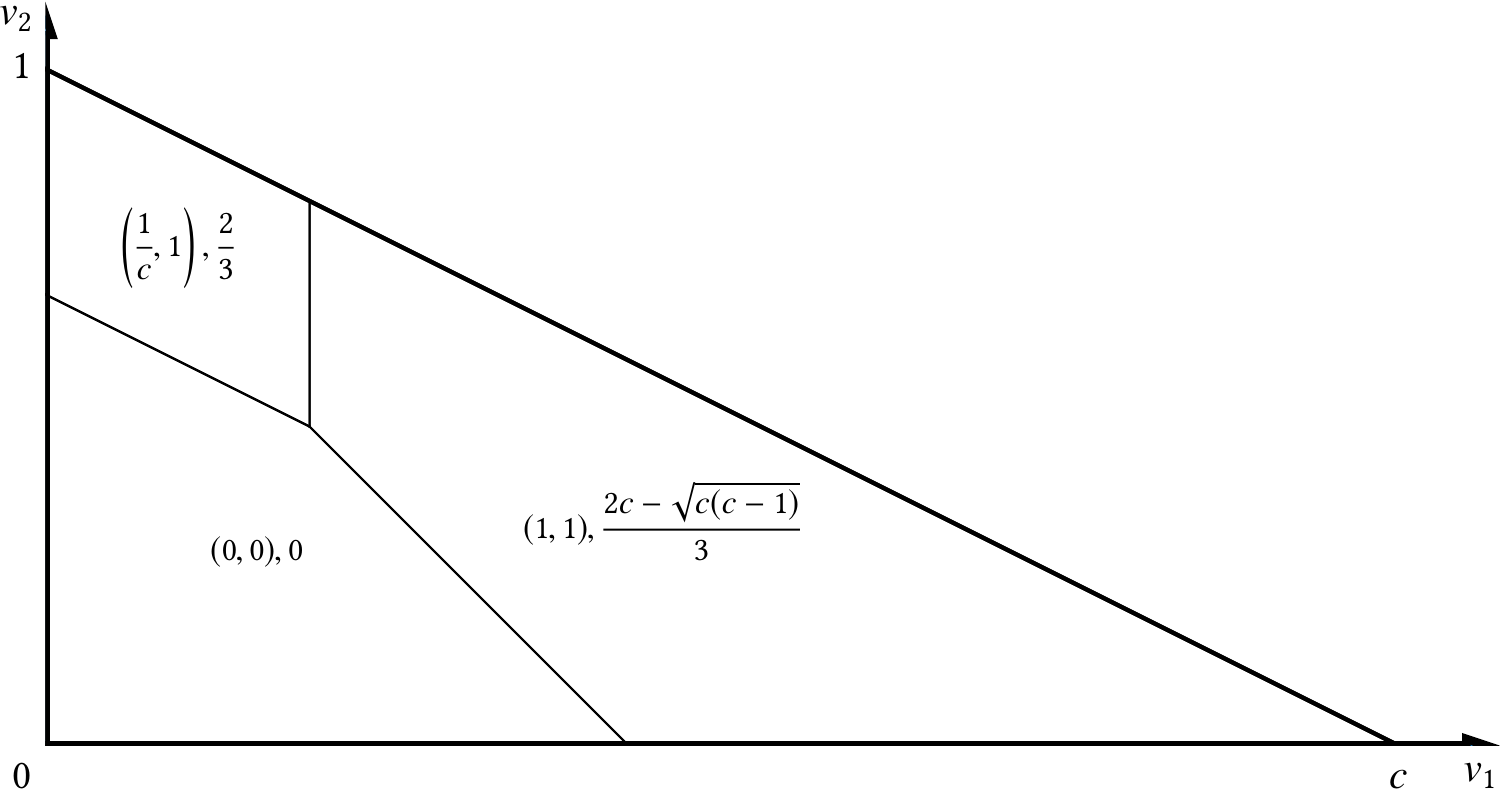}
  		\label{sfig:tirangle_opt_3}} %
  	\hfill
  	\subfigure[When $c\le\frac{4}{3}$]
  	{\includegraphics[height=0.16\textheight]{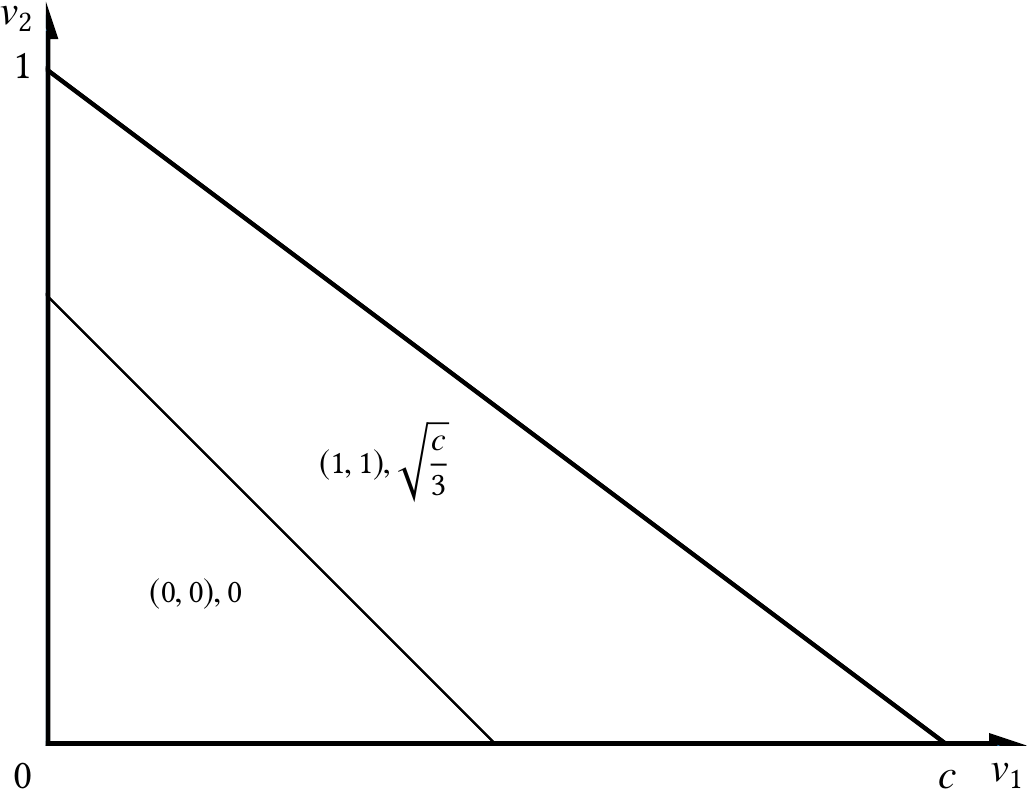}
  		\label{sfig:tirangle_opt_3=2}} %
    \hfill  \phantom{1}  %
  	\caption{Uniform Triangle.}
  	\label{fig:correlated}
  \end{figure}
  We solve the problem case by case.
  \begin{theorem}
  	\label{thm:triangle}
  	For any $c > \frac{4}{3}$, suppose that the buyer's type is uniformly distributed among the set $T=\{(v_1,v_2)~|~\frac{v_1}{c}+v_2\le 1, v_1\ge  0, v_2\ge 0 \}$. Then the optimal menu contains the following items:
  	% \begin{enumerate}
  		% \item
      $(0,0),0$,
  		% \item
      $(\frac{1}{c}, 1),\frac{2}{3}$, and
  		% \item
      $(1,1),\frac{2}{3}c-\frac{1}{3}\sqrt{c(c-1)}$.
  	% \end{enumerate}
  \end{theorem}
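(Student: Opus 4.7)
The plan is to apply the duality framework of \citet{daskalakis2013mechanism}, combined with the optimality characterizations of \citet{pavlov2011property} for simple menus, which reduces the optimality of a menu-based mechanism to the existence of a suitable dual measure. The key fact is that any feasible mechanism for an additive buyer corresponds to a non-negative, convex, coordinate-wise $1$-Lipschitz utility function $u : T \to \R_+$ with $\nabla u \in [0,1]^2$, and the seller's revenue equals
\begin{align*}
\Rev(u) = \int_T \bigl( v \cdot \nabla u(v) - u(v) \bigr) f(v) \, \ud v,
\end{align*}
where $f \equiv 2/c$ is the uniform density on $T$. Integration by parts rewrites this as $\Rev(u) = \int u \, \ud \mu$, where $\mu$ is the signed transformed measure on $\bar T$ having bulk density $-3f = -6/c$ on the interior and explicit boundary densities on the hypotenuse and the two legs of $T$.

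First I would write down the candidate utility $u^\ast$ induced by the three-item menu and partition $T$ accordingly: $R_0$ where the buyer takes $(0,0)$ at price $0$ (so $u^\ast \equiv 0$); $R_1$ where she takes $(1/c, 1)$ at price $2/3$ (so $u^\ast = v_1/c + v_2 - 2/3$); and $R_2$ where she takes $(1,1)$ at price $p_2 := 2c/3 - \tfrac{1}{3}\sqrt{c(c-1)}$ (so $u^\ast = v_1 + v_2 - p_2$). The internal boundaries are the line $v_1/c + v_2 = 2/3$ separating $R_0$ from $R_1$, and the vertical line $v_1 = v_1^\ast$ with $v_1^\ast = (p_2 - 2/3)/(1 - 1/c)$ separating $R_1$ from $R_2$. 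Convexity, continuity, and the Lipschitz/feasibility conditions are immediate; at $c = 4/3$ one verifies $p_2 = 2/3$ and $v_1^\ast = 0$, so $R_1$ collapses and the mechanism degenerates to the two-item one, matching the companion case.

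Next, following \citet{daskalakis2013mechanism}, I would exhibit a dual certificate $\gamma$: a non-negative measure on $\bar T$ that convex-dominates $\mu$, is supported on $R_0$ together with the hypotenuse (where $u^\ast = 0$ and where the $1$-Lipschitz constraint binds, respectively), and satisfies complementary slackness with $u^\ast$. The idea is to transport the negative bulk mass of $\mu$ in $R_1 \cup R_2$ outwards: the part under $R_2$ is pushed diagonally onto the hypotenuse, while the part under $R_1$ is pushed horizontally into $R_0$, exploiting that the allocation is constant in $v_2$ across the $R_1$--$R_2$ interface and constant in $v_1$ across the $R_0$--$R_1$ interface. The specific value $p_2 = 2c/3 - \sqrt{c(c-1)}/3$ arises precisely as the root of the quadratic flux-balance equation that equates the mass swept out of $R_2$ with the mass absorbed along the respective interfaces.

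The main obstacle will be the verification of the convex-dominance condition $\int u \, \ud(\mu - \gamma) \leq 0$ for every feasible $u$: it decomposes into local inequalities on each region and each boundary segment, which reduce to algebraic identities among first and second moments of the uniform density over triangular and trapezoidal sub-regions of $T$. Once such a $\gamma$ is constructed and shown to certify optimality, the equality $\Rev(u^\ast) = \int u^\ast \, \ud \gamma$ follows from complementary slackness (since $\gamma$ is supported where $u^\ast = 0$ or where $\nabla u^\ast = (1,1)$), which pins down the stated three-item menu as revenue-optimal for every $c > 4/3$.
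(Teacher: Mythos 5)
Your overall strategy --- the Daskalakis-style duality with the transformed measure $\mu = \mu_0 + \mu_\partial - \mu_s$, a partition of $T$ into the three best-response regions, and an explicit transport whose cost matches the primal revenue --- is exactly the paper's, and your observation that the price $2c/3 - \sqrt{c(c-1)}/3$ arises as the root of a flux-balance quadratic is on target. However, the transport you describe would not certify optimality. Complementary slackness requires that wherever $\gamma$ moves mass from $v$ to $v'$ the constraint is tight, $u^\ast(v) - u^\ast(v') = \|(v-v')_+\|_1$. In your region $R_1$ the gradient of $u^\ast$ is $(1/c, 1)$, so mass may only move \emph{vertically} (matched points must share their $v_1$-coordinate), not ``horizontally into $R_0$'' as you propose: a horizontal displacement there changes $u^\ast$ at rate $1/c < 1$ while incurring cost at rate $1$, so your dual objective would strictly exceed $\Rev(u^\ast)$ and certify nothing. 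Moreover, moving mass across the $R_0$/$R_1$ interface destroys the region-wise balance $\mu_+(R_i) = \mu_-(R_i)$ that is precisely what pins down the prices; the paper keeps every transport inside its own region (origin to bulk in the zero region, vertically downward from the hypotenuse in the $(1/c,1)$ strip).

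The second and larger gap is in the bundling region, where the positive mass lives on the slanted hypotenuse and the negative mass fills a quadrilateral: no simple ``diagonal push'' gives a measure-preserving, coordinate-wise monotone coupling, and this is the technical heart of the paper's argument. The paper's \autoref{lem:left_bottom_trans} constructs, for each hypotenuse point, a line of non-negative slope cutting off a sub-region on which $\mu_+$ and $\mu_-$ balance, proves via a density comparison (using $c > 4/3$) that no two such lines cross inside the region, and transports along the resulting slices; coordinate-wise monotonicity then lets the cost telescope into $\int \|v\|_1\,\ud\mu_+ - \int \|v'\|_1\,\ud\mu_-$, which is what makes the dual objective explicitly computable and equal to $\Rev(u^\ast)$. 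Your proposal identifies the ``main obstacle'' as checking convex dominance via moment identities, but the step that actually needs proof is the existence of this monotone, non-crossing slicing in the bundling region; without it (or an equally explicit coupling) the argument does not close.
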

  \begin{Remark}\label{remark:swr}
    Note that the condition $c>\frac{4}{3}$ guarantees that the price of the third menu item is positive.
  \end{Remark}
  To prove Theorem \ref{thm:triangle}, we apply the duality theory in \cite{daskalakis2013mechanism,daskalakis2015multi} to our setting. We provide a brief description here and refer readers to \cite{daskalakis2013mechanism} and \cite{daskalakis2015multi} for details.

  Let $f(v)$ be the joint value distribution of $v=(v_1, v_2)$, and $V$ be the support of $f(v)$. Define measures $\mu_0$, $\mu_\partial$, $\mu_s$ as follows:
  \begin{itemize}
    \itemsep0em
  	\item $\mu_0$ has a single point mass at $v=0$, i.e., $\mu_0(V)=\mathcal{I}(\underline{v}\in A)$, where $\mathcal{I}(\cdot)$ is the indicator function, and $\underline{v}\in A$ is the smallest type in $V$.
  	\item $\mu_\partial$ is only distributed along the boundary of $V$, with a density $f(v)(v\cdot \eta(v))$, where $\eta(v)$ is the outer unit normal vector at $v$.
  	\item $\mu_s$ is distributed in $V$ with a density $\nabla f(v)\cdot v+(n+1)f(v)$, where $n$ is the number of items.
  \end{itemize}

  Let $\mu=\mu_0+\mu_\partial-\mu_s$. Define $\mu_+$ and $\mu_-$ to be two non-negative measures such that $\mu=\mu_+-\mu_-$. Let $V_+$ and $V_-$ be the support sets of $\mu_+$ and $\mu_-$. \cite{daskalakis2013mechanism,daskalakis2015multi} shows that designing an optimal mechanism for selling $n$ items to 1 buyer is equivalent to solving the following program:
  \begin{align}
  	\sup \quad&\int_{V}u\,\mathrm{d}\mu_+-\int_{V}u\,\mathrm{d}\mu_- \nonumber\\
  	\text{s.t.} \quad&u(v)-u(v')\le \|(v-v')_+\|_1, \forall v\in V_+,v'\in V_- \tag{P}\label{eq:primal}\\
  	&u\text{~is convex}, \;\; u(\underline{v})=0 \nonumber
  \end{align}
  where $u(v)$ is the utility of the buyer when his value is $v$, and $\|(v-v')_+\|_1=\sum_{i=1}^{n}\max (0,v_i-v'_i)$.

  Relax the above program by removing the convexity constraint and write the dual program of the relaxed program:
  \begin{align}
  	\inf\quad&\int_{V\times V}\|(v-v')_+\|_1\,\mathrm{d}\gamma\nonumber \\
  	\text{s.t.}\quad&\gamma \in \Gamma(\mu_+, \mu_-)\tag{D}\label{eq:dual}
  \end{align}
  where $\Gamma(\mu_+, \mu_-)$ is the set of non-negative measures $\gamma$ defined over $V\times V$ such that, for any $V'\subseteq V$, the following equations hold:
  \begin{gather*}
  	\int_{V'\times V}\,\mathrm{d}\gamma=\mu_+(V')\quad\text{and}\quad\int_{V\times V'}\,\mathrm{d}\gamma=\mu_-(V')
  \end{gather*}
  \begin{lemma}[\citet{daskalakis2013mechanism}]
  	\eqref{eq:dual} is a weak dual of \eqref{eq:primal}.
  \end{lemma}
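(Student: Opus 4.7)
The plan is to establish weak duality by the classical Kantorovich-style argument: for any pair of feasible solutions to the two programs, the primal objective is bounded above by the dual objective. I would start by fixing an arbitrary $u$ that is feasible for \eqref{eq:primal} (so in particular, $u(v)-u(v')\le \|(v-v')_+\|_1$ holds for all $v\in V_+, v'\in V_-$) and an arbitrary $\gamma\in\Gamma(\mu_+,\mu_-)$ feasible for \eqref{eq:dual}. The goal is to prove
\begin{equation*}
\int_V u\, \ud\mu_+ - \int_V u\, \ud\mu_- \;\le\; \int_{V\times V}\|(v-v')_+\|_1\, \ud\gamma.
\end{equation*}

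The first key step is to rewrite the left-hand side as a single integral against $\gamma$. Using the defining marginal conditions of $\Gamma(\mu_+,\mu_-)$, namely $\int_{V'\times V}\ud\gamma=\mu_+(V')$ and $\int_{V\times V'}\ud\gamma=\mu_-(V')$ for every measurable $V'\subseteq V$, standard monotone class / approximation arguments yield
\begin{equation*}
\int_V u(v)\,\ud\mu_+(v) = \int_{V\times V} u(v)\,\ud\gamma(v,v'), \qquad \int_V u(v')\,\ud\mu_-(v') = \int_{V\times V} u(v')\,\ud\gamma(v,v').
\end{equation*}
Subtracting, the primal objective equals $\int_{V\times V}(u(v)-u(v'))\,\ud\gamma(v,v')$.

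The second key step is to apply the primal constraint pointwise inside this integral. Because $\mu_+$ is supported on $V_+$ and $\mu_-$ is supported on $V_-$, the marginal conditions force $\gamma$ to be concentrated on $V_+\times V_-$ (any mass outside this set would contradict one of the marginal equalities). Thus $\gamma$-almost everywhere, the pair $(v,v')$ satisfies the Lipschitz-type constraint $u(v)-u(v')\le \|(v-v')_+\|_1$, and integrating the pointwise inequality against the nonnegative measure $\gamma$ gives exactly the desired bound. Since this holds for every feasible $u$ and every feasible $\gamma$, taking the supremum on the left and the infimum on the right yields weak duality.

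The argument is largely bookkeeping and the main subtlety I would watch for is the measurability/support claim that $\gamma$ lives on $V_+\times V_-$: this uses that $V_+$ and $V_-$ are precisely the supports of $\mu_+$ and $\mu_-$, so I would justify it by applying the marginal equality to $V'=V\setminus V_+$ (resp.\ $V\setminus V_-$) to see that $\gamma((V\setminus V_+)\times V)=0$ and $\gamma(V\times(V\setminus V_-))=0$. The convexity and boundary condition $u(\underline v)=0$ in \eqref{eq:primal} play no role in weak duality; they only restrict the primal feasible set, which can only tighten the sup and so preserves the inequality. Everything else is routine integration theory and does not require grinding calculation.
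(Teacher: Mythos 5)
Your proof is correct. Note that the paper does not actually prove this lemma --- it explicitly omits the proof and defers to \cite{daskalakis2013mechanism} and \cite{daskalakis2015multi} --- but your argument is precisely the standard Kantorovich weak-duality argument used there: rewrite the primal objective as $\int_{V\times V}(u(v)-u(v'))\,\ud\gamma$ via the marginal conditions, observe that $\gamma$ is concentrated on $V_+\times V_-$ where the Lipschitz-type constraint applies pointwise, and integrate. Your handling of the support claim (testing the marginal equalities on $V\setminus V_+$ and $V\setminus V_-$) and your remark that convexity and the normalization $u(\ubar{v})=0$ only shrink the primal feasible set are both correct.
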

  We omit the proof here but refer readers to \cite{daskalakis2013mechanism} and  \cite{daskalakis2015multi} for details. The dual program \eqref{eq:dual} has an optimal transport interpretation. We ``move'' the mass from $\mu_+$ to other points to form $\mu_-$ and the measure $\gamma$ corresponds to the amount of mass that goes from each point to another in $V$.

  Although \eqref{eq:dual} is only a weak dual of \eqref{eq:primal}, we can still use it to certify the optimality of a solution. We already give a menu in Theorem \ref{thm:triangle}. Therefore, the relaxed convexity constraint is automatically satisfied if the buyer always choose the best menu item.

  In our setting, $f(v)=\frac{2}{c}$, and we have that $V=T$,  $\underline{v}=(0,0)$, $\mu_\partial$ has a constant line density of $\frac{2}{\sqrt{1+c^2}}$ along the segment $\frac{v_1}{c}+v_2=1, 0\le v_2\le 1$, and $\mu_s$ has a constant density of $\frac{6}{c}$ over $T$.

  Let $R_i$ be the region of $T$ such that for any $v\in R_i$, choosing menu item $i$ maximizes the buyer's utility.

  It is straightforward to verify that the measures $\mu_+$ and $\mu_-$ are balanced inside each region, i.e., $\mu_+(R_i)=\mu_-(R_i), \forall i$. Therefore, the transport of mass only happens inside each region.

  We construct the transport in $R_1$ and $R_2$ as follows:
  \begin{itemize}
    \itemsep0em
  	\item $R_1$: $\mu_+$ is concentrated on a single point $0$. We move the mass at $0$ uniformly to all points in $R_1$;
  	\item $R_2$: $\mu_+$ is only distributed along the upper boundary of $R_2$. For each point $v$ at the upper boundary, we draw a vertical  line $l$ through it, and move the mass at $v$ uniformly to the points in $L\cap R_2$.
  \end{itemize}

  However, for $R_3$, $\mu_+$ is also only distributed along the upper boundary, but there is no easy transport as for $R_1$ and $R_2$. We provide the following Lemma \ref{lem:left_bottom_trans}.
  \begin{lemma}
  	\label{lem:left_bottom_trans}
  	For $R_3$, there exists a transport of mass, such that for any two points $v,v'$, if there is non-negative transport from $v$ to $v'$, then $v_i\ge v'_i, \forall i$.
  \end{lemma}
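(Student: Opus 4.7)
The plan is to decompose $R_3$ into two pieces via the auxiliary curve $v_2=\tfrac{2}{3}-v_1/c$ and handle each piece separately. First, the geometry: with $p_3=\tfrac{2c}{3}-\tfrac{\sqrt{c(c-1)}}{3}$, $R_3$ is bounded by the hypotenuse $v_1/c+v_2=1$ (carrying $\mu_+$ at density $2/c$ per $\ud v_1$), the vertical indifference line $v_1=v_1^*$ with $R_2$ (where $v_1^*(1-1/c)=p_3-\tfrac{2}{3}$), and the slanted indifference line $v_1+v_2=p_3$ with $R_1$; the interior $\mu_-$ has uniform density $6/c$. The key geometric fact, an immediate consequence of the definition of $v_1^*$, is that the curve $v_2=\tfrac{2}{3}-v_1/c$ enters $R_3$ exactly at the triple-point $(v_1^*,p_3-v_1^*)$ and exits at $(\tfrac{2c}{3},0)$, cutting $R_3$ into an upper strip $\Sigma$ and a lower region $\Omega$.

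On $\Sigma$, vertical transport works cleanly: for each $v_1\in[v_1^*,\tfrac{2c}{3}]$, send the boundary mass at $(v_1,1-v_1/c)$ uniformly downward to the vertical segment $\{v_1\}\times[\tfrac{2}{3}-v_1/c,\,1-v_1/c]$ of length $\tfrac{1}{3}$, producing interior density $(2/c)/(\tfrac{1}{3})=6/c$ along the segment. Coordinatewise monotonicity is automatic since $v'_1=v_1$ and $v'_2\leq v_2$. This absorbs all boundary mass with $v_1\leq\tfrac{2c}{3}$, and a direct integration shows that the remaining boundary mass on $v_1\in[\tfrac{2c}{3},c]$ is exactly $\mu_-(\Omega)$, so mass balance on the lower piece holds.

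The remaining task, a coordinatewise-monotone coupling from the hypotenuse source on $[\tfrac{2c}{3},c]$ onto $\Omega$, is the main obstacle. I would resolve it via a Strassen-type existence theorem: such a coupling exists iff for every upward-closed subset $U\subseteq\Omega$, the source mass on the portion of the hypotenuse lying coordinatewise above $U$ dominates $\mu_-(U)$. A standard variational reduction brings this condition to rectangular corners $U_{a,b}:=\{v_1\geq a,\,v_2\geq b\}\cap\Omega$, where both quantities become explicit polynomials in $a,b,c$; setting $x:=1-a/c-b$, the dominance becomes $2x\geq 3x^2$, which holds on the feasible range $x\in[0,1-p_3/c]$ precisely because $p_3\geq\tfrac{c}{3}$, a trivial consequence of $p_3=\tfrac{2c}{3}-\tfrac{\sqrt{c(c-1)}}{3}$. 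Pasting the explicit vertical transport on $\Sigma$ with the Strassen coupling on $\Omega$ yields the required monotone transport on all of $R_3$.
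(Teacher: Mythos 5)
Your decomposition breaks down at the triple point, and the failure is not cosmetic. After the vertical transport fills the strip $\Sigma$, the only source mass left is the hypotenuse segment with $v_1\in[\tfrac{2c}{3},c]$, every point of which has second coordinate $1-v_1/c\le\tfrac13$. But the leftover target region $\Omega$ reaches up to the triple point $D=(x_D,y_D)$ with $y_D=\tfrac13\sqrt{c/(c-1)}>\tfrac13$ for every finite $c>1$ (e.g.\ $y_D=\sqrt2/3\approx0.471$ at $c=2$). Hence $\Omega\cap\{v_2>\tfrac13\}$ is a nondegenerate triangle (bounded by $v_2=\tfrac13$, the chord $v_2=\tfrac23-v_1/c$, and the line $v_1+v_2=p_3$) carrying positive $\mu_-$ mass, yet no remaining source point dominates any of its points coordinatewise. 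Taking $U=\{v_2\ge\tfrac13+\epsilon\}$ — which is itself a rectangular corner $U_{0,1/3+\epsilon}$ — the Strassen condition you invoke gives $\mu_-(U\cap\Omega)>0$ but zero admissible source mass, so the monotone coupling on $\Omega$ you need does not exist. The root cause is that the purely vertical transport on $\Sigma$ consumes exactly the low-$v_1$, high-$v_2$ boundary points that are the only ones capable of serving the deep interior near $D$; your explicit corner computation ($2x\ge3x^2$) is the condition for the \emph{full} hypotenuse against the \emph{full} triangle corner, not for the restricted residual problem you actually face.

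Two further remarks. First, even where Strassen applies, the reduction from arbitrary upward-closed sets to rectangular corners is not ``standard'' in two dimensions: a general up-set is a union of corners with a staircase boundary, and the domination inequality for unions does not follow from the inequality for individual corners without an additional (e.g.\ supermodularity) argument. Second, it is instructive to compare with the paper's construction, which foliates all of $R_3$ by a family of non-crossing, nonnegative-slope balanced lines through the boundary points; each resulting slice runs from the hypotenuse all the way down through the region, so the high-$v_2$ interior points near $D$ are served by boundary points directly above them. Any repair of your approach must similarly let the source over $[v_1^*,\tfrac{2c}{3}]$ reach below the chord, i.e.\ the strip transport cannot be purely vertical.
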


  The proof of Lemma \ref{lem:left_bottom_trans} is deferred to Appendix \ref{appen:left_bottom_trans}. With this lemma, we can simplify our proof of Theorem \ref{thm:triangle}, and do not need to construct the measure $\gamma$ explicitly.
  \begin{proof}[Proof of Theorem \ref{thm:triangle}]
  	Point $D$ in Figure \ref{fig:R_3} has coordinates $(x_D,y_D)$, where $x_D=\frac{2}{3}c-\frac{1}{3}\sqrt{c(c-1)}-\frac{1}{3}\sqrt{\frac{c}{c-1}}$ and $y_D=\frac{1}{3}\sqrt{\frac{c}{c-1}}$. Therefore,
  	\begin{gather*}
  		\mathrm{{\bf Pr}}\{\text{The buyer chooses menu item 2}\}=f(v)\cdot S(YCDI)=\frac{2}{c}\cdot \frac{1}{3}x_D\\
  		\mathrm{{\bf Pr}}\{\text{The buyer chooses menu item 3}\}=f(v)\cdot S(CDEX)=\frac{2}{c}\left[\frac{c}{2}\left(\frac{1}{3}+y_D \right)^2-\frac{1}{2}y_D^2 \right]
  	\end{gather*}
  	Thus the revenue of the menu provided in Theorem \ref{thm:triangle} is:
  	\begin{align*}
  		\Rev=~&\frac{2}{3}\cdot \mathrm{{\bf Pr}}\{\text{The buyer chooses menu item 2}\}\\
  		&+\left(\frac{2}{3}c-\frac{1}{3}\sqrt{c(c-1)}\right)\cdot \mathrm{{\bf Pr}}\{\text{The buyer chooses menu item 3}\}\\
  		=~&\frac{2}{27}\left[4+c+\sqrt{c(c-1)} \right]
  	\end{align*}

  	New we compute the objective of the dual program \eqref{eq:dual}. And to prove the optimality of the menu, it suffices to show that the objective of \eqref{eq:dual} is equal to $\Rev$.

  	Note that in our construction of the transport in $R_1$ and $R_2$, we only allow transport inside each region. In $R_1$, we transport mass from point 0 to other points. So it does not contribute to the objective of \eqref{eq:dual}, and we can just ignore $R_1$. In $R_2$, the mass is always moved vertically down. Therefore, for any $v,v'$, such that there is positive mass transport from $v$ to $v'$, we have $v_i\ge v'_i,\forall i$ and  $\|(v-v')_+\|_1=\sum_{i}\max(0, v_i-v'_i)=\sum_{i}(v_i-v'_i)=\sum_{i}(v_i-0)-\sum_{i}(v'_i-0)$. Therefore,
  	\begin{gather}
  		\int_{R_2\times R_2}\|(v-v')_+\|_1\,\mathrm{d}\gamma=\int_{R_2\times R_2}\|v-0\|_1\,\mathrm{d}\gamma-\int_{R_2\times R_2}\|v'-0\|_1\,\mathrm{d}\gamma\tag{4}
  		\label{eq:potential}
  	\end{gather}
  	For the first term, we have:
  	\begin{align*}
  		\int_{R_2\times R_2}\|v-0\|_1\,\mathrm{d}\gamma=\int_{R_2\times T}\|v-0\|_1\,\mathrm{d}\gamma=\sum_{j}\int_{\sigma_j\times T}\|v-0\|_1\,\mathrm{d}\gamma
  	\end{align*}
  	where the first equation is due to the fact that our transport is inside each region, and $\{\sigma_j\}$ is a partition of the region $R_2$. When the maximum area of $\sigma_j$ approaches 0, we get:
  	\begin{align*}
  		&\int_{R_2\times R_2}\|v-0\|_1\,\mathrm{d}\gamma=\int_{R_2}\|v-0\|_1\,\mathrm{d}\mu_+\\
  		=~&\int_{0}^{x_D}\left(v_1+1-\frac{v_1}{c} \right)\frac{2}{\sqrt{1+c^2}}\frac{\sqrt{1+c^2}}{c}\,\mathrm{d}v_1
  		=\frac{1}{9}\left(8-6\sqrt{\frac{c}{c-1}}+5c-4\sqrt{c(c-1)} \right)
  	\end{align*}

  	Similarly, the second term of Equation \eqref{eq:potential} is:
  	\begin{align*}
  		\int_{R_2\times R_2}\|v'-0\|_1\,\mathrm{d}\gamma&=\int_{R_2}\|v'-0\|_1\,\mathrm{d}\mu_-=\frac{\left(2\sqrt{c-1}-\sqrt{c}\right)\left(3+2c-\sqrt{c(c-1)} \right)}{9\sqrt{c-1}}
  	\end{align*}

  	For $R_3$, according to Lemma \ref{lem:left_bottom_trans}, it is also true that when there is positive mass transport from $v$ to $v'$, we always have $v_i\ge v'_i,\forall i$. Therefore,
  	\begin{gather*}
  		\int_{R_3\times R_3}\|(v-v')_+\|_1\,\mathrm{d}\gamma=\int_{R_3\times R_3}\|v-0\|_1\,\mathrm{d}\gamma-\int_{R_3\times R_3}\|v'-0\|_1\,\mathrm{d}\gamma
  	\end{gather*}
  	For the first term,
  	\begin{align*}
  		\int_{R_3\times R_3}\|v-0\|_1\,\mathrm{d}\gamma&=\int_{x_D}^{c}\left(v_1+1-\frac{v_1}{c}\right)\frac{2}{c}\,\mathrm{v_1}=\frac{1}{9}\left(1+4c+\frac{4c\sqrt{c}}{c-1}+2\sqrt{\frac{c}{c-1}} \right)
  	\end{align*}
  	Similarly, for the second term,
  	\begin{align*}
  		\int_{R_3\times R_3}\|v'-0\|_1\,\mathrm{d}\gamma&=\int_{v\in R_3}\frac{6}{c}(v_1+v_2)\,\mathrm{d}v=\frac{1}{27}\left(1+5\sqrt{\frac{c}{c-1}}+10c+10c\sqrt{\frac{c}{c-1}} \right)
  	\end{align*}
  	Therefore, the objective of the dual program \eqref{eq:dual} is:
  	\begin{align*}
  		\int_{T\times T}\|(v-v')_+\|_1\mathrm{d}\gamma =\int_{R_2\times R_2}\|(v-v')_+\|_1\mathrm{d}\gamma+\int_{R_3\times R_3}\|(v-v')_+\|_1\mathrm{d}\gamma
  		=\frac{2}{27}\left[4+c+\sqrt{c(c-1)} \right] =\Rev
  	\end{align*}
  	The above equation shows that the dual objective is equal to the actual revenue, which certifies that the menu is optimal.
  \end{proof}

  When $c\le\frac{4}{3}$, the optimal mechanism only has two menu items.
  \begin{theorem}
  	\label{thm:triangle2}
  	For any $1\le c\le\frac{4}{3}$, suppose that the buyer's type is uniformly distributed among the set $T=\{(v_1,v_2)~|~\frac{v_1}{c}+v_2\le 1, v_1\ge  0, v_2\ge 0 \}$. Then the optimal menu contains the following two items:
  	% \begin{enumerate}
  		% \item
      $(0,0),0$ and
  		% \item
      $(1,1),\sqrt{\frac{c}{3}}$.
  	% \end{enumerate}
  \end{theorem}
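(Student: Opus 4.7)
The plan is to mirror the duality strategy of the proof of Theorem~\ref{thm:triangle}, using the Daskalakis--Deckelbaum--Tzamos transport dual \eqref{eq:dual}. With only two menu items $[(0,0),0]$ and $[(1,1),p]$ where $p=\sqrt{c/3}$, the valuation triangle $T$ partitions into exactly two regions: the exit region $R_1=\{v\in T : v_1+v_2<p\}$, an isoceles right triangle with vertices $(0,0),(p,0),(0,p)$, and the bundle region $R_2=T\setminus R_1$. The induced buyer utility $u(v)=\max(0,v_1+v_2-p)$ is convex and Lipschitz-$1$ in each coordinate, so it is a feasible primal solution whose value equals the mechanism's revenue.

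The hypothesis $c\le 4/3$ enters through $p\le 2/3<1$, which ensures the line $v_1+v_2=p$ stays strictly below the hypotenuse $v_1/c+v_2=1$ everywhere in $T$. Consequently the atomic mass $\mu_0$ at the origin sits entirely in $R_1$ and the line mass $\mu_\partial$ on the hypotenuse sits entirely in $R_2$. I would next verify in-region mass balance via a direct area computation: $\mu_s(R_1)=\frac{6}{c}\cdot\frac{p^2}{2}=1=\mu_0(R_1)$, and symmetrically $\mu_s(R_2)=2=\mu_\partial(R_2)$. This localization lets me build the transport $\gamma$ independently in each region.

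Inside $R_1$ I would spread the origin point mass uniformly over $R_1$; this contributes nothing to the dual objective because $\|(0-v')_+\|_1=0$. Inside $R_2$ I would invoke Lemma~\ref{lem:left_bottom_trans} (whose proof constructs a monotone transport from general geometric hypotheses that apply here verbatim) to obtain a transport supported on pairs $(v,v')$ with $v\ge v'$ coordinatewise. Monotonicity then triggers the telescoping identity
\begin{align*}
\int_{R_2\times R_2}\|(v-v')_+\|_1\,d\gamma = \int_{R_2}\|v\|_1\,d\mu_+ - \int_{R_2}\|v'\|_1\,d\mu_-,
\end{align*}
reducing the dual objective to two scalar integrals. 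Parametrizing the hypotenuse by $v_2\in[0,1]$ gives $\int_{R_2}\|v\|_1\,d\mu_+=c+1$, and integrating $v_1+v_2$ against the uniform density $6/c$ on $T\setminus R_1$ yields $\int_{R_2}\|v\|_1\,d\mu_-=c+1-\tfrac{2}{3}\sqrt{c/3}$. The dual objective therefore evaluates to $\tfrac{2}{3}\sqrt{c/3}$. On the primal side, $\Rev=p\cdot\Pr[v\in R_2]=\sqrt{c/3}\cdot(1-p^2/c)=\tfrac{2}{3}\sqrt{c/3}$, matching the dual, so weak duality certifies optimality.

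The main obstacle is checking that the monotone transport in $R_2$ actually exists, and this is precisely where the hypothesis $c\le 4/3$ must bite: for larger $c$ the three-menu mechanism of Theorem~\ref{thm:triangle} strictly exceeds $\tfrac{2}{3}\sqrt{c/3}$, so any would-be monotone transport in $R_2$ must fail. The geometric content one needs to confirm is that for every downward-closed $D\subseteq R_2$, the $\mu_-$-mass in $D$ is bounded above by the $\mu_+$-mass on those hypotenuse points that coordinatewise dominate $D$; this inequality degrades as $c$ grows and holds up to $c=4/3$. Once this is established, the remainder of the argument is careful but routine bookkeeping of measures and integrals.
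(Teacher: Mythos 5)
Your proposal follows exactly the route the paper prescribes for this theorem (the paper omits the proof, stating only that it uses ``the same trick'' as Lemma~\ref{lem:left_bottom_trans}): the Daskalakis-style transport dual, the two-region decomposition with in-region mass balance $\mu_0(R_1)=\mu_s(R_1)=1$ and $\mu_\partial(R_2)=\mu_s(R_2)=2$ pinning down $p=\sqrt{c/3}$, a coordinatewise-monotone transport in $R_2$, and the telescoping evaluation of the dual objective; all of your numerical evaluations ($c+1$, $c+1-\tfrac{2}{3}\sqrt{c/3}$, and the matching primal revenue $\tfrac{2}{3}\sqrt{c/3}$) check out. The one step you leave unverified --- existence of the monotone transport in $R_2$ --- is precisely the step the paper also leaves implicit, and the altitude-bound argument from the proof of Lemma~\ref{lem:left_bottom_trans} does go through here (indeed for all $c$ in the stated range), so your write-up is correct and, if anything, more complete than the paper's.
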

  One can prove Theorem \ref{thm:triangle2} with the same trick in Lemma \ref{lem:left_bottom_trans}.  We omit the proof of this theorem since it is easier compared to the other case described in Theorem \ref{thm:triangle}.

  \subsubsection{Optimal mechanisms under limited menu size constraints}
	In this section, we consider the optimal $3$-Menu Mechanisms for value distribution $U[0, 1]^2$.
  %\subsubsection{Optimal $3$-Menu Mechanisms for $U[0, 1]^2$}

  \begin{theorem}\label{thm:symm3menu}
    The optimal symmetric at-most-three-menu mechanism for two additive items
    with $\v \sim U[0, 1]^2$ is to sell the bundle of two items at price
    $\sqrt6 / 3$, yielding revenue $2\sqrt6 / 9 \approx 0.54433$.
  \end{theorem}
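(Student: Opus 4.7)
The plan is to enumerate all symmetric menus of size at most three and bound the revenue in each case. Every such menu contains the exit item $[\mathbf{0},0]$ and is invariant under swapping the two coordinates, so its non-exit items form a swap-invariant collection of size at most two. Up to this symmetry they must fall into one of two shapes: (A) up to two \emph{bundle-type} items $[(a_1,a_1),p_1]$ and $[(a_2,a_2),p_2]$ that are each individually swap-fixed; or (B) a single \emph{swap-pair} $[(a,b),p]$ and $[(b,a),p]$ with $a\neq b$. I would handle the two cases separately and conclude by exhibiting the bundle pricing $\{[\mathbf{0},0],[(1,1),\sqrt{6}/3]\}$ as a matching attainer.

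For Case A, the key observation is that the buyer's utility at item $i$ equals $a_i s - p_i$ where $s := v_1+v_2$, so the buyer's decision depends only on the scalar $s$. Case A therefore reduces to one-dimensional screening of the random variable $s$, whose density is the triangular density $f_S(s)=s$ on $[0,1]$ and $f_S(s)=2-s$ on $[1,2]$. A routine computation gives the Myerson virtual value $\phi(s) = 3s/2 - 1/s$ on $[0,1]$ and $\phi(s) = (3s-2)/2$ on $[1,2]$; the pieces agree at $s=1$, are each strictly increasing, and vanish at $s^\star = \sqrt{2/3}$. Myerson's theorem then bounds the revenue by $\int_{s^\star}^{2} \phi(s)\, f_S(s)\, \mathrm{d}s$, which after a one-line integration equals $(2/3)\sqrt{2/3} = 2\sqrt{6}/9$; moreover this upper bound is attained by the single non-exit item $[(1,1),\sqrt{6}/3]$, which lies in the Case A family.

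For Case B, assume $a\geq b$ without loss of generality. The buyer with value $v=(v_1,v_2)$ selects the swap-pair entry matching her larger coordinate and participates iff $W(v) := a\max(v_1,v_2)+b\min(v_1,v_2)\geq p$, so the revenue is exactly $p\cdot\Pr[W\geq p]$. Since $a,b\in[0,1]$, one has $W(v)\leq v_1+v_2$ pointwise, whence $p\cdot\Pr[W\geq p]\leq p\cdot\Pr[v_1+v_2\geq p]$. The right-hand side is the revenue of a single-bundle mechanism at price $p$, which by Case A is at most $2\sqrt{6}/9$. Combining the two cases and recalling the attainer establishes the theorem.

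The main technical step is verifying the monotonicity of $\phi$ and locating its root on the triangular distribution of $v_1+v_2$: this is routine calculus but crucial, as non-monotonicity would require an ironing argument and could in principle allow a nontrivial two-item mechanism to beat a single bundle price. Everything else in the argument is a clean pointwise dominance statement showing that every symmetric small-menu mechanism is no better than some single-bundle pricing on the sum $v_1+v_2$.
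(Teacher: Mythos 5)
Your proof is correct, and it takes a genuinely different route from the paper's. The paper parametrizes the non-exit items as a swap-pair $[(\alpha,\beta),p]$, $[(\beta,\alpha),p]$, computes the probability of the exit region geometrically in the two regimes $p\le\alpha$ and $p\ge\alpha$, and closes with elementary inequalities (AM--GM type bounds); it never invokes one-dimensional auction theory. You instead split by the structure of the swap-invariant menu: for bundle-type items you reduce to single-parameter screening of $s=v_1+v_2$ and apply Myerson's theorem to the triangular density (your virtual value $\phi(s)=3s/2-1/s$ on $[0,1]$ and $\phi(s)=3s/2-1$ on $[1,2]$ is correct, continuous, increasing, with root $s^\star=\sqrt{2/3}=\sqrt6/3$ and posted-price revenue $s^\star(1-(s^\star)^2/2)=2\sqrt6/9$); for a genuine swap-pair you use the pointwise dominance $a\max(v_1,v_2)+b\min(v_1,v_2)\le v_1+v_2$ to fold Case B back into Case A. What your approach buys, beyond avoiding the paper's case-by-case geometric computations, is a strictly more complete case analysis: a symmetric three-item menu can also consist of two \emph{distinct} swap-fixed items $[(a_1,a_1),p_1]$ and $[(a_2,a_2),p_2]$ with $p_1\ne p_2$, which is not of the paper's assumed form; your Case A covers this via the Myerson bound (any one-dimensional IC mechanism, in particular any two-item bundle menu, earns at most the posted-price optimum), whereas the paper's parametrization silently omits it. What the paper's argument buys is self-containedness --- it needs no appeal to virtual values or regularity --- at the cost of more computation and the narrower parametrization.
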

  We defer the proof to \autoref{sec:proofs}.

  \begin{theorem}\label{thm:3menu}
    The optimal at-most-three-menu mechanism for two additive items with $\v
    \sim U[0, 1]^2$ is to sell the first item at price $2 / 3$ or the bundle of
    two items at price $5 / 6$, yielding revenue $59 / 108 \approx 0.546296$.

    By symmetry, the mechanism could also be selling the second item at price
    $2 / 3$ or the bundle of two items at price $5 / 6$. In particular, these
    is no other at-most-three-menu mechanisms could generate as much revenue as
    they do.
  \end{theorem}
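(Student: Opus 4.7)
The plan is to reduce the problem to a finite case analysis over the possible structures of the allocation vectors, then solve each case by explicit optimization.

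First, I would reduce to the setting where the mechanism has exactly two non-trivial items $[(x_1, y_1), p_1]$ and $[(x_2, y_2), p_2]$ (in addition to the exit item $[(0, 0), 0]$). If only one non-trivial item is used, then the mechanism is a single-lottery-at-a-price, and its maximum revenue is attained by the bundle posted price giving $2\sqrt{6}/9 < 59/108$ (the scaling $(x, y, p) \mapsto (\lambda x, \lambda y, \lambda p)$ shows we may take $\max(x, y) = 1$, after which direct computation pins the optimum to the bundle). Hence I may assume both non-trivial items have winning regions of positive measure.

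Second, I would establish a structural lemma: at any optimum, each allocation is one of the three non-zero extreme points of $[0, 1]^2$, i.e., $(x_i, y_i) \in \{(1, 0), (0, 1), (1, 1)\}$. The argument is a perturbation in the spirit of \citet{pavlov2011property}. If some coordinate, say $x_1 \in (0, 1)$, then one may perturb $x_1$ (simultaneously adjusting $p_1$ to preserve the winning region to first order) and strictly improve the revenue, since the local revenue is piecewise-linear in the perturbation with non-vanishing slope at an interior allocation. This leaves exactly three candidate pairs of allocations: $\{(1, 0), (0, 1)\}$, $\{(1, 0), (1, 1)\}$, and $\{(0, 1), (1, 1)\}$.

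Third, I would compute the optimal revenue in each family. The exclusive-choice case $\{(1, 0), (0, 1)\}$ is a standard calculation giving optimal revenue $2\sqrt{3}/9 \approx 0.385 < 59/108$. For the case $\{(1, 0), (1, 1)\}$ with $p_1 < p_2$, the winning regions are $R_1 = \{v_1 \geq p_1,\ v_2 \leq p_2 - p_1\}$ and $R_2 = \{v_2 \geq p_2 - p_1,\ v_1 + v_2 \geq p_2\}$, giving revenue $p_1 (1 - p_1)(p_2 - p_1) + p_2 (1 + p_1 - p_2 - p_1^2 / 2)$; solving $\partial \Rev / \partial p_1 = \partial \Rev / \partial p_2 = 0$ yields $(p_1, p_2) = (2/3, 5/6)$ and $\Rev = 59/108$. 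The case $\{(0, 1), (1, 1)\}$ is identical by swapping the two items. So the optimum is $59/108$, attained (up to the item-swap symmetry) only by the stated mechanism.

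The main obstacle is the structural lemma in the second step. A priori, lotteries could help: they do in the unrestricted setting, where the Daskalakis--Deckelbaum--Tzamos optimum for $U[0, 1]^2$ uses a lottery item. Showing that no two-item menu with a lottery allocation beats $59/108$ requires a careful perturbation argument, since the revenue is a non-convex piecewise function of $(x_1, y_1, x_2, y_2)$ whose behavior near interior allocations is delicate; in particular one must handle edge-lotteries such as $(1, y)$ with $y \in (0, 1)$, which interpolate between the winning and losing deterministic cases. Once the allocations are pinned to the three non-zero corners, the remaining optimization is routine two-variable calculus.
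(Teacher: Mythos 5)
Your step 2 is the crux, and it is not proved. The structural claim you need --- that at the optimum both non-trivial allocations lie at the corners $(1,0)$, $(0,1)$, $(1,1)$ --- is strictly stronger than what Pavlov's result gives. \autoref{lem:pavlov} (Pavlov's Proposition 2) only moves an allocation to the \emph{boundary}: it leaves you with edge lotteries $(1,\delta)$, $\delta\in(0,1)$, exactly the case you flag as ``delicate'' and then do not handle. Your perturbation sketch does not close this: the revenue is not piecewise-linear in an allocation coordinate with non-vanishing slope, because perturbing $(x_1,y_1)$ rotates the boundaries of the winning regions and the region measures are quadratic/cubic in the parameters; indeed the first-order effect can vanish (this is why the unrestricted optimum for $U[0,1]^2$ genuinely contains a lottery item). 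The paper does not prove your structural lemma either --- it avoids it. After fixing one menu to $(1,1)$ (your reduction, same as \autoref{lem:3menubundle}) and the other to $(1,\delta)$ via Pavlov, it keeps $\delta$ as a free variable and maximizes $\Rev(p,q,\delta)$ explicitly: completing the square in $1/(1-\delta)$ shows the optimum requires $1-\delta=2(p-q)/(2p+q-2)$, then AM--GM forces $q=2/3$, then $p=5/6$, and only at that point does $\delta=0$ emerge as a conclusion (see \autoref{lem:case1}). The elimination of edge lotteries is an output of that computation, not an input.

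A secondary gap: your step 3 computes revenue for $\{(1,0),(1,1)\}$ using region formulas that presuppose $p_2\le 1$ and a particular intersection pattern with the square. The paper splits into three cases ($p\le 1$; $p\ge 1>q$; $p>q>1$) precisely because the geometry changes when prices exceed $1$, and it verifies the alternative cases yield at most $14/27$ and $1/2$ respectively. Your interior-case calculus (which does correctly recover $(2/3,5/6)$ and $59/108$) would need to be supplemented by these boundary cases before the stationary point can be declared the global maximum.
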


  We demonstrate the proof through the basic parametric method. Note that there
  must be a zero menu $Z = [(0, 0), 0]$, and hence we have two menus to
  determine. Suppose that the remaining two menus are $A = [(\alpha, \beta), p]$
  and $B = [(\gamma, \delta), q]$. We then solve the following problem:
  \begin{align}\label{eq:3menurevmax}\tag{$3\Menu$}
    \begin{aligned}
      \text{maximize}   &\quad \Rev(A, B, Z)  \\
      \text{subject to} &\quad \alpha, \beta, \gamma, \delta \in [0, 1],~
        p, q \geq 0.
    \end{aligned}
  \end{align}
  To establish the connection between the menus and the revenue, let $S_A$ be
  the set of values that menu $A$ is the most preferred:
  \begin{align*}
    S_A = \{(v_1, v_2) \in [0, 1]^2 | (v_1, v_2) \cdot (\alpha, \beta) - p
              \geq (v_1, v_2) \cdot (\gamma, \delta) - q ~\wedge~
              (v_1, v_2) \cdot (\alpha, \beta) - p \geq 0\}.
  \end{align*}
  Similarly, we define $S_B$ and $S_Z$ be the set of values where menu $B$ and
  menu $Z$ are the most preferred, respectively:
  \begin{gather*}
    S_B = \{(v_1, v_2) \in [0, 1]^2 | (v_1, v_2) \cdot (\gamma, \delta) - q
              \geq (v_1, v_2) \cdot (\alpha, \beta) - p ~\wedge~
              (v_1, v_2) \cdot (\gamma, \delta) - q \geq 0\},  \\
    S_Z = \{(v_1, v_2) \in [0, 1]^2 |
              0 \geq (v_1, v_2) \cdot (\alpha, \beta) - p ~\wedge~
              0 \geq (v_1, v_2) \cdot (\gamma, \delta) - q\}.
  \end{gather*}
  For any measurable set $S \subseteq [0, 1]^2$, let $|S| = \Pr[(v_1, v_2) \in
  S]$ be the probabilistic measure of $S$. Then the revenue of the mechanism
  with menus $A$, $B$, and $Z$ is
  \begin{align}\label{eq:3menurev}\tag{$3\Menu\Rev$}
    \Rev(A, B, Z) = |S_A| \cdot p + |S_B| \cdot q.
  \end{align}

  With the above formulation, there are two major challenges to solve the
  program \eqref{eq:3menurevmax}:
  \begin{itemize}
    \itemsep0em
    \item There are too many possible cases with different formulas of $|S_A|$
          and $|S_B|$, hence the formula of $\Rev(A, B, Z)$. In particular,
          there are $4$ possible intersection patterns between the boundary of
          the square $[0, 1]^2$ and the intersection of each two of the menus
          ($S_A \cap S_B$, $S_B \cap S_Z$, $S_Z \cap S_Z$). Hence roughly $4^3 =
          64$ different cases.
    \item Even within each specific case, the revenue $\Rev$ is still a
          high-order function with $6$ variables. In general, there is no
          guarantee for closed-form solutions.
  \end{itemize}

  To overcome these two challenges, the following two lemmas are critical to
  reducing both the number of different cases and free variables:
  \begin{Lemma}\label{lem:3menubundle}
    Without loss of generality, we can assume that the optimal at-most-three-menu
    mechanism includes bundling, $(1, 1)$, as one of its menu.
  \end{Lemma}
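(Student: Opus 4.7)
The plan is to show that any at-most-three-menu mechanism can be transformed, without decreasing expected revenue, into one that includes the bundle $(1,1)$ as a menu item. Fix an arbitrary optimal at-most-three-menu mechanism with menus $Z = [(0,0),0]$, $A = [(\alpha,\beta),p]$, and $B = [(\gamma,\delta),q]$, and suppose neither $A$ nor $B$ equals $(1,1)$. By relabeling and appropriate tie-breaking, assume that $A$ is the menu chosen at the top type $v=(1,1)$, so $\alpha+\beta-p \geq \max\{\gamma+\delta-q,\,0\}$.

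The core step is to substitute $A$ by $A^{\star} = [(1,1),\, p+(1-\alpha)+(1-\beta)]$, a ``lift to the bundle'' that preserves utility at $(1,1)$: $u_{A^\star}(1,1) = 2 - p - (2-\alpha-\beta) = \alpha+\beta-p = u_A(1,1)$. For any other type $v$, a direct computation gives
\[
u_{A^\star}(v) - u_A(v) \;=\; -(1-v_1)(1-\alpha) - (1-v_2)(1-\beta) \;\leq\; 0.
\]
Hence no buyer currently preferring $B$ or $Z$ can switch to $A^{\star}$, while some buyers in $S_A$ may migrate to $B$ or $Z$; in particular $S_{A^\star}\subseteq S_A$. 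Retained buyers now pay an extra $(2-\alpha-\beta)$ each, contributing a gain of $(2-\alpha-\beta)\cdot|S_{A^\star}|$, while each migrant contributes $q$ (or $0$) instead of $p$. I would prove the net revenue change is nonnegative by combining the inequality $p-q \leq (\alpha+\beta)-(\gamma+\delta)$ (forced by $A$ being chosen at $(1,1)$), which limits the per-migrant loss, with an explicit measure comparison: the migrant region is the slice of $S_A$ where $(1-v_1)(1-\alpha)+(1-v_2)(1-\beta)$ exceeds the initial utility slack of $A$ over its competitor, and its uniform measure times the loss should be dominated by the price gain integrated over $S_{A^\star}$.

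The main obstacle is the geometric bookkeeping: the boundaries of $S_A$, $S_B$, and $S_Z$ can intersect $\partial[0,1]^2$ in many configurations (the paper enumerates roughly $64$ cases), so a direct case-by-case integration is unwieldy. To sidestep this, my preferred route is a smooth one-parameter perturbation argument. Define $A_t = [(\alpha+t(1-\alpha),\, \beta+t(1-\beta)),\, p_t]$ for $t\in[0,1]$, with $p_t$ chosen so that $u_{A_t}(1,1)$ is constant in $t$ (so $A_0=A$ and $A_1=A^\star$). Compute $\tfrac{d}{dt}\Rev(A_t,B,Z)\big|_{t=0}$ using the uniform density, and show it is nonnegative whenever $(\alpha,\beta)\neq(1,1)$. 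Optimality of the original mechanism then forces the derivative to vanish only at $t=1$, yielding the claim. Edge effects (when $(\alpha,\beta)$ is already on an active constraint other than the bundling corner) would be handled by a short separate boundary argument, and the proof would then reduce to a single local computation rather than exhaustive enumeration.
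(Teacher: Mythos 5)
Your transformation goes in the wrong direction, and the key claim underlying it is false. You lift $A=[(\alpha,\beta),p]$ to $A^{\star}=[(1,1),\,p+(2-\alpha-\beta)]$, preserving utility at $(1,1)$ and lowering it everywhere else, and you assert that the resulting revenue change is nonnegative --- in the perturbative version, that $\tfrac{d}{dt}\Rev(A_t,B,Z)\ge 0$ whenever $(\alpha,\beta)\neq(1,1)$. This is not true. Take $\v\sim U[0,1]^2$ with menus $Z$ and $A=[(1/2,1/2),1/2]$ (an at-most-three-menu mechanism with $B$ degenerate). Then $S_A=\{v_1+v_2\ge 1\}$ and $\Rev=\tfrac12\cdot\tfrac12=\tfrac14$, whereas $A^{\star}=[(1,1),3/2]$ gives $\Rev=\tfrac32\cdot\tfrac18=\tfrac{3}{16}<\tfrac14$. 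Along your path $A_t=[(\tfrac{1+t}{2},\tfrac{1+t}{2}),\tfrac12+t]$ the revenue is $(\tfrac12+t)/(2(1+t)^2)$, with derivative $-t/(2(1+t)^3)$, which is strictly negative for every $t\in(0,1)$ even though $(\alpha_t,\beta_t)\neq(1,1)$ there. So the ``price gain on retained buyers dominates the migration loss'' step cannot be made to work in general, and a transformation that can strictly lose revenue cannot establish the without-loss-of-generality claim (nor does optimality of the starting point help: at an optimum the directional derivative is at most zero, which says nothing about reaching $t=1$ without loss).

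The fix is to change the allocation while \emph{keeping the price fixed}, and to apply it to the \emph{higher-priced} menu. Assuming $p\ge q$ and replacing $A$ by $A'=[(1,1),p]$ only raises the utility of $A$, so $S'_{A'}\supseteq S_A$ and $S'_Z\subseteq S_Z$; writing
\begin{align*}
\Rev=|S_A|\cdot p+|S_B|\cdot q=|S_A|\cdot(p-q)+(1-|S_Z|)\cdot q
\end{align*}
shows both terms weakly increase (since $p-q\ge 0$ and $q\ge 0$), so revenue cannot drop. This is exactly the paper's one-line argument; no case analysis or perturbation is needed.
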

  \begin{proof}[Proof of \autoref{lem:3menubundle}]
    Without loss of generality, suppose that $p \geq q$, and then there must be
    an optimal mechanism with $\alpha = \beta = 1$. Because by replacing menu
    $A$ with menu $A' = [(1, 1), p]$, the set of values where $A'$ dominating
    $B$ and $Z$, $S'_{A'}$ will be a superset of $S_A$, and similarly, $S'_{Z}$
    will be a subset of $S_Z$, i.e., $S'_{A'} \supseteq S_A ~\text{and}~ S'_{Z}
    \subseteq S_Z$. Therefore,
    \begin{gather*}
      \Rev' = |S'_{A'}| \cdot p + |S'_B| \cdot q
        = |S'_{A'}| \cdot (p - q) + (1 - |S'_Z|) \cdot q
        \geq |S_A| \cdot (p - q) + (1 - |S_Z|) \cdot q = \Rev.
    \end{gather*}
  \end{proof}
  \begin{Lemma}[{\citet[Proposition 2]{pavlov2011property}}]\label{lem:pavlov}
    For $\v \sim U[0,1]^2$, consider a mechanism with a menu $(\gamma, \delta)$
    such that $\gamma,\delta \neq 1$ and $(\gamma, \delta) \neq (0, 0)$, then
    by replacing the menu with $(\gamma', \delta')$ (the price of the menu may
    also be different), the revenue of the new mechanism is no less than the
    original mechanism, where $\gamma' = 1$ or $\delta' = 1$ or $(\gamma',
    \delta') = (0, 0)$.
  \end{Lemma}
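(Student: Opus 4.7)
The plan is to prove this by a local-perturbation argument on the menu $M = [(\gamma, \delta), r]$ while holding every other menu of the mechanism fixed. I would write the total revenue as $\Rev = r \cdot |S_M| + \sum_{j} p_j \cdot |S_{M_j}|$, where $|S_M|$ is the Lebesgue measure of the winning region of $M$ (a polygon cut out of $[0,1]^2$ by the indifference lines with competitors and by the IR constraint $\gamma v_1 + \delta v_2 \geq r$), and $|S_{M_j}|$ is the winning region of a competing menu $M_j$. The total revenue is a function of the three free parameters $(\gamma, \delta, r)$, since varying them reshapes $S_M$ and, correspondingly, the $S_{M_j}$'s.

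The key step is to consider three candidate perturbations from $(\gamma, \delta, r)$: (a) increase $\gamma$ toward $1$ with $r$ re-optimized, (b) increase $\delta$ toward $1$ with $r$ re-optimized, and (c) shrink $(\gamma, \delta)$ radially toward $(0, 0)$ while sending $r$ to $0$. I would show that at least one of the three directions weakly increases $\Rev$. Concretely, under the uniform distribution on $[0,1]^2$, $|S_M|$ and the $|S_{M_j}|$ are piecewise polynomial in $(\gamma, \delta, r)$, so the revenue restricted to any linear family of perturbations is piecewise polynomial in the perturbation parameter $t$, and the sum of the three directional derivatives at $(\gamma, \delta, r)$ can then be analyzed in closed form.

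To carry out the derivative computation I would use a divergence-theorem trick: an infinitesimal change $(d\gamma, d\delta, dr)$ translates the indifference lines bounding $S_M$, and the change in area $d|S_M|$ reduces to a line integral over $\partial S_M$. Under the uniform prior this line integral has an explicit form depending only on the endpoints of each boundary segment. Summing the contributions of $d|S_M|$ weighted by $r$ and $d|S_{M_j}|$ weighted by $p_j$ (after noting that segments on the $M$--$M_j$ boundary contribute with opposite signs to the two incident regions), one obtains a clean expression for $d\Rev$ as a sum of boundary terms.

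The hard part will be showing that, after cancellation, the sum of the directional derivatives along $(a)$, $(b)$, and $(c)$ is non-negative regardless of the configuration of competitors and of how $S_M$ meets the boundary of $[0,1]^2$. I expect this reduces to the statement that the function $(\gamma, \delta) \mapsto \max_{r} \Rev(\gamma, \delta, r)$ is, along each affine slice through $(\gamma, \delta)$ and the three target corners, a convex function of $t$, so its maximum on $[0,1]^2$ must be attained at one of the allowed extremal points $\{\gamma = 1\} \cup \{\delta = 1\} \cup \{(0,0)\}$. The configuration-independence of this convexity argument is the only delicate point; once that is established the lemma follows, with the explicit choice of $(\gamma', \delta')$ dictated by the direction in which the argument gives the non-negative derivative.
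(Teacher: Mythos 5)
First, a point of comparison: the paper does not prove this statement at all --- it is imported as Proposition~2 of \citet{pavlov2011property} and used as a black box. So there is no in-paper proof to measure your attempt against; a self-contained proof would have to reconstruct Pavlov's argument.

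Your outline has the right overall shape (reduce to one-parameter families of menu items joining $(\gamma,\delta)$ to the allowed extreme allocations, and argue that the price-optimized revenue along each family is maximized at an endpoint), and this is indeed the spirit of Pavlov's perturbation argument. But the decisive step is absent. You write that you ``expect'' $t \mapsto \max_r \Rev$ to be convex along each affine slice and call the configuration-independence of this ``the only delicate point''; that point is the entire content of the lemma. It is not a routine verification: for fixed competing menus the winning regions are polygons whose vertices move as rational functions of $(\gamma,\delta,r)$ (compare the triple point $v^*_1 = \frac{q-\delta p}{1-\delta}$, $v^*_2 = \frac{p-q}{1-\delta}$ in the paper's case analysis), so the revenue restricted to a generic affine slice is a piecewise rational function of degree up to three with breakpoints wherever the combinatorial pattern of the partition changes, and such functions are not convex in general. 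Similarly, ``the sum of the three directional derivatives is non-negative'' is only a first-order statement at the starting configuration; even if established, it does not let you travel all the way to $\gamma'=1$, $\delta'=1$, or $(0,0)$ without the global endpoint-optimality you have deferred. Pavlov's proof sidesteps exactly this difficulty by choosing a specific coupled perturbation --- scaling the allocation along the ray through the origin together with a matched price adjustment, so that the induced change in every type's utility is controlled and the revenue becomes an explicit low-degree polynomial in the single scaling parameter whose maximum over the feasible interval is attained at an endpoint, using the density condition that $U[0,1]^2$ satisfies. Without exhibiting such a perturbation and carrying out that computation, what you have is a plan for a proof rather than a proof.
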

  \begin{figure}
    \subfigure[Case 1]
      {\includegraphics[width=0.28\textwidth]{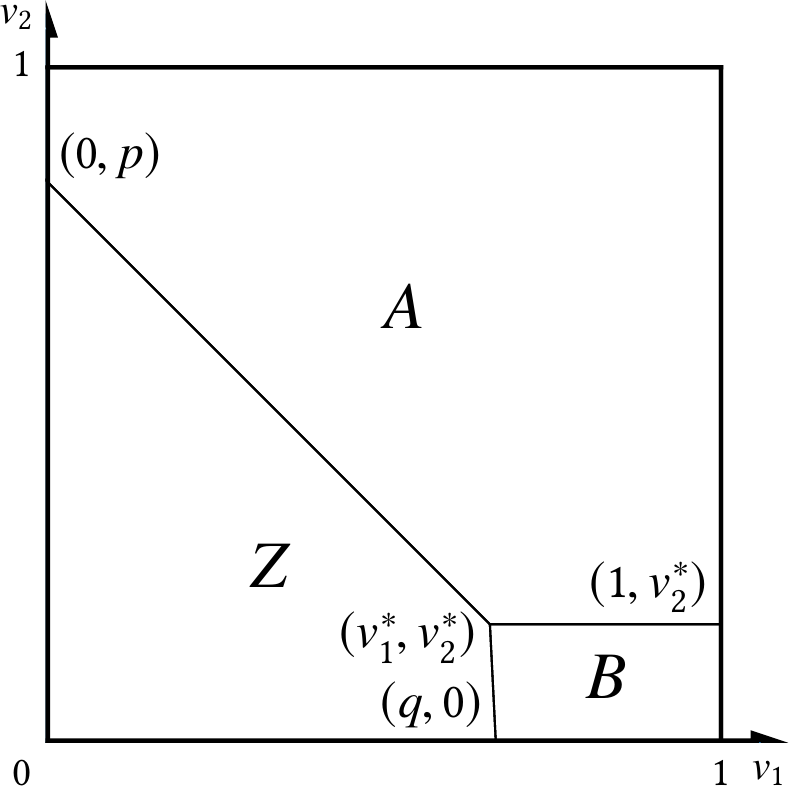}
      \label{sfig:case1}} %
    \hfill %
    \subfigure[Case 2]
      {\includegraphics[width=0.28\textwidth]{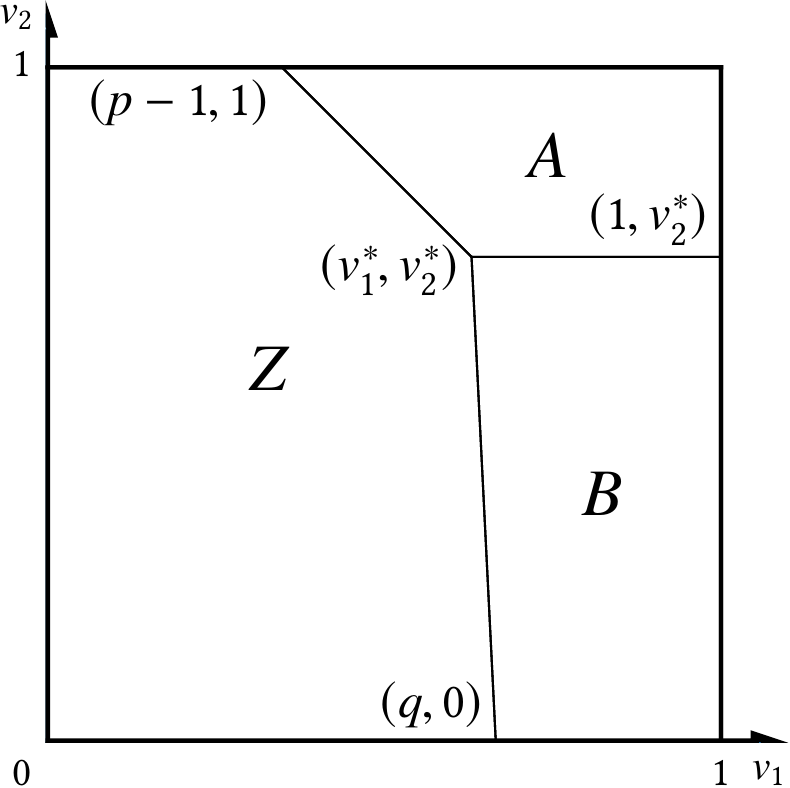}
      \label{sfig:case2}} %
    \hfill %
    \subfigure[Case 3]
      {\includegraphics[width=0.28\textwidth]{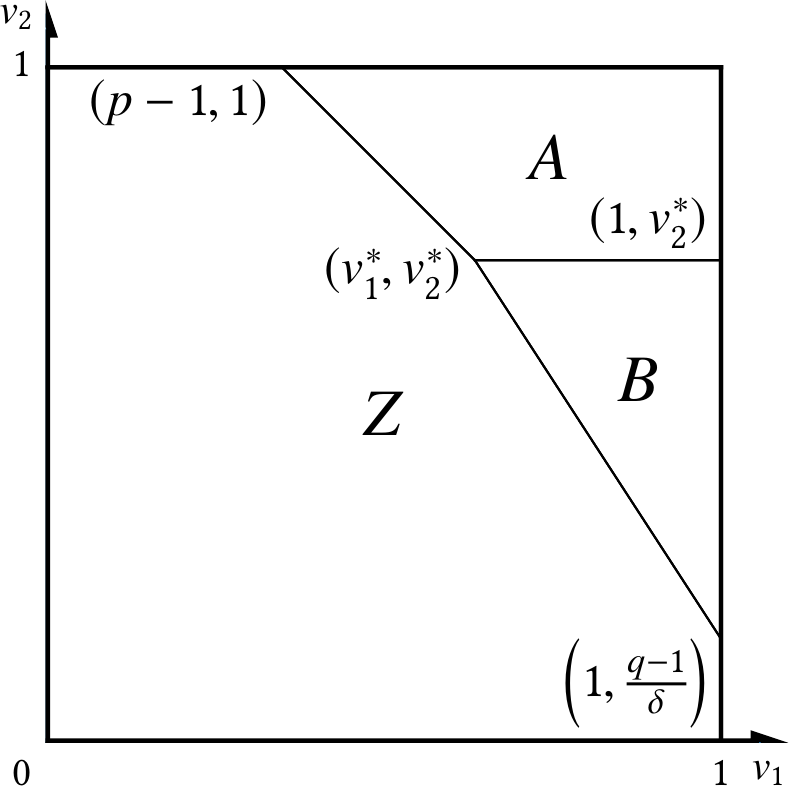}
      \label{sfig:case3}}
    \caption{Three possible cases for the proof of \autoref{thm:3menu}.}
    \label{fig:3mcase}
  \end{figure}
  \begin{proof}[Proof of \autoref{thm:3menu}]
    By \autoref{lem:3menubundle}, we can fix $\alpha = 1$ and $\beta = 1$.
    Moreover, without loss of generality, we could focus on the cases with $p >
    q$. Otherwise, the menu $B$ will be dominated by menu $A$ and menu $Z$,
    i.e., $S_B = \emptyset$, hence reduced to a two-menu mechanism, where the
    optimal revenue is at most $2\sqrt6 / 9$.

    Similarly, by \autoref{lem:pavlov}, we can fix one of $\gamma$ and $\delta$
    to be $1$, without loss of generality, $\gamma = 1$. Note that in the case
    with $(\gamma, \delta) = (0, 0)$, menu $B$ will be dominated by menu $Z$,
    hence reduced to a two-menu mechanism again.

    Therefore, we remain to solve \eqref{eq:3menurevmax} with additional
    constraints: $\alpha = \beta = \gamma = 1$ and $p > q$.

    Now consider the values $\v = (v_1, v_2)$ in $S_A \cap S_B$, which must
    satisfy:
    \begin{align*}
      S_A \cap S_B: (v_1, v_2) \cdot (1, 1) - p
                      = (v_1, v_2) \cdot (1, \delta) - q.
      % \quad
      % \iff \quad v_2 = \frac{p - q}{1 - \delta}.
    \end{align*}
    Similarly,
    % \begin{gather*}
      $S_A \cap S_Z: (v_1, v_2) \cdot (1, 1) = p, % \quad
      S_B \cap S_Z: (v_1, v_2) \cdot (1, \delta) = q$,
    % \end{gather*}
    and hence
    % \begin{align*}
      $S_A \cap S_B \cap S_Z:
        v^*_1 = \frac{q - \delta p}{1 - \delta},
        v^*_2 = \frac{p - q}{1 - \delta}$.
    % \end{align*}
    Note that if $S_A$ or $S_B$ is empty, there would be only two menus and the
    revenue cannot be more than $2\sqrt{6} / 9$. Otherwise:
    \begin{itemize}
      \itemsep0em
      \item For $S_A$ not being empty, we must have $v^*_2 < 1$, hence:
            \begin{align}\label{eq:a}\tag{$\textsc{NonEmptyA}$}
              \frac{p - q}{1 - \delta} < 1;
            \end{align}
      \item For $S_B$ not being empty, we must have $v^*_1 < 1$, hence:
            \begin{align}\label{eq:b}\tag{$\textsc{NonEmptyB}$}
              \frac{q - \delta p}{1 - \delta} < 1.
            \end{align}
    \end{itemize}
    Based on the constraints \eqref{eq:a} and \eqref{eq:b}, there are three
    possible cases (see \autoref{fig:3mcase}). The solutions under these cases
    are summarized by the following lemmas.
    \begin{Lemma}[Case 1]\label{lem:case1}
      Conditional on $p \leq 1$, the optimal mechanism consists of asymmetric
      three menus $A: [(1, 1), 5 / 6]$, $B: [(1, 0), 2 / 3]$, $Z: [(0, 0), 0]$,
      and yields revenue $59 / 108$.
    \end{Lemma}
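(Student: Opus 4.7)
The plan is to solve the three-variable optimization in Case~1 by explicitly parametrizing the geometry, forming the revenue as a closed-form rational function of $(p,q,\delta)$, and then locating the maximizer via first-order analysis.

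First, I would pin down the shapes of the regions $S_A, S_B, S_Z$ in Case~1. Recall that after the reductions from \autoref{lem:3menubundle} and \autoref{lem:pavlov} we have $\alpha=\beta=\gamma=1$, so only $p$, $q$, and $\delta$ remain free; and Case~1 is distinguished (see \autoref{sfig:case1}) by both the line $v_1+v_2=p$ and the line $v_1+\delta v_2=q$ crossing the axes strictly inside the unit square, which the hypothesis $p\le 1$ together with the non-emptiness constraints \eqref{eq:a} and \eqref{eq:b} enforce. The two lines meet at $(v^*_1,v^*_2)=\bigl((q-\delta p)/(1-\delta),(p-q)/(1-\delta)\bigr)$. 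Elementary integration then gives
\begin{align*}
|S_Z| &= \tfrac{p^2}{2}-\tfrac{(p-q)^2}{2(1-\delta)}, \\
|S_B| &= \tfrac{(1-q)(p-q)}{1-\delta}+\tfrac{\delta(p-q)^2}{2(1-\delta)^2}, \\
|S_A| &= 1-|S_B|-|S_Z|,
\end{align*}
so that $\Rev(p,q,\delta)=p|S_A|+q|S_B|$ is a rational function of the three variables, to which the usual calculus tools apply.

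Next, I would impose the first-order conditions. A direct computation shows that $\partial\Rev/\partial q=0$ together with $p>q$ reduces to a linear relation among $p$, $q$, and $\delta$; plugging this into $\partial\Rev/\partial p=0$ yields a second relation. Specializing to $\delta=0$ this system is easily solved and gives $q=2/3$, $p=5/6$, and evaluating $\Rev$ there gives $59/108$. A short check also shows $\partial\Rev/\partial\delta=0$ at that same point, so $(5/6,2/3,0)$ is a genuine critical triple (interior in $(p,q)$, boundary in $\delta$).

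The hard part will be to promote this critical point to a global maximum of $\Rev$ over Case~1's feasible region. I would pursue two complementary strategies: (i) enumerate the critical set by solving the joint system $\nabla\Rev=\mathbf{0}$ (a bounded polynomial system in three variables, so only finitely many candidates arise) and compare their objective values; or (ii) eliminate $p$ and $q$ using the two $(p,q)$-FOCs to obtain the optimal revenue along the critical curve as a function of $\delta$ alone, and verify that this one-variable function is maximized at $\delta=0$. Either route reduces to a bounded algebraic computation. Finally, I would compare $59/108$ against the symmetric two-menu benchmark $2\sqrt6/9$ of \autoref{thm:symm3menu}, together with the degenerate boundary configurations where $S_A=\emptyset$ or $S_B=\emptyset$, to rule out all reductions to fewer than three menus; this last step is straightforward inequality checking.
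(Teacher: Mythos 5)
Your setup is correct and matches the paper's: the area formulas for $|S_Z|$ and $|S_B|$ agree with the paper's trapezoid/triangle computations after simplification, and the critical triple $(p,q,\delta)=(5/6,2/3,0)$ with value $59/108$ is exactly the optimizer the paper finds. The gap is in the step you yourself flag as "the hard part": establishing that this point is the \emph{global} maximum over Case 1's feasible region. Neither of your two strategies, as described, closes it. Strategy (i) --- enumerate solutions of $\nabla\Rev=\mathbf{0}$ and compare values --- is not sufficient for a constrained maximization: the maximum of $\Rev$ over Case 1 could a priori sit on a face of the feasible region where $\nabla\Rev\neq\mathbf{0}$ (the faces $p=1$, $q=0$, $\delta\in\{0,1\}$, $v^*_1=1$, $v^*_2=1$, not merely the degenerate configurations $S_A=\emptyset$ or $S_B=\emptyset$ that you propose to check). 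Indeed the optimum lies on the face $\delta=0$; it happens to also be an unconstrained critical point, but you cannot know in advance that no other face carries a larger boundary maximum without examining each one. Strategy (ii) has the same defect restricted to $(p,q)$. In both cases you assert the remaining work is "a bounded algebraic computation" without performing it, so the proof is not complete as written.

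For comparison, the paper sidesteps critical-point enumeration entirely with a chain of globally valid, successively tight upper bounds: $\Rev$ is a quadratic in $1/(1-\delta)$, so completing the square gives $\Rev\le\frac12\bigl(2p-p^3+(p-q)(p+q/2-1)^2\bigr)$ with equality iff $1-\delta=2(p-q)/(2p+q-2)$; AM--GM on the three factors $(p-q),(p+q/2-1),(p+q/2-1)$ then eliminates $q$ (equality iff $q=2/3$); and the result is a cubic (in fact, after simplification, a concave quadratic) in $p$ maximized at $p=5/6$. Each inequality holds throughout the case, so global optimality and uniqueness of the maximizer fall out automatically. If you want to salvage your route, you would need either to carry out the full boundary-by-boundary analysis, or to replace the FOC argument with bounds of this kind.
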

    \begin{Lemma}[Case 2]\label{lem:case2}
      Conditional on $p \geq 1 > q$, the optimal mechanism yields revenue $14 /
      27$.
    \end{Lemma}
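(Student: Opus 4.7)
The plan is to reduce the constrained optimization of \eqref{eq:3menurevmax} under $\alpha = \beta = \gamma = 1$ and $p \geq 1 > q$ to an explicit one-variable calculation and verify that the maximum is exactly $14 / 27$. First I would describe the geometry of $S_A$, $S_B$, $S_Z$. Since $p \in [1, 2]$, the indifference line $v_1 + v_2 = p$ cuts off a triangle near the corner $(1, 1)$, entering $[0, 1]^2$ at $(p - 1, 1)$ on the top edge and $(1, p - 1)$ on the right edge. Since $q < 1$, the line $v_1 + \delta v_2 = q$ cuts the lower-left portion, and the $A$-versus-$B$ separator is horizontal at height $v_2^{*} = (p - q) / (1 - \delta) < 1$. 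The triple intersection $v^{*} = ((q - \delta p) / (1 - \delta),\; (p - q) / (1 - \delta))$ lies inside $[0, 1]^2$ by \eqref{eq:a} and \eqref{eq:b}, and whether $v^{*}$ is above or below the line $v_1 + v_2 = p$ determines the subcase. In each subcase one writes $|S_A|$ and $|S_B|$ as piecewise polynomials in $(p, q, \delta)$ and assembles $\Rev = p \cdot |S_A| + q \cdot |S_B|$.

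The core of the argument is a two-step reduction to $p = 1$ and $\delta = 0$. For the first step, the bundle-only benchmark $p (2 - p)^2 / 2$ is strictly decreasing on $[1, 2]$, and the presence of $B$ does not overturn this monotonicity because each type lost by $A$ when $p$ increases moves either to $S_B$ at the strictly lower price $q$ or to $S_Z$ at price $0$; this yields $\partial \Rev / \partial p \leq 0$ throughout the Case~2 region, so the maximum is attained at $p = 1$. For the second step, I would compute $\partial \Rev / \partial \delta$ at $p = 1$ and the optimal $q(\delta)$ and show it is nonpositive, forcing $\delta = 0$. With both reductions, menu $B$ reduces to $[(1, 0), q]$ and a direct calculation gives $|S_A| = q - q^2 / 2$ and $|S_B| = (1 - q)^2$, hence $\Rev(q) = 2 q - \tfrac{5}{2} q^2 + q^3$; its stationary points are $q = 2 / 3$ and $q = 1$, with $\Rev(2/3) = 14 / 27$ realized by $A = [(1, 1), 1]$ and $B = [(1, 0), 2 / 3]$.

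The main obstacle I anticipate is the piecewise nature of the area formulas: depending on the relative positions of the three indifference lines and the boundary of $[0, 1]^2$, several geometric subcases appear with slightly different expressions for $|S_A|$ and $|S_B|$. Most subcases should be ruled out immediately using the non-emptiness constraints \eqref{eq:a} and \eqref{eq:b}, but the monotonicity claims in $p$ and $\delta$ must be verified in each surviving subcase. While each verification is routine calculus, the bookkeeping --- enumerating the surviving subcases, writing down the area formulas, and checking the signs of the relevant partial derivatives --- is where the real work of the proof lies.
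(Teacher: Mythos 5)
Your endpoint is correct and your final one-variable computation ($p=1$, $\delta=0$, $\Rev(q)=2q-\tfrac52 q^2+q^3$, maximized at $q=2/3$ with value $14/27$ by the menu $A=[(1,1),1]$, $B=[(1,0),2/3]$) matches the optimum the paper finds. But your elimination order is genuinely different. The paper never argues monotonicity in $p$ pointwise: it writes $\Rev$ explicitly as $\tfrac12\bigl(3p-2p^2+(p-q)^2(-\tfrac{p-q}{(1-\delta)^2}+\tfrac{2p+q-2}{1-\delta})\bigr)$, completes the square in $1/(1-\delta)$ to eliminate $\delta$, then completes the cube to force $q=2/3$, obtaining the upper bound $\tfrac1{54}(27p^3-108p^2+117p-8)$; only then does it observe that the feasibility constraint $v_2^*\le 1$ restricts $p$ to $[1,5/3]$, on which this cubic is decreasing (its critical points are $(4\pm\sqrt3)/3$, both outside the interval), so the maximum sits at $p=1$. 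Your route ($p$ first, then $\delta$, then $q$) buys a cleaner terminal computation, while the paper's buys a closed-form bound in which the $p$-monotonicity is immediate.

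The weak spot is your justification of $\partial\Rev/\partial p\le 0$. The argument ``each type lost by $A$ moves to a lower price'' ignores the infra-marginal gain $|S_A|\,dp$ collected from buyers who stay in $S_A$; the sign of $\partial\Rev/\partial p$ is a genuine trade-off between this gain and the boundary losses, and moreover part of $S_A$'s moving boundary is the $A$--$B$ interface, where the loss per unit mass is only $p-q$, \emph{smaller} than in the bundle-only benchmark. The claim is in fact true --- writing $s=(p-q)/(1-\delta)\in(0,1]$ one gets $\partial\Rev/\partial p=\tfrac12(3-4p+2(3p-2)s-3s^2)$, whose maximum over $s$ is $(9p^2-24p+13)/6<0$ for $p\in[1,(4+\sqrt3)/3)$, with the constraint $s\le1$ covering the rest --- but this computation, not the economic heuristic, is the proof, and you have deferred exactly this to ``routine calculus.'' One further small slip: the triple point $v^*$ always satisfies $v_1^*+v_2^*=p$ (the three indifference lines are concurrent on the line $v_1+v_2=p$), so ``whether $v^*$ is above or below that line'' cannot be the subcase distinction; the relevant distinction is whether $v^*$ and the lines' exit points lie inside the unit square.
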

    \begin{Lemma}[Case 3]\label{lem:case3}
      Conditional on $p > q > 1$, the revenue of the mechanism is not more than
      $1 / 2$.
    \end{Lemma}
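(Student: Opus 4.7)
The plan is to parameterize Case 3 by just three scalars and then bound $\Rev$ by $1/2$ via a sub-case split followed by a single-variable optimization. By \autoref{lem:3menubundle} and \autoref{lem:pavlov}, we may fix $\alpha = \beta = \gamma = 1$, so the mechanism is determined by $p > q > 1$ and $\delta \in [0, 1)$, with $A = [(1, 1), p]$ and $B = [(1, \delta), q]$. I would first write out the two region areas from their definitions and observe that $S_B$ is nonempty exactly when $\delta(p-1) \geq q - 1$; this naturally splits the analysis into two subcases.

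In the subcase $\delta(p-1) < q - 1$, menu $B$ is dominated and the mechanism degenerates to the single-bundle menu at price $p > 1$, with revenue $p(2 - p)^2/2$. A direct derivative check shows this function is decreasing on $[1, 2]$ (its derivative $(2-p)(2-3p)/2$ is nonpositive on that interval), so it attains its maximum $1/2$ at $p = 1$ and $\Rev \leq 1/2$ follows for free.

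The main work is the other subcase, where both regions are nontrivial. I would introduce the reparameterization $a = p - 1$, $b = q - 1$, and $u = (\delta a - b)/(1 - \delta) \geq 0$. A short calculation gives the clean identities $v_2^* - a = \delta v_2^* - b = u$, from which $|S_A| = ((1-a)^2 - u^2)/2$ and $|S_B| = u^2/(2\delta)$. Substituting $b = \delta a - u(1 - \delta)$ and using the identity $(1+b) - \delta(1+a) = (1-\delta)(1-u)$, the revenue collapses into the tidy form
\[
\Rev = \tfrac{1}{2}(1+a)(1-a)^2 + \tfrac{1}{2\delta}\, u^2 (1 - \delta)(1 - u).
\]
The first term is already $\leq 1/2$ with equality only at $a = 0$, so only the second term requires control. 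Since $(1-\delta)/\delta$ is decreasing in $\delta$ and the constraint $b \geq 0$ forces $\delta \geq u/(u + a)$, the endpoint $\delta = u/(u + a)$ gives $(1 - \delta)/\delta = a/u$, so the second term is bounded by $a u (1 - u)/2$, maximized at $u = 1/2$ to yield $a/8$.

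It then suffices to verify the polynomial inequality $(1+a)(1-a)^2 + a/4 \leq 1$ for $a \in (0, 1)$, which simplifies to $a(4a^2 - 4a - 3) \leq 0$; the quadratic factor has roots $3/2$ and $-1/2$ and is thus negative on $(0, 1)$, closing the argument. The main obstacle I anticipate is identifying the reparameterization: the raw objective in $(p, q, \delta)$ has several cases and messy cross-terms, but once $u$ is introduced and the identity for $(1 + b) - \delta(1 + a)$ is spotted, the remaining bound is routine.
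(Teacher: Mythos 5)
Your proof is correct, and it reaches the paper's conclusion by a genuinely different bounding strategy. The geometric setup is the same as the paper's: both compute $|S_A| = \frac12\left((2-p)^2 - \left(\tfrac{p-q}{1-\delta}-(p-1)\right)(1-v_1^*)\right)$ and $|S_B| = \frac12\left(\tfrac{p-q}{1-\delta}-\tfrac{q-1}{\delta}\right)(1-v_1^*)$, which in your variables are exactly $((1-a)^2-u^2)/2$ and $u^2/(2\delta)$. Where you diverge is in the optimization: the paper proceeds by a chain of term-by-term relaxations (replacing $p$ by $q$ in a negative factor, then bounding $1-v_1^*$ by $2-p$ and $\delta$ by $1$), reducing to the cubic $(p^3-5p^2+7p-2)/2$ whose supremum over $p\in(1,2]$ is $1/2$ at $p=1$. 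You instead exploit the identity $(1+b)-\delta(1+a)=(1-\delta)(1-u)$ to split $\Rev$ into the pure-bundle revenue $\frac12(1+a)(1-a)^2$ plus a nonnegative correction $\frac{u^2(1-u)(1-\delta)}{2\delta}$, and then bound the correction by $a/8$ using the constraint $\delta \geq u/(u+a)$ coming from $q\geq 1$; the final cubic inequality $a(4a^2-4a-3)\leq 0$ closes it. Your route is tighter and more transparent about where the slack is. Two small points to make explicit when writing it up: (i) your area formulas require $v_2^*\leq 1$ (equivalently $u\leq 1-a$, which is constraint \eqref{eq:a} already imposed in the proof of \autoref{thm:3menu}), which also guarantees $1-u>0$ so the correction term is nonnegative; and (ii) the degenerate case where $S_A$ is empty (e.g.\ $p>2$ or $u>1-a$) is dispatched in the body of the theorem's proof, so your subcase split on $S_B$ alone suffices in context, but a one-line remark that an empty $S_A$ reduces to a single menu at price $q>1$ with revenue $q(2-q)^2/2\leq 1/2$ would make the lemma self-contained.
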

    In summary, the optimal mechanism with at most $3$ menus is to sell the
    first item at price $2 / 3$ or the bundle of two items at price $5 / 6$,
    yielding revenue $59 / 108$.
  \end{proof}

\section{Performance}\label{sec:perf}

  \paragraph{Setup}
  As our method is very efficient, we were able to perform our experiments on a
  laptop ($13$-inch MacBook Pro, with 2.5 GHz Intel Core i7 CPU, 16 GB RAM)
  using TensorFlow. To solve the problems with continuous value distributions in
  finite neural networks, we simply discretize the value space. In particular,
  the discretization is parameterized by $N$, which is the number of the
  intervals (with length $1 / N$) in unit length. In other words, there are
  $N^2$ squares of size $1 / N$ by $1 / N$ in any unit square. By default, we
  set $N = 100$.

  \subsection{Efficiency and Accuracy: Compared with Linear Programs}

  \begin{figure}[!htb]
    % \phantom{1} \hfill %
    \subfigure[Ours with linear program.]
      {\includegraphics[width=0.42\textwidth]{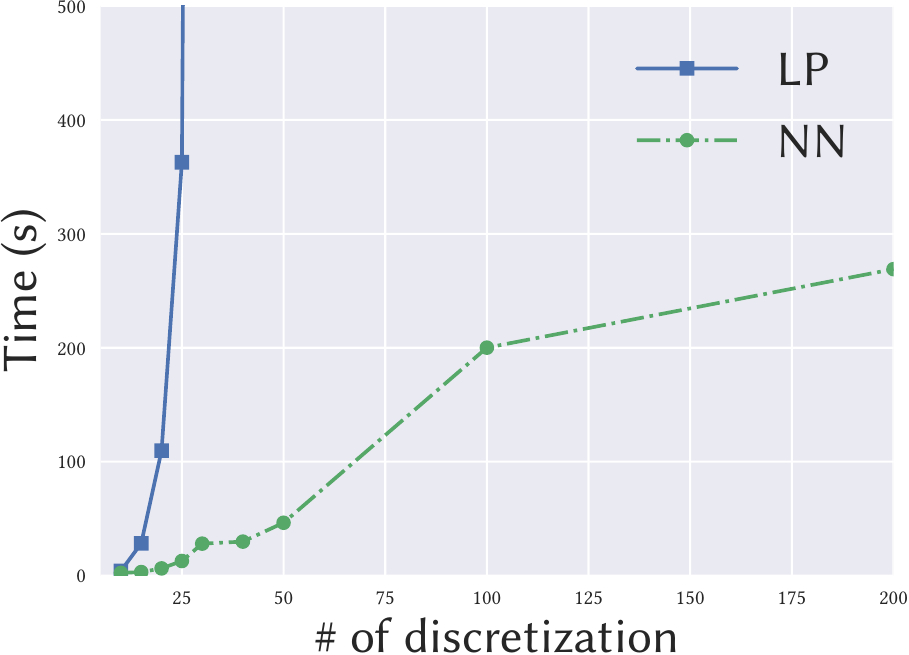}
      \label{sfig:time-lp-nn}} %
    \hfill %
    \subfigure[Average per iter.]
      {\includegraphics[width=0.42\textwidth]{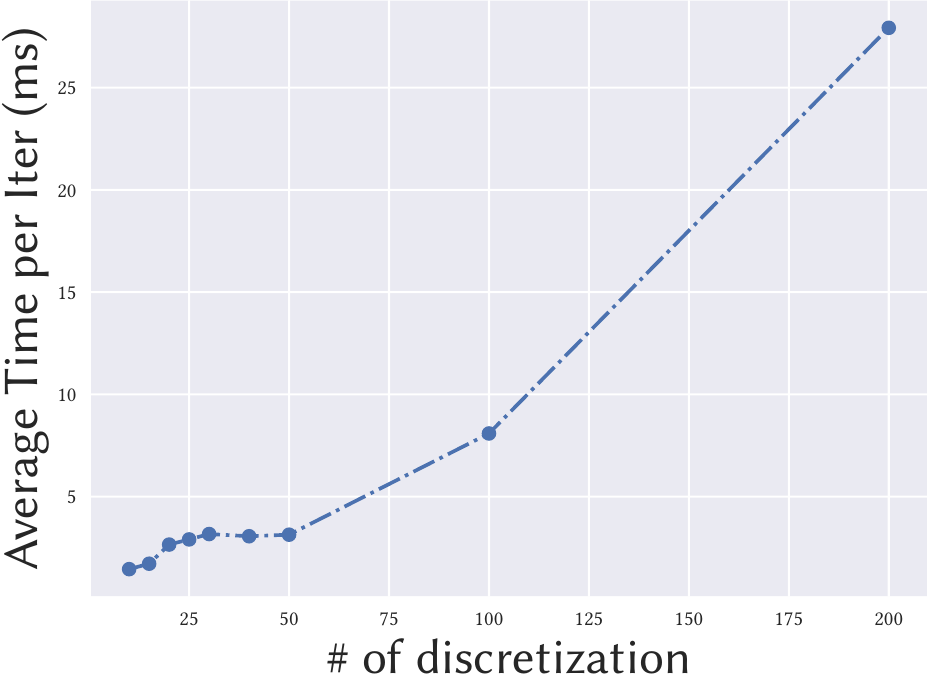}
      \label{sfig:time-nn-avg}} %
    \hfill %\phantom{1}
    \subfigure[$\Rev / \Opt\Rev$ vs \# of iters.]
      {\includegraphics[width=0.42\textwidth]{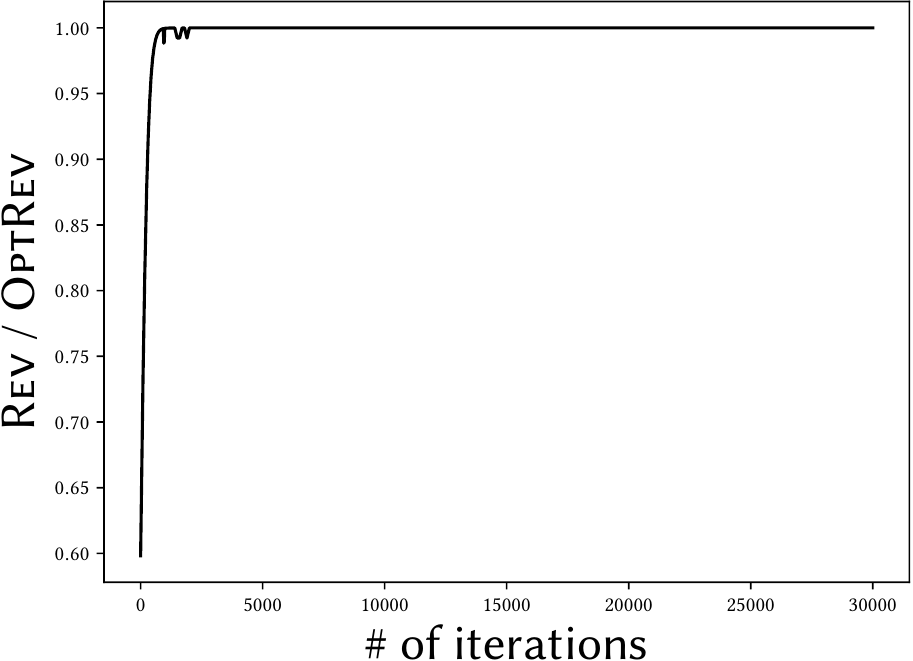}
      \label{sfig:revconv}} %
    \hfill %
    \subfigure[$1 - \frac{\Rev}{\Opt\Rev}$ vs \# of iters.]
      {\includegraphics[width=0.42\textwidth]{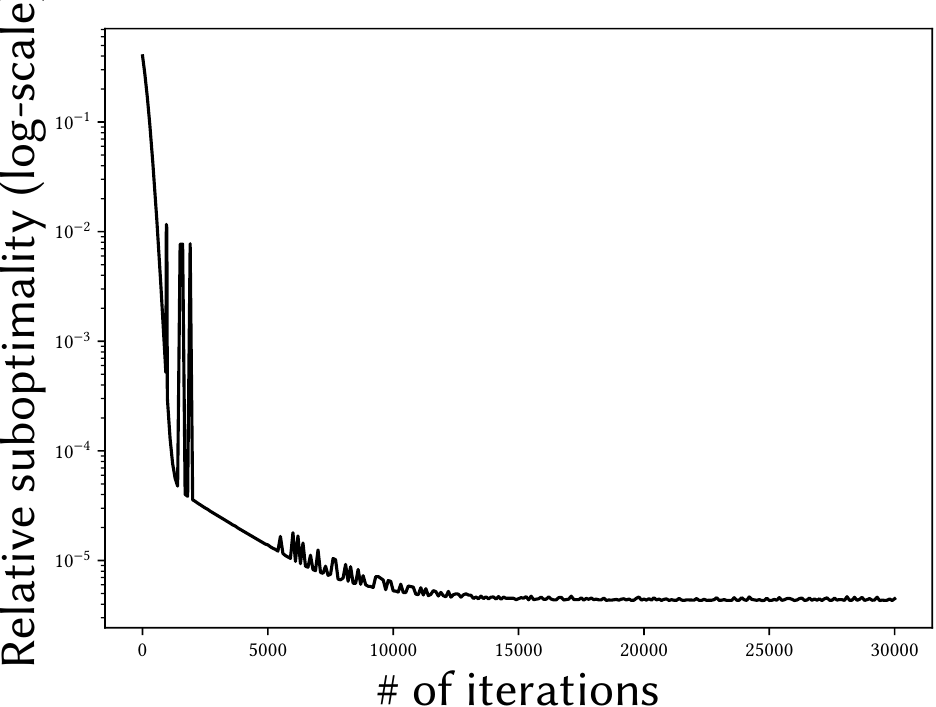}
      \label{sfig:errconv}}
    \caption{Running time and converge speed.}
    \label{fig:eacs}
  \end{figure}

  We compare the running time of our method and the straightforward linear
  program approach for the $U[0, 1]^2$ setting. In the linear program, the
  variables are the allocation $x_1, x_2$ and payment $p$ of the values on each
  discretized grid (hence $O(N^2)$ variables) and the constraints are the
  \ref{eq:ic} and \ref{eq:ir} constraints (hence $O(N^4)$ constraints). We use
  the basic PuLP package in Python to solve the linear programs. In
  \autoref{sfig:time-lp-nn}, we compared the execution time of solving the
  linear programs with specific $N$'s ($N = 10, 15, 20, 25, 30$) and the
  execution time of training our neural network to (i) achieve a mechanism with
  at least the same level of acurracy as the one given by the linear program
  (for $N \leq 30$), and (ii) converge (for $N = 40, 50, 200$). Note that the
  running time of the linear program approach grows very rapidly: for $N = 30$,
  it takes $51$ mins and we are not able to apply it to $N \geq 40$. In
  contrast, the training time of our neural network grows much slower (less than
  $5$ mins for $N = 200$, i.e., buyer distribution support of size $40000$).

  One key advantage of our approach over the linear program is that our problem
  size grows linearly in terms of the support size of the buyer's distribution
  (i.e., $O(N^2)$), while the size of the linear program grows quadratically in
  terms of the support size (i.e., $O(N^4)$). In \autoref{sfig:time-nn-avg}, we
  also plot the average training time for each iteration, which is in $1 \sim
  30$ milliseconds.

  \autoref{sfig:revconv} and \autoref{sfig:errconv} illustrates that our method
  converges to the optimal very fast. The relative error also drops very fast
  even in the log-scale plot. In particular, $\Rev$ is evaluated on the original
  continuous distribution $U[0, 1]^2$. Hence the gap between $\Rev$ and
  $\Opt\Rev$ cannot drop to zero as we discretized the value distribution.

  \paragraph{Conclusion}
  So far, we have shown that our approach is much more efficient than the linear
  program appraoch and hence much stronger scalability as well. To completement
  the time efficiency, we also show in \autoref{sec:compacc} that our method
  also dominates the linear program approach in terms of accuracy.

  % \begin{figure}
  %   \phantom{1} \hfill %
  %   \subfigure[Mechanism $\Rev$ vs \# of iterations.]
  %     {\includegraphics[width=0.35\textwidth]{Resources/menus4c=1-crop.pdf}
  %     \label{sfig:revconv}} %
  %   \hfill %
  %   \subfigure[$1 - \Rev / \Opt\Rev$ vs \# of iterations.]
  %     {\includegraphics[width=0.35\textwidth]{Resources/menus4c=1accur-crop.pdf}
  %     \label{sfig:errconv}}
  %   \hfill \phantom{1}
  %   \caption{Convergence speed.}
  %   \label{fig:converge}
  % \end{figure}

\bibliographystyle{plainnat}
\bibliography{gam}

% Appendix
\appendix
\section{Missing Proofs}\label{sec:proofs}
	\subsection{Proof of Lemma \ref{lem:left_bottom_trans}}\label{appen:left_bottom_trans}
	\begin{proof}[Proof of Lemma \ref{lem:left_bottom_trans}]
		Denote the upper boundary of $R_3$ by $B$. For each $v\in B$, define
		\begin{gather*}
			R_L=\{v'\in R_3~|~v'_1\le v_1 \}\quad\text{and}\quad R_U=\{v'\in R_3~|~v'_2\ge v_2 \}.
		\end{gather*}
		For any line $l_v$ through $v$ with a non-negative slope (or infinity), denote the part of $R_3$ that is above the line by $R_v$. It is easy to verify that $\mu_+(R_U)\ge \mu_-(R_U)$ and $\mu_+(R_L)\le \mu_-(R_L)$. Thus there exists a line $l_v^*$ such that the corresponding $R_v^*$ satisfies $\mu_+(R^*_v)=\mu_-(R_v^*)$.
		
		Now we show that for any two $v$ and $v'$, the intersection point of $l^*_v$ and $l^*_{v'}$ is not inside $R_3$. In Figure \ref{fig:R_3}, the three regions $R_1$, $R_2$, $R_3$ are the quadrangles $OIDE$, $YCDI$ and $CDEX$, respectively. Let points $A$, $B$ correspond to the value profiles $v$ and $v'$. Assume, on the contrary, that the intersection point of $l^*_v$ (line $AA'$) and $l^*_{v'}$ (line $BB'$) is inside $R_3$. Then we have:
		\begin{gather}
			\mu_+(ACDA')=\mu_-(ACDA')\quad\text{and}\quad\mu_+(BCDB')=\mu_-(BCDB')\tag{1}\label{eq:eq}.
		\end{gather}
		
		\begin{figure}[h]
			\centering
			\includegraphics[width=0.8\linewidth]{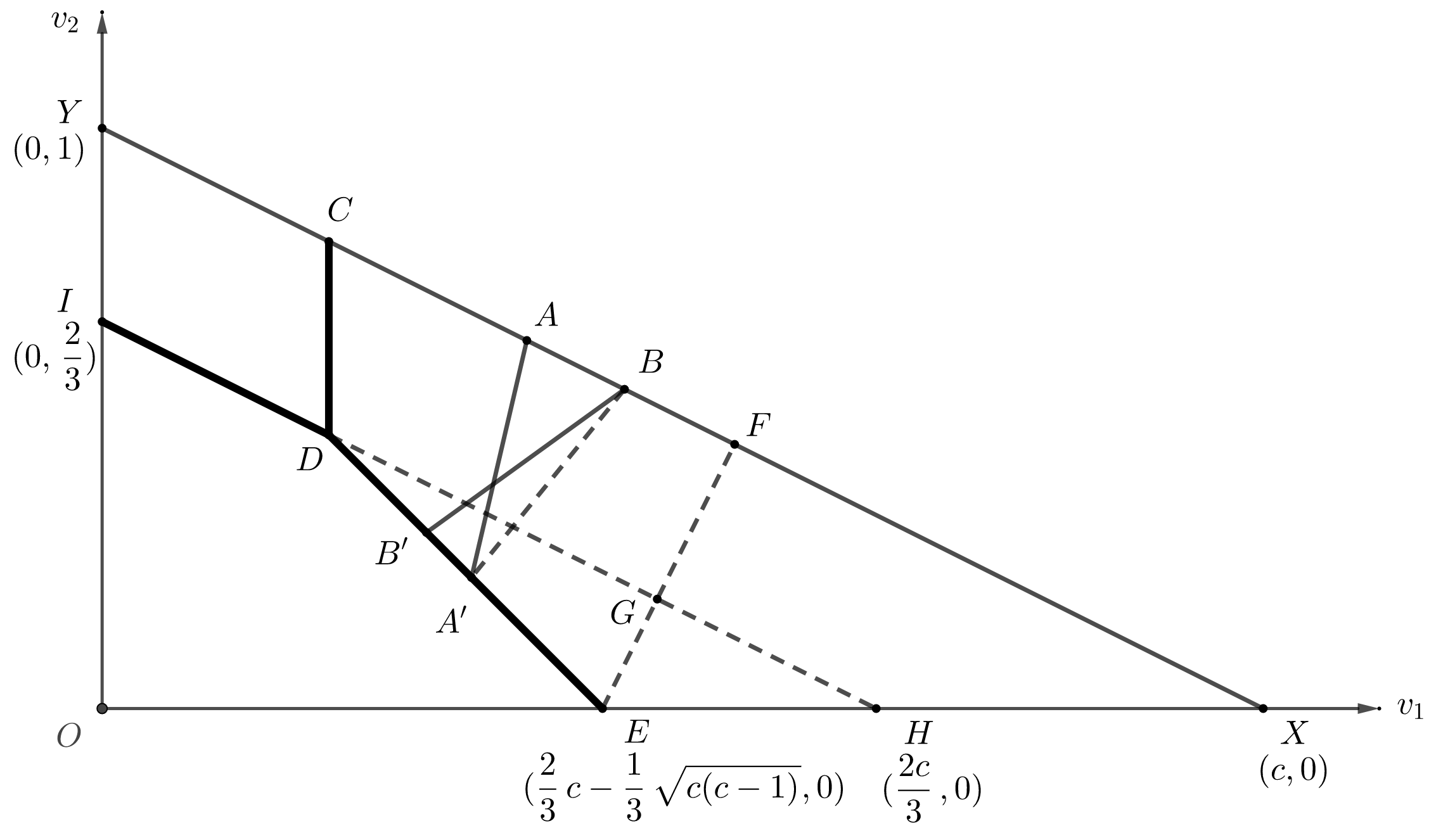}
			\caption{The intersection point of $l^*_v$ and $l^*_{v'}$}
			\label{fig:R_3}
		\end{figure}
		
		Note that $\mu_+$ is only distributed along the line $CX$ inside $R_3$. Thus
		\begin{gather*}
			\mu_+(BCDA')=\mu_+(BCDB')=\mu_-(BCDB')\tag{2}\label{eq:mu_pos}.
		\end{gather*}
		However, $\mu_-$ has a positive density inside $R_3$. Therefore, we have
		\begin{gather*}
			\mu_-(BCDA')>\mu_-(BCDB')\tag{3}\label{eq:mu_neg}.
		\end{gather*}
		Combining equations \eqref{eq:eq}, \eqref{eq:mu_pos} and \eqref{eq:mu_neg}, we obtain:
		\begin{gather*}
			\mu_+(BAA')<\mu_-(BAA')\tag{*}\label{eq:star}.
		\end{gather*}
		
		Since $\mu_+$ is uniformly distributed along the line $CX$ with density , we have that $\mu_+(BAA')=\frac{2}{\sqrt{1+c^2}}\cdot l(AB)$, where $l(\cdot)$ denotes the length of a segment. Similarly, $\mu_-(BAA')=\frac{6}{c}\cdot S(BAA')$, where $S(BAA')$ is the area of triangle $BAA'$. Let $h$ be the altitude of the triangle $BAA'$ with respect to the base $AB$. So
		\begin{align*}
			S(BAA')&=\frac{1}{2}l(AB)\cdot h\\
			&\le \frac{1}{2}l(AB)\cdot l(EF)\\
			&=\frac{1}{2}l(AB)\cdot \frac{c-\left(\frac{2}{3}c-\frac{1}{3}\sqrt{c(c-1)}\right)}{\sqrt{1+c^2}}\\
			&=l(AB)\cdot \frac{c+\sqrt{c(c-1)}}{6\sqrt{1+c^2}}
		\end{align*}
		where line $EF$ is perpendicular to line $CX$.  Then
		\begin{align*}
			\mu_-(BAA')&=\frac{6}{c}\cdot S(BAA')\\
			&\le \frac{6}{c}\cdot l(AB)\cdot \frac{c+\sqrt{c(c-1)}}{6\sqrt{1+c^2}}\\
			&=l(AB)\cdot \frac{c+\sqrt{c(c-1)}}{c\sqrt{1+c^2}}\\
			&\le l(AB)\cdot \frac{2}{\sqrt{1+c^2}}\\
			&=\mu_+(BAA')
		\end{align*}
		which contradicts to Equation \eqref{eq:star}.
		
		Consider the set of lines $K=\{l^*_v~|~v\in B \}$. Since we have already shown that no two of these lines have an intersection point inside $R_3$, the line set $K$ actually cuts the region $R_3$ into ``slices''. And for any ``slice'' $s$, we have $\mu_+(s)=\mu_-(s)$. Therefore, for each point in $B$, we can find its corresponding ``slice'' and move its mass uniformly to all the points inside the ``slice''. And since $l^*_v$ always has a non-negative (or infinite) slope, we conclude that whenever there is a mass transport from $v$ to $v'$, we have $v_i\ge v'_i, \forall i$.
		
	\end{proof}

  \subsection{Proof of \autoref{thm:symm3menu}}

  \begin{proof}[Proof of \autoref{thm:symm3menu}]
    Since $(0, 0), 0$ must be one of the three symmetric menus, the other two
    must have the form of $(\alpha, \beta), p$ and $(\beta, \alpha), p$.

    Without loss of generality, assume that $\alpha \geq \beta$ and $\alpha > 0$
    (otherwise $\alpha = \beta = 0$, yielding $0$ revenue). Therefore, if
    $\alpha v_1 + \beta v_2 < p$ and $\beta v_1 + \alpha v_2 < p$, the buyer
    will choose the zero menu, $(0, 0), 0$.

    Consider the following two cases: (i) $p \leq \alpha$ and (ii) $p \geq
    \alpha$.

    If $p \leq \alpha$:
    \begin{align*}
        \Pr[\alpha v_1 + \beta v_2 < p \wedge \beta v_1 + \alpha v_2 < p]
      = 2\Pr[\alpha v_1 + \beta v_2 < p \wedge v_1 \geq v_2]
      = \frac{p}{\alpha} \cdot \frac{p}{\alpha + \beta}.
    \end{align*}
    Then the revenue is
    \begin{align*}
      \Rev = p \cdot \Pr[\text{the buyer didn't choose the zero menu}]
      = \left(1 - \frac{p^2}{\alpha(\alpha + \beta)}\right) \cdot p.
    \end{align*}
    Hence
    \begin{align*}
      \Rev = \left(1 - \frac{p^2}{\alpha^2}
                \cdot \frac{1}{1 + \beta / \alpha}\right) \cdot p
        \leq (1 - p^2 / 2) \cdot p
        = \sqrt{(1 - p^2 / 2) \cdot (1 - p^2 / 2) \cdot p^2}
        \leq 2\sqrt{6} / 9,
    \end{align*}
    where the first inequality is reached if and only if $\alpha = \beta = 1$
    and the second inequality is reached if and only if $1 - p^2 / 2 = p^2 \iff
    p = \sqrt{2 / 3}$.

    If $p \geq \alpha$:
    \begin{align*}
      \Pr[\alpha v_1 + \beta v_2 < p \wedge \beta v_1 + \alpha v_2 < p]
    = 2\Pr[\alpha v_1 + \beta v_2 < p \wedge v_1 \geq v_2]
    = \frac{p}{\alpha} \cdot \frac{p}{\alpha + \beta}
      + \left(\frac{p}{\alpha} - 1\right)^2.
    \end{align*}
    Hence
    \begin{align*}
      \Rev = \left(1 - \frac{p^2}{\alpha^2}
                \cdot \frac{1}{1 + \beta / \alpha}
                + \left(\frac{p}{\alpha} - 1\right)^2\right) \cdot p
           \leq \left(1 - \frac{p^2}{\alpha^2}
                     \cdot \frac12
                     + \left(\frac{p}{\alpha} - 1\right)^2\right) \cdot p
           = \frac{p}2\left(\frac{p^2}{\alpha^2} - 4 \cdot \frac{p}{\alpha} + 4
              \right)
    \end{align*}

    Let $x = p / \alpha \geq 1$, the right-hand-side becomes
    \begin{align*}
      \frac{\alpha}2 (x^3 - 4x^2 + 4x) \leq \frac12 (x^3 - 4x^2 + 4x).
    \end{align*}
    Then consider the first order derivative of $x^3 - 4x^2 + 4x$:
    \begin{align*}
      (x^3 - 4x^2 + 4x)' = 3x^2 - 8x + 4 = (3x - 2)(x - 2),
    \end{align*}
    the local maximum is reached at $x = 2 / 3$. Note that in this case, $x = p
    / \alpha \geq 1$. Hence the maximum revenue contional on $p \geq \alpha$ is
    reached when $p = \alpha = \beta = 1$, where $\Rev = 1/2 < 2\sqrt{6} / 9$.
  \end{proof}

  \subsection{Proofs for \autoref{thm:3menu}}

    \subsubsection{Proof of \autoref{lem:case1}}

    \begin{proof}[Proof of \autoref{lem:case1}]
      When $p \leq 1$, consider:
      \begin{gather*}
        S_B \cap S_Z \cap \{v_2 = 0\}: v_1 = q, v_2 = 0  \\
        S_A \cap S_B \cap \{v_1 = 1\}: v_1 = 1,
          v_2 = \frac{p - q}{1 - \delta}.
      \end{gather*}

      Note that $q \leq p \leq 1$:
      \begin{align*}
        |S_B| &~= \frac12 \cdot \left((1 - q) \cdot v^*_2
                    + \frac{p - q}{1 - \delta} \cdot (1 - v^*_1)\right)  \\
        |S_Z| &~= \frac12 \cdot \left(p^2 - (p - q) \cdot v^*_2\right)  \\
        |S_A| &~= 1 - |S_B| - |S_Z|
      \end{align*}

      Then the revenue is
      \begin{align*}
        \Rev ~&= (1 - |S_Z|) \cdot p - |S_B| \cdot (p - q)  \\
          &= \frac12\left(2p - p^3
                + (p - q)p \cdot v^*_2
                - (1 - q)(p - q) \cdot v^*_2
                - \frac{(p - q)^2}{1 - \delta} \cdot (1 - v^*_1)\right)  \\
          &= \frac12\left(2p - p^3
              - \frac{(p - q)^3}{(1 - \delta)^2}
              + \frac{(p - q)^2(2p + q - 2)}{1 - \delta}\right)
              \\
          &= \frac12\left(2p - p^3 + (p - q)^3 \cdot \left(-\left(\frac1{1 - \delta}
                - \frac{2p + q - 2}{2(p - q)}\right)^2 + \left(\frac{2p + q - 2}{2(p - q)}\right)^2
              \right)\right)  \\
          &\leq \frac12\left(2p - p^3 + (p - q)(p + q / 2 - 1)^2\right),
      \end{align*}
      where the upper bound is reached if and only if: (i) $p = q$, or (ii) $1 -
      \delta = 2(p - q) / (2p + q - 2)$. Remember that we have shown that $p \neq
      q$, hence we must have $1 - \delta = 2(p - q) / (2p + q - 2)$ and
      %
      % \begin{itemize}
      %   \item $p = q$, then $\Rev = p - p^3 / 2 \leq 2\sqrt6 / 9$ (see the proof
      %         of \autoref{thm:symm3menu}).
      %   \item or $1 - \delta = 2(p - q) / (2p + q - 2)$.
      % \end{itemize}
      %
      % Note that the optimal revenue is strictly larger than $2\sqrt6 / 9$. Hence
      % we must have $1 - \delta = 2(p - q) / (2p + q - 2)$ and
      \begin{align*}
        2\Rev ~&= 2p - p^3 + (p - q)(p + q / 2 - 1)^2
          = 2p - p^3 + (p - q) \cdot (p + q / 2 - 1) \cdot (p + q / 2 - 1)  \\
          &\leq 2p - p^3 + \left(\frac{(p - q) + (p + q / 2 - 1) + (p + q / 2 - 1)}3\right)^3
          = 2p - p^3 + (p - 2 / 3)^3,
      \end{align*}
      where the upper bound is reached if and only if $p - q = p + q / 2 - 1$ or
      equivalently $q = 2 / 3$.

      Substituting $q$ with $2 / 3$, we have
      \begin{align*}
        \Rev = -p^2 + 5/3p - 4 / 27 \leq -(p - 5 / 6)^2 + 25 / 36 - 4 / 27,
      \end{align*}
      and its local maximum is reached when $p = 5 / 6$, hence
      \begin{align*}
        \Rev = 59 / 108 \approx 0.546296 > 2\sqrt6 / 9 \approx 0.54433,
      \end{align*}
      and the menus are:
      \begin{align*}
        A: [(1, 1), 5 / 6] \quad B: [(1, 0), 2 / 3] \quad Z: [(0, 0), 0].
      \end{align*}
    \end{proof}

    \subsubsection{Proof of \autoref{lem:case2}}

    \begin{proof}[Proof of \autoref{lem:case2}]
      When $p > 1 \geq q$, consider:
      \begin{gather*}
        S_B \cap S_Z \cap \{v_2 = 0\}: v_1 = q, v_2 = 0  \\
        S_A \cap S_B \cap \{v_1 = 1\}: v_1 = 1,
          v_2 = \frac{p - q}{1 - \delta}.
      \end{gather*}

      Hence
      \begin{align*}
        |S_B| &~= \frac12 \cdot \left((1 - q) \cdot v^*_2
                    + \frac{p - q}{1 - \delta} \cdot (1 - v^*_1)\right)  \\
        |S_Z| &~= 1 - |S_A| - |S_B|  \\
        |S_A| &~= \frac12 \cdot \left((2 - p)^2 - \left(\frac{p - q}{1 - \delta}
                  - (p - 1)\right) \cdot (1 - v^*_1)\right).
      \end{align*}
      Then the revenue is
      \begin{align*}
        \Rev ~&= |S_A| \cdot p + |S_B| \cdot q
         = \frac12\left(3p - 2p^2 + (p - q)^2 \cdot \left(-\frac{p - q}{(1 - \delta)^2}
            + \frac{2p + q - 2}{1 - \delta}\right)\right)  \\
        &= \frac12\left(3p - 2p^2 + (p - q)^3 \cdot \left(-\left(\frac1{1 - \delta}
              - \frac{2p + q - 2}{2(p - q)}\right)^2
              + \left(\frac{2p + q - 2}{2(p - q)}\right)^2\right)\right)  \\
        &\leq \frac12\left(3p - 2p^2 + (p - q)(p + q / 2 - 1)^2\right)
         \leq 3p / 2 - p^2 + (p - 2/3)^3 / 2  \\
        &= \frac1{54}\left(27 p^3 - 108 p^2 + 117 p - 8\right),
      \end{align*}
      where the two inequalities are reached if and only if $1 - \delta = 2(p - q)
      / (2p + q - 2)$ and $q = 2 / 3$.

      Note that we have to ensure $v^*_2 \leq 1 \iff (p - q) / (1 - \delta) \leq
      1$, in other words,
      \begin{align*}
        \frac{p - 2 / 3}{(p - 2 / 3) / (p + 1 / 3 - 1)} \leq 1
          \iff p \leq 5 / 3.
      \end{align*}

      Now consider the maximum of $(27 p^3 - 108 p^2 + 117 p - 8) / 54$ with $p
      \in [1, 5 / 3]$, by the first order condition, the local maximum and minimum
      are reached at $p = (4 - \sqrt3) / 3 \approx 0.75598 < 1$ and $p = (4 +
      \sqrt3) / 3 \approx 1.91068 > 5 / 3$, respectively. Therefore, in this case,
      the revenue is decreasing in $p$ and hence the maximum revenue is reached at
      $p = 1$: $\Rev(p = 1) = 14 / 27 < 59 / 108$.
    \end{proof}

    \subsubsection{Proof of \autoref{lem:case3}}

    \begin{proof}[Proof of \autoref{lem:case3}]
      When $p > q > 1$, consider:
      \begin{gather*}
        S_B \cap S_Z \cap \{v_1 = 1\}: v_1 = 1, v_2 = \frac{q - 1}{\delta}  \\
        S_A \cap S_B \cap \{v_1 = 1\}: v_1 = 1, v_2 = \frac{p - q}{1 - \delta}.
      \end{gather*}

      Hence
      \begin{align*}
        |S_B| &~= \frac12 \cdot \left(\frac{p - q}{1 - \delta}
                    - \frac{q - 1}{\delta}\right) \cdot (1 - v^*_1)  \\
        |S_Z| &~= 1 - |S_A| - |S_B|  \\
        |S_A| &~= \frac12 \cdot \left((2 - p)^2 - \left(\frac{p - q}{1 - \delta}
                  - (p - 1)\right) \cdot (1 - v^*_1)\right).
      \end{align*}

      Note that $(q \delta - p) / (1 - \delta) < 1$ by \eqref{eq:b}, hence
      \begin{align*}
        \frac{p - q}{1 - \delta} = p - \frac{q \delta - p}{1 - \delta} > p - 1.
      \end{align*}

      Then the revenue is
      \begin{align*}
        \Rev ~&= |S_A| \cdot p + |S_B| \cdot q  \\
         &= \frac12\left((2 - p)^2 \cdot p + \left(-\left(\frac{p - q}{1 - \delta}
                   - (p - 1)\right) \cdot p + \left(\frac{p - q}{1 - \delta}
                  - \frac{q - 1}{\delta}\right) \cdot q\right) \cdot (1 - v^*_1)
           \right)  \\
         &\leq \frac12\left((2 - p)^2 \cdot p + \left(-\left(\frac{p - q}{1 - \delta}
                   - (p - 1)\right) \cdot q + \left(\frac{p - q}{1 - \delta}
                  - \frac{q - 1}{\delta}\right) \cdot q\right) \cdot (1 - v^*_1)
           \right)  \\
         &= \frac12\left((2 - p)^2 \cdot p
              + \left(p - 1 - \frac{q - 1}{\delta}\right)
                      \cdot \left(1 - \frac{q \delta - p}{1 - \delta}\right)
                      \cdot q\right).
        %  = \frac12\left(3p - 2p^2 + (1 - v^*_1) \cdot \left(-\frac{(p - q)^2}{(1 - \delta)^2}
        %     + \frac{2p + q - 2}{1 - \delta}\right)\right)  \\
        % &= \frac12\left(3p - 2p^2 - \frac{(p - q)^3}{(1 - \delta)^2}
        %       + \frac{(p - 1)(2p - q)(p - q)}{1 - \delta}
        %       - \frac{(q - 1)(p - 1)q}{1 - \delta}
        %       + \frac{q(q - 1)^2}{\delta(1 - \delta)}\right)  \\
        % &=
      \end{align*}

      In the meanwhile, note that by \eqref{eq:a}, $v^*_2 = (p - q) / (1 - \delta)
      < 1$, hence $1 - v^*_1 = 1 - (p - v^*_2) < 2 - p$. Therefore, we have
      \begin{align*}
        \Rev ~&= \frac12\left((2 - p)^2 \cdot p
             + \left(p - 1 - \frac{q - 1}{\delta}\right)
                     \cdot \left(1 - \frac{q \delta - p}{1 - \delta}\right)
                     \cdot q\right)  \\
          &\leq \frac12\left((2 - p)^2 \cdot p
               + \left(p - 1 - \frac{q - 1}{1}\right)
                       \cdot (2 - p) \cdot q\right)  \\
          &= (p - 1 / 2)(-3p^2 / 4 + 2p - (q - p / 2)^2).
      \end{align*}

      Since $p > 1 > 1 / 2$ and $p / 2 < 1 < q$, the supremum with $q \in (1, 2]$
      is reached when $q = 1$:
      \begin{align*}
        \Rev \leq (p - 1 / 2)(-3p^2 / 4 + 2p - (1 - p / 2)^2)
          = (p^3 - 5p^2 + 7p - 2) / 2.
      \end{align*}
      According to the first order condition, the local maximum and local minimum
      of the right-hand-side is reached when $p = 1$ and $p = 7 / 3$, respectively.
      In other words, the supremum with $p \in (1, 2]$ is reached when $p = 1$:
      \begin{align*}
        \Rev < (1^3 - 5 \cdot 1^2 + 7 \cdot 1 - 2) / 2 = 1 / 2 < 59 / 108.
      \end{align*}
    \end{proof}

\section{Comparison of Accuracy}\label{sec:compacc}

  \begin{figure}[bh]
    \phantom{1} \hfill %
    \subfigure[Mechanism via LP]
      {\includegraphics[width=0.37\textwidth]{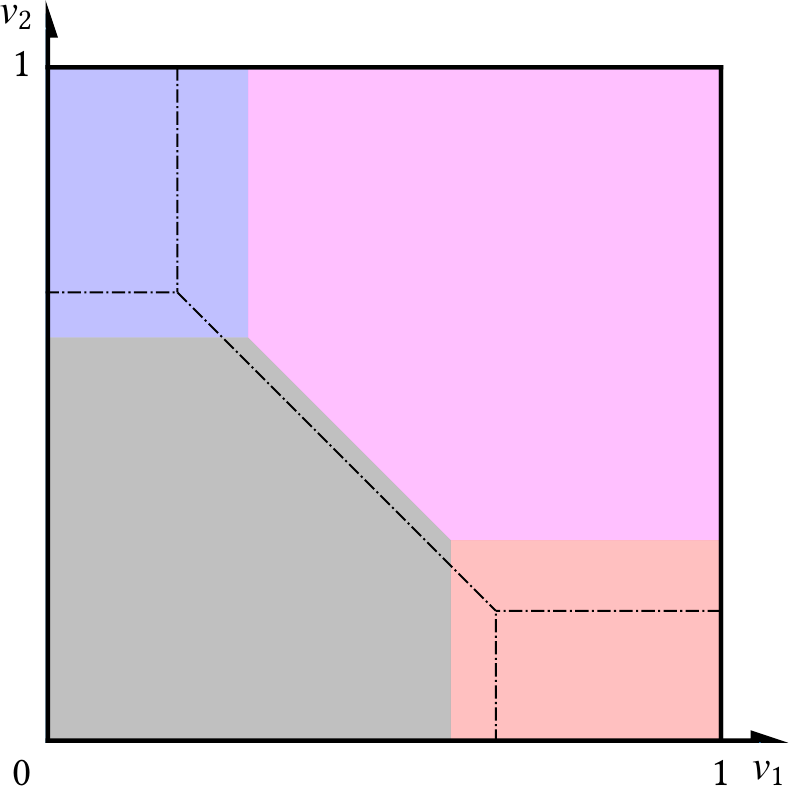}
      \label{sfig:compacc10lp}} %
    \hfill %
    \subfigure[Mechanism via our method]
      {\includegraphics[width=0.37\textwidth]{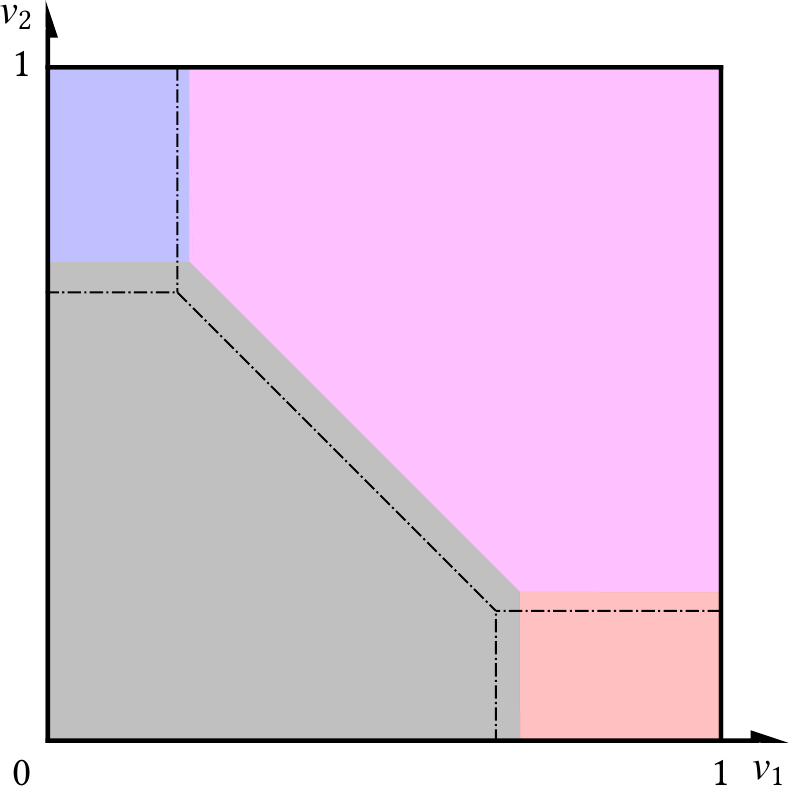}
      \label{sfig:compacc10nn}} %
    \hfill \phantom{1}
    \caption{Uniform $[0, 1]^2$ with discretization $N = 10$.}
    \label{fig:compacc10}
  \end{figure}

  \begin{figure}[bh]
    \phantom{1} \hfill %
    \subfigure[Mechanism via LP]
      {\includegraphics[width=0.37\textwidth]{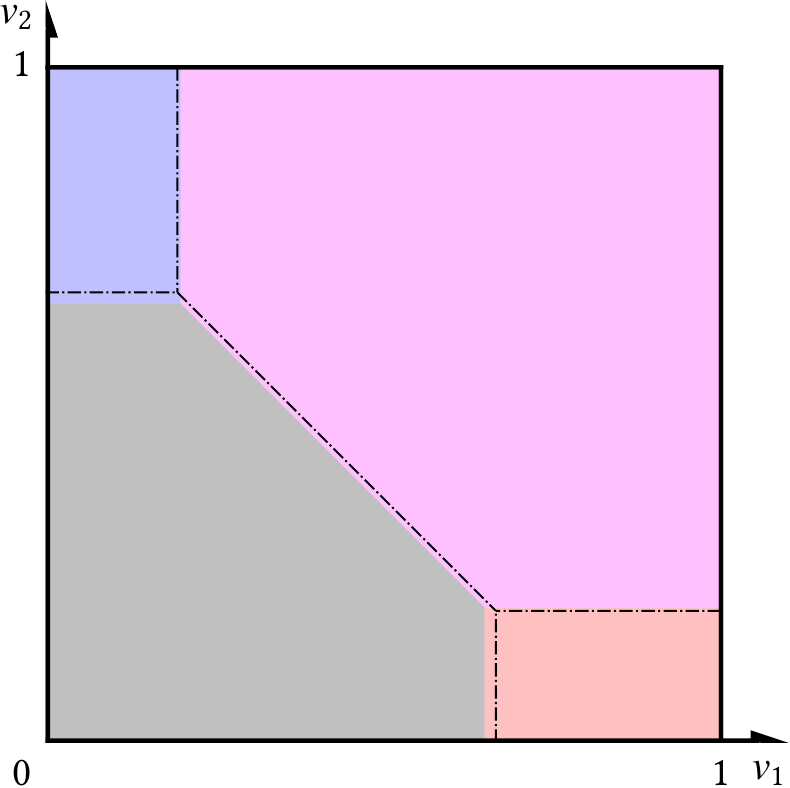}
      \label{sfig:compacc20lp}} %
    \hfill %
    \subfigure[Mechanism via our method]
      {\includegraphics[width=0.37\textwidth]{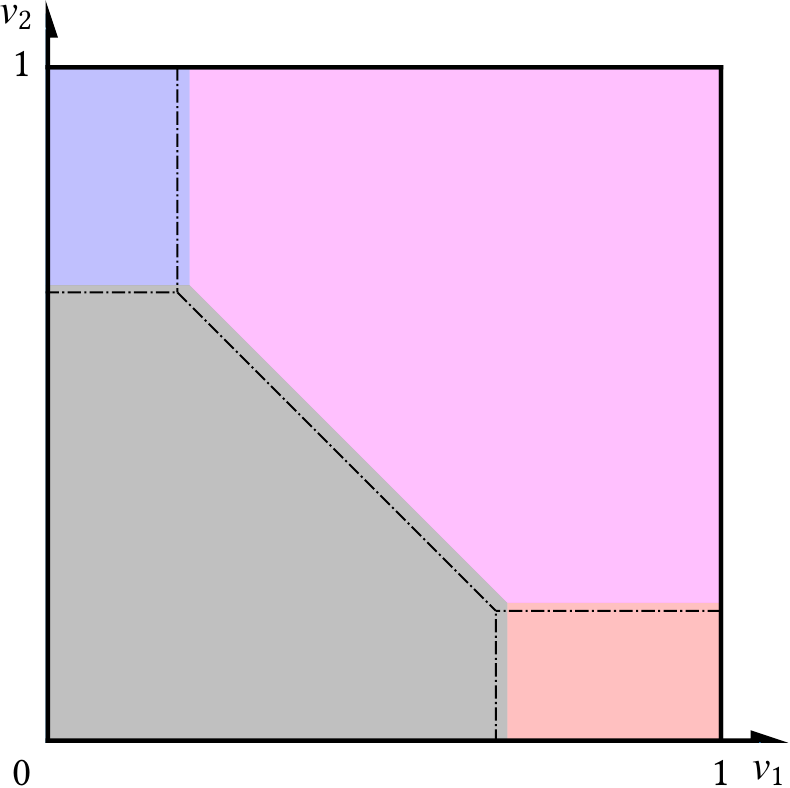}
      \label{sfig:compacc=20nn}} %
    \hfill \phantom{1}
    \caption{Uniform $[0, 1]^2$ with discretization $N = 20$.}
    \label{fig:compacc20}
  \end{figure}

  \begin{figure}[bh]
    \phantom{1} \hfill %
    \subfigure[Mechanism via LP]
      {\includegraphics[width=0.37\textwidth]{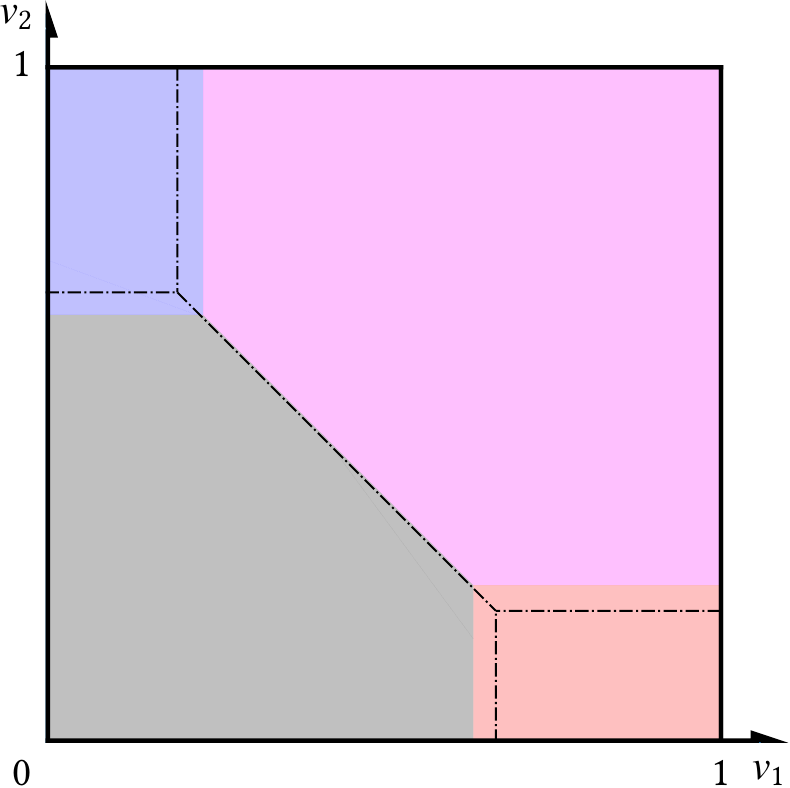}
      \label{sfig:compacc30lp}} %
    \hfill %
    \subfigure[Mechanism via our method]
      {\includegraphics[width=0.37\textwidth]{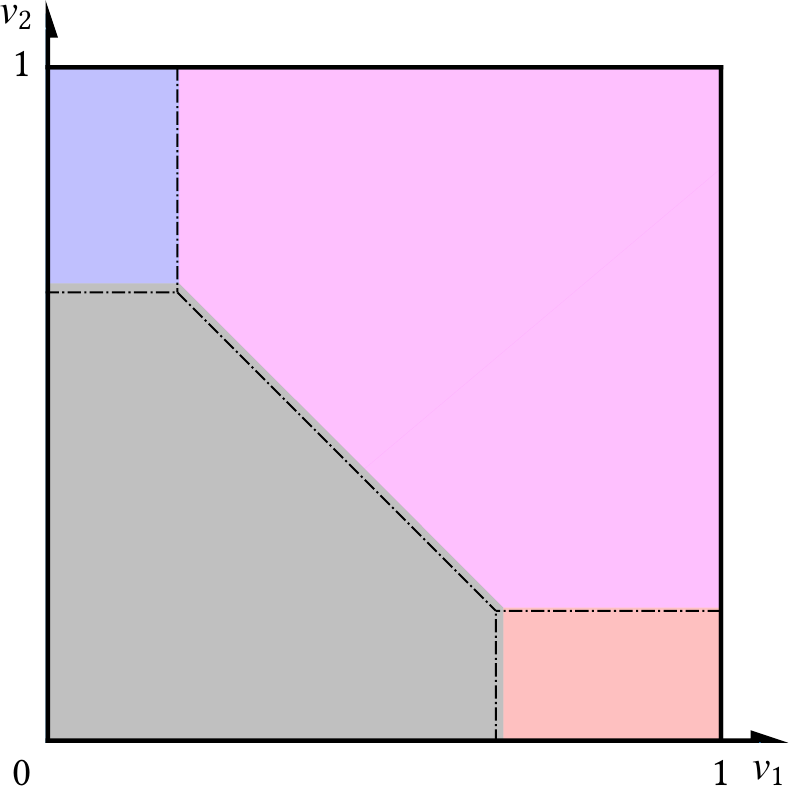}
      \label{sfig:compacc30nn}} %
    \hfill \phantom{1}
    \caption{Uniform $[0, 1]^2$ with discretization $N = 30$.}
    \label{fig:compacc30}
  \end{figure}

% \begin{acks}
%   Acknowledgments
% \end{acks}

% Bibliography

\end{document}